\documentclass{article} 
\usepackage[final]{colm2026_conference}

\usepackage[utf8]{inputenc} 
\usepackage[T1]{fontenc}    
\usepackage{hyperref}       
\usepackage{url}            
\usepackage{booktabs}       
\usepackage{amsfonts}       
\usepackage{nicefrac}       
\usepackage{microtype}      
\usepackage{xcolor}         
\usepackage{lineno}

\definecolor{darkblue}{rgb}{0, 0, 0.5}
\hypersetup{colorlinks=true, citecolor=darkblue, linkcolor=darkblue, urlcolor=darkblue}
\usepackage{graphicx}       
\usepackage{amsmath}
\usepackage{amsthm}
\usepackage{amssymb}
\usepackage[inline]{enumitem}
\usepackage{mathtools}      
\usepackage{bbm}            
\usepackage[breakable]{tcolorbox} 

\theoremstyle{plain}
\newtheorem{theorem}{Theorem}[section]
\newtheorem{lemma}[theorem]{Lemma}

\newtheorem{corollary}[theorem]{Corollary}

\theoremstyle{definition}

\newtheorem{assumption}[theorem]{Assumption}

\theoremstyle{remark}

\newtheorem*{theorem*}{Theorem}
\newtheorem*{lemma*}{Lemma}
\newtheorem*{proposition*}{Proposition}
\newtheorem*{corollary*}{Corollary}
\newtheorem*{definition*}{Definition}
\newtheorem*{remark*}{Remark}

\newcommand{\Ebb}{\mathbb{E}}
\newcommand{\Rbb}{\mathbb{R}}
\newcommand{\Pbb}{\mathbb{P}}
\newcommand{\Ncal}{\mathcal{N}}

\title{Augmented Hypothesis Testing with Persona-Based LLM Simulations}

\author{%
  Ziyad Benomar \\
  Amazon \\
  Luxembourg, Luxembourg \\
  \texttt{zbenomar@amazon.lu} \\
  \And
  Aymen Al Marjani \\
  Amazon \\
  Luxembourg, Luxembourg \\
  \texttt{almarjan@amazon.lu} \\
  \And
  Paul Missault \\
  Amazon \\
  Luxembourg, Luxembourg \\
  \texttt{pmissaul@amazon.lu} \\
  \And
  Saab Mansour \\
  Amazon \\
  Barcelona, Spain \\
  \texttt{saabm@amazon.es}
}

\begin{document}

\ifcolmsubmission
\linenumbers
\fi

\maketitle

\begin{abstract}
A/B testing requires large sample sizes, long timelines, and significant costs. When auxiliary predictions of experimental outcomes are available from machine learning models, uncertain prediction quality precludes replacing human experiments entirely, yet these predictions may still contain useful signal. We propose a principled framework for learning-augmented hypothesis testing that leverages predictions of unknown quality to reduce sample sizes while maintaining statistical validity. Predictions naturally vary in granularity, from coarse aggregate signals to fine-grained individual-level estimates, and our framework addresses both ends of this spectrum: (1) for population-level directional predictions, where only a binary signal on the treatment effect sign is available, we use an asymmetric test and prove consistency and robustness bounds within the learning-augmented algorithms paradigm; (2) for individual-level predictions, we introduce Generalized PPI++ (GPPI), extending Prediction-Powered Inference to handle nonlinear prediction errors through higher-dimensional transformations. Both methods benefit from accurate predictions while remaining robust to inaccurate or adversarial ones. We validate our framework using persona-based LLM simulations, where AI agents equipped with user personas predict individual behavior, as a natural prediction source spanning both granularity levels. Experiments on four real-world datasets demonstrate that our methods, combined with persona-based predictions, substantially reduce experimental costs while preserving rigorous statistical validity.
\end{abstract}


\section{Introduction}


Hypothesis testing through A/B experiments remains the gold standard for behavioral data decision-making, with organizations conducting thousands of tests annually to optimize products and services~\citep{kohavi2009controlled, xu2015infrastructure}. However, traditional A/B tests impose substantial costs: they require exposing different versions to distinct user groups, collecting behavioral data over extended periods, and 
hundreds of thousands or millions of interactions. These requirements translate directly into lost opportunity costs, delayed decisions, and significant infrastructure expenses~\citep{deng2017data}.

Recent advances in LLMs~\citep{brown2020language, openai2023gpt4, touvron2023llama} have enabled a promising alternative: \emph{persona-based simulation}, where AI agents equipped with detailed user personas predict experimental outcomes~\citep{park2023generative, horton2023large, argyle2023out,  mansour2025paars}. A persona encodes a user's historical behavior, preferences, and characteristics in textual form. By providing this persona to an LLM and asking it to predict how the user would respond to new stimuli, researchers can simulate behavior across a population of personas, potentially predicting A/B test outcomes before conducting costly human experiments~\citep{aher2023using, manning2024automated}. This approach has motivated growing research in agentic simulations for behavior modeling, with early results suggesting that LLM-based agents can capture meaningful patterns in human decision-making~\citep{grossmann2023ai, dillion2023can}.

However, completely replacing human A/B tests with persona-based predictions faces fundamental challenges.
The black-box nature of LLMs, sensitivity to prompt engineering~\citep{zhao2021calibrate, lu2022fantastically}, distribution shifts, and the inherent complexity of human behavior~\citep{skinner1965science} make formal quality guarantees difficult. Moreover, persona-based simulations introduce a critical structural consideration: while a collection of personas may approximate the overall population distribution, establishing reliable one-to-one mappings between specific personas and specific users is often infeasible.
This asymmetry means that persona simulations naturally produce different types of predictions depending on the simulation design—from coarse population-level directional signals (e.g., observing the sign of the treatment effect on the persona population) to fine-grained individual-level predictions when persona-user alignment is available.

Rather than treating LLM predictions as perfect substitutes for human experiments, more principled approaches propose leveraging them to \emph{reduce} the required human sample size while maintaining statistical validity \citep{angelopoulos2023prediction,ji2026overview, calderon2025alternative}. These approaches acknowledge prediction uncertainty while extracting value when predictions contain useful signal. The fundamental question becomes: given predictions of unknown quality from persona-based LLM simulations (or other sources), how can we design hypothesis testing procedures that provably benefit from accurate predictions while remaining robust to inaccurate ones?

\paragraph{Learning-augmented algorithms.} Our work contributes to the literature on learning-augmented algorithms~\citep{lykouris2018competitive, mitzenmacher2022algorithms}, which addresses how to leverage machine learning predictions lacking formal guarantees. The key criteria for designing such algorithms are (i) improved performance when predictions are accurate (\emph{consistency}), (ii) guarantees comparable to prediction-free methods when predictions fail (\emph{robustness}), (iii) and graceful degradation as prediction error increases (\emph{smoothness})~\citep{purohit2018improving}. This framework has been initially studied in the context of online algorithms such as caching~\citep{lykouris2018competitive, rohatgi2020near}, scheduling~\citep{lattanzi2020online, mitzenmacher2020scheduling, benomar2024non, benomar2026non}, ski-rental \citep{purohit2018improving,bamas2020primal,benomar2023advice} and online selection~\citep{dutting2021secretaries, antoniadis2020secretary}, it extended to designing data structures \citep{lin2022learning, mccauley2023online, zeynalirobust, benomar2024learning}, and improving machine learning algorithms and statistical methods \citep{bhaskara2021logarithmic,golowich2022can, raman2024online}. The typical objective is establishing a good tradeoff between consistency and robustness \citep{benomar2025tradeoffs}.

\paragraph{Prediction-Powered Inference (PPI).} PPI++~\citep{angelopoulos2023prediction, angelopoulos2023ppi} extends the learning-augmented setting to statistical inference, addressing hypothesis testing in semi-supervised settings with $n$ labeled samples, $N$ unlabeled samples, and a predictor of unknown quality. PPI estimates systematic predictor bias using labeled data, then applies bias correction to predictions on unlabeled data. This provably improves inference quality for sufficiently large $n$, reducing confidence intervals and increasing power, with improvement magnitude depending on prediction-label correlation. When predictions are perfect, PPI achieves the performance of $N$ additional labeled samples; when uninformative, it gracefully degrades to using only $n$ labeled samples. PPI++ can be more generally extended to semi-supervised settings beyond inference, for example in online optimization \citep{shoham2025prediction}, multi-armed bandits \citep{ji2025multi}, ranking \citep{chatzi2024prediction}, and inference with multiple predictors under budget constraints \citep{cowen2026multippi}

\paragraph{Different prediction types.}  
Predicting AB test outcomes can be framed as regression or classification tasks. For instance, \citet{li2015toward} develop a regression model to predict expected reward values and classify experiments as WIN, TIE, or LOSS when comparing treatment to control variants, which gives a \textit{population-level} prediction. Some works also explored \textit{individual-level} behavior prediction from historical data \citep{sales2018using} using classical ML methods. These different types of predictions can be used in the learning-augmented framework to improve AB testing. A more recent approach to obtain either population- or individual-level predictions is through persona-based LLM simulation \citep{hu2024quantifying, chandrasegaran2025synthetic, li2025llm}. When persona-user alignment is unavailable (for e.g. synthetic personas), these simulations provide population-level predictions \citep{mansour2025paars, castelo2026simgym, rieder2026simab}.

When persona-user mapping is possible (e.g., through demographic or behavioral matching), simulations may yield \emph{individual-level} predictions, though with uncertain quality. This distinction is fundamental: the simulation architecture determines what type of prediction is available, which in turn dictates what statistical methods are appropriate.

\subsection{Contributions and Organization}

\paragraph{Theoretical contributions.}
\emph{(1) Population-level directional predictions:} We consider hypothesis testing with binary prediction on the sign of the effect, i.e. the prediction only signals which treatment version is better. In this setting, we analyze hypothesis testing within the learning-augmented algorithms framework, characterizing the fundamental trade-off between leveraging predictions when accurate and maintaining validity when incorrect. For that, we used an asymmetric Z-test and prove consistency and robustness bounds.

\emph{(2) Individual-level predictions:}
When granular predictions of individual engagement with both treatments are available, Prediction-Powered Inference (PPI/PPI++) \citep{angelopoulos2023ppi} can leverage these predictions to enhance hypothesis testing. We propose Generalized PPI (GPPI), extending to handle nonlinear prediction errors through higher dimension transformations. We also prove that, given a set of candidate transformations, a greedy selection approach can achieve the same performance as the best candidate, then we show how the performance scales with the dimension of such transformations.

\paragraph{Persona-based simulation validation.}
A key contribution is applying PPI++ and our theoretical framework to persona-based LLM simulations. We define personas using users' action histories and employ LLMs with these personas to predict individual ratings on new items. This can be translated into preference simulation for AB testing, where the objective is determining which item has statistically higher average ratings. Given a pair of items, we compute rating differences across personas. This validates our framework with realistic AI predictions exhibiting complex error patterns. We conduct experiments on four real-world datasets: \emph{MovieLens-25M}~\citep{harper2015movielens}, \emph{Jester}~\citep{goldberg2001eigentaste}, \emph{Last.fm}~\citep{Bertinmahieux2011}, and \emph{Book-Crossing}~\citep{ziegler2005improving}. We use three Mistral models to make predictions: Mistral-7B-Instruct-v0.2 (7B parameters), Mistral-Small-24B-Instruct-2501 (24B parameters), and Mistral-Large-Instruct-2407 (123B parameters), to which we refer respectively as \emph{mistral-7b}, \emph{mistral-24b}, and \emph{mistral-123b} in the following. These models were chosen because they are open-source, which ensures reproducibility of the experiments, and because they have different parameter sizes, which demonstrates that our methods work across different model scales

This comprehensive evaluation demonstrates that persona-based simulations, when combined with our statistical framework (either on individual or population level), can reduce experimental costs while maintaining rigorous validity guarantees.

\section{Hypothesis testing with directional predictions}
\label{sec:directional}

We investigate how population-level directional predictions can improve hypothesis testing in A/B testing scenarios. In many practical settings, the key decision is simply determining whether a treatment effect is positive or negative, for example whether a new feature increases or decreases user engagement. We consider the setting where an analyst has access to a binary prediction indicating the sign of the treatment effect. This represents the minimal type of prediction assistance, i.e. a single bit of directional information.

\paragraph{Problem setup.} 
Consider i.i.d. observations $X_1, \ldots, X_n \sim \mathcal{N}(\mu,\sigma^2)$ with known $\sigma$, and let $\bar{X} = \frac{1}{n}\sum_{i=1}^n X_i$. We test $H_0: \mu = \mu_0$ versus $H_1: \mu \neq \mu_0$ using the Z-test statistic $Z = (\bar{X} - \mu_0)/(\sigma/\sqrt{n})$. Under $H_0$, we have $Z \sim \mathcal{N}(0, 1)$. Under $H_1$ with $\mu = \mu_0 + \delta$, the statistic follows $Z \sim \mathcal{N}(\theta, 1)$ where $\theta = \delta\sqrt{n}/\sigma$ is the non-centrality parameter. 

The classical two-sided test at significance level $\alpha$ rejects $H_0$ when $|Z| > z_{1-\alpha/2}$, where $z_{1-\alpha/2} = \Phi^{-1}(1-\alpha/2)$. This symmetric critical region allocates equal probability $\alpha/2$ to each tail, reflecting no prior directional information. While we assume exact normality for analytical convenience, the Central Limit Theorem ensures asymptotic validity even with non-normal data. Test quality is measured by power $\text{Power}(\theta)$, the probability of correctly rejecting $H_0$ when $\theta \neq 0$.

\paragraph{Asymmetric tests with predictions.} 
Suppose we have a directional prediction $\hat{s} \in \{-1, +1\}$ for $\text{sign}(\delta)$, potentially from domain expertise, preliminary analysis, or of by aggregating persona-based simulations. We perform an asymmetric test parameterized by $\lambda \in [0, 1)$ controlling the level of asymmetry. Without loss of generality, assume $\hat{s} = +1$. We define asymmetric tail probabilities:
$\alpha_L = \frac{1-\lambda}{2}\alpha$, and 
$\alpha_R = \frac{1+\lambda}{2}\alpha$,
where $\alpha_L + \alpha_R = \alpha$ maintains the desired significance level. The critical values are $c_L(\lambda) = z_{(1-\lambda)\alpha/2}$ and $c_R(\lambda) = z_{1-(1+\lambda)\alpha/2}$, where $z_p = \Phi^{-1}(p)$. We reject $H_0$ if $Z < c_L(\lambda)$ or $Z > c_R(\lambda)$. 
By allocating more Type I error to the predicted tail (increasing $\alpha_R$ when $\hat{s} = +1$), we lower the threshold $c_R(\lambda)$, facilitating rejection when the true effect aligns with the prediction. Conversely, we raise $|c_L(\lambda)|$, making rejection harder when the effect contradicts the prediction. The parameter $\lambda$ controls this bias strength, with $\lambda = 0$ recovering the symmetric classical test and $\lambda \to 1$ concentrating nearly all error probability in the predicted direction, which corresponds to the one-sided Z-test for that direction. This embodies a fundamental consistency-robustness trade-off.

\paragraph{Learning-augmented framework and performance guarantees.} 
Our procedure falls into the learning-augmented algorithms paradigm: leveraging auxiliary predictions to improve performance when accurate while maintaining acceptable performance when predictions fail. We formalize this through approximation ratios comparing our test's power to two natural benchmarks. When predictions are correct ($\text{sign}(\theta) = \hat{s}$), we measure consistency relative to the one-sided test ($\lambda = 1$):
\[
\text{Consistency ratio} = \inf_{\alpha, \theta: \text{sign}(\theta) = \hat{s}}\frac{\text{Power}(\theta, \alpha, \lambda)}{\text{Power}(\theta, \alpha, 1)}\;.
\]
When predictions are incorrect ($\text{sign}(\theta) \neq \hat{s}$), we measure robustness relative to the prediction-agnostic two-sided baseline ($\lambda = 0$):
\[
\text{Robustness ratio} = \inf_{\alpha, \theta: \text{sign}(\theta) \neq \hat{s}}\frac{\text{Power}(\theta, \alpha, \lambda)}{\text{Power}(\theta, \alpha, 0)}\;.
\]
These ratios are worst-case guarantees. A consistency $c$ means achieving at least $c$-fraction of the oracle's power if the prediction is correct, while a robustness $r$ means retaining at least $r$-fraction of baseline power regardless of the quality of the prediction.
Our main theorem establishes that the asymmetric test has consistency and robustness guarantees that scale linearly with $\lambda$.

\begin{theorem}[Consistency-Robustness Trade-off]
\label{thm:main-directional-pred}
Assume without loss of generality that the predicted direction is $\hat{s} = +1$.
Let $\lambda \in [0,1]$. For the asymmetric test with parameter $\lambda$:
\begin{enumerate}[label=(\roman*), leftmargin=*]
\item \textbf{Consistency.} For all $\alpha \in (0,1)$ and $\theta > 0$ (correct prediction): $\frac{\mathrm{Power}(\theta, \alpha, \lambda)}{\mathrm{Power}(\theta, \alpha, 1)} \geq \frac{1+\lambda}{2}$

\item \textbf{Robustness.} For all $\alpha \in (0,1)$ and $\theta < 0$ (incorrect prediction): $\frac{\mathrm{Power}(\theta, \alpha, \lambda)}{\mathrm{Power}(\theta, \alpha, 0)} \geq 1-\lambda$
\end{enumerate}
\end{theorem}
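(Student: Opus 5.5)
The plan is to write the power in closed form as a sum of two tail probabilities, and to reduce both inequalities to one structural fact: the Gaussian ``ROC curve'' is concave and vanishes at the origin.

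Since $Z\sim\mathcal N(\theta,1)$, for any $\lambda$ the power is
\[
\mathrm{Power}(\theta,\alpha,\lambda)=\Phi\bigl(c_L(\lambda)-\theta\bigr)+1-\Phi\bigl(c_R(\lambda)-\theta\bigr),
\]
with $c_L(\lambda)=\Phi^{-1}(\alpha_L)$ and $c_R(\lambda)=\Phi^{-1}(1-\alpha_R)$. For a shift $s\ge 0$ and a tail mass $p\in[0,1]$, define
\[
g_s(p)=1-\Phi\bigl(\Phi^{-1}(1-p)-s\bigr).
\]
This is the probability that $\mathcal N(s,1)$ exceeds the level-$p$ upper critical value. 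The key lemma is that $g_s$ is nondecreasing, concave on $[0,1]$, and satisfies $g_s(0)=0$ and $g_s(p)\ge p$.

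To prove the lemma, write $z=\Phi^{-1}(1-p)$. Differentiating gives
\[
g_s'(p)=\frac{\varphi(z-s)}{\varphi(z)}=e^{sz-s^2/2},
\]
where $\varphi$ is the standard normal density. This derivative is positive. It is nonincreasing in $p$ because $z$ decreases in $p$ and $s\ge0$, which gives concavity. Concavity together with $g_s(0)=0$ then yields the scaling inequality $g_s(tp)\ge t\,g_s(p)$ for $t\in[0,1]$. This scaling inequality is the only real tool needed. The small obstacle is just verifying the concavity cleanly; it is the monotone likelihood ratio of the Gaussian location family.

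\textbf{Consistency} ($\theta>0$). I drop the nonnegative left-tail term, so $\mathrm{Power}(\theta,\alpha,\lambda)\ge g_\theta(\alpha_R)$. When $\lambda=1$ the left tail has mass zero, so $c_L=-\infty$ and $\mathrm{Power}(\theta,\alpha,1)=g_\theta(\alpha)$. Applying the scaling inequality with $t=\tfrac{1+\lambda}{2}$ gives
\[
g_\theta(\alpha_R)=g_\theta\bigl(\tfrac{1+\lambda}{2}\alpha\bigr)\ge\tfrac{1+\lambda}{2}\,g_\theta(\alpha),
\]
which is claim (i).

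\textbf{Robustness} ($\theta<0$). By the symmetry of $\Phi$, the left-tail term equals $\Phi(\Phi^{-1}(\alpha_L)+|\theta|)=g_{|\theta|}(\alpha_L)$. The right-tail term is
\[
r(\alpha_R):=1-\Phi\bigl(\Phi^{-1}(1-\alpha_R)+|\theta|\bigr),
\]
which is nondecreasing in $\alpha_R$. Hence
\[
\mathrm{Power}(\theta,\alpha,\lambda)=g_{|\theta|}(\alpha_L)+r(\alpha_R),\qquad
\mathrm{Power}(\theta,\alpha,0)=g_{|\theta|}(\alpha/2)+r(\alpha/2).
\]
I compare the two powers term by term:
\begin{itemize}
\item Since $\alpha_L=(1-\lambda)\alpha/2$, the scaling inequality gives $g_{|\theta|}(\alpha_L)\ge(1-\lambda)\,g_{|\theta|}(\alpha/2)$.
\item Since $\alpha_R\ge\alpha/2$ and $r$ is nondecreasing, $r(\alpha_R)\ge r(\alpha/2)\ge(1-\lambda)\,r(\alpha/2)$.
\end{itemize}
Summing the two bounds gives claim (ii). When $\lambda=1$ the bound $1-\lambda=0$ holds trivially.
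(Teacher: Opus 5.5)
Your proof is correct, and it takes a genuinely different route from the paper's.

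\emph{The paper's route.} It fixes the critical values and varies the effect size.
For consistency, it shows that $x\mapsto\Phi(x-s)/\Phi(x-t)$ is nondecreasing, using an auxiliary lemma that $\phi/\Phi$ is decreasing. It then evaluates this ratio at $x=0$, where it equals $\frac{1+\lambda}{2}$.
For robustness, it splits $\mathrm{Power}(\theta,\alpha,\lambda)-(1-\lambda)\,\mathrm{Power}(\theta,\alpha,0)$ into a left-tail part $g(\theta)$ and a right-tail part $h(\theta)$. From the signs of $g'$ and $h'$ it argues that $g(\theta)\ge g(0)=0$ and $h(\theta)\ge h(-\infty)=0$.

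\emph{Your route.} You fix the shift and vary the level. Concavity of the ROC curve $p\mapsto g_s(p)$ with $g_s(0)=0$, read off from $g_s'(p)=e^{sz-s^2/2}$, gives the single inequality $g_s(tp)\ge t\,g_s(p)$. That one inequality covers consistency and the left tail of robustness, and the right tail needs only monotonicity of $r$.

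\emph{What each buys.} Your version is shorter, needs no Mills-ratio lemma, and never uses the signs of the critical values, so it holds verbatim for all $\alpha\in(0,1)$ and $\lambda\in[0,1]$. This is a real advantage, because the paper's sign arguments are fragile:
\begin{itemize}
\item It justifies $g'\le 0$ by claiming $b-\theta\ge 0$ for $\theta\le 0$. This fails on $(b,0]$, since $b=z_{(1-\lambda)\alpha/2}<0$. The conclusion survives, but only via a Mills-ratio comparison at $\theta=0$.
\item Its treatment of $h$ assumes $\theta-z_{1-(1+\lambda)\alpha/2}\le 0$, which fails when $(1+\lambda)\alpha>1$. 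In fact $h\ge 0$ is immediate from $z_{1-(1+\lambda)\alpha/2}\le z_{1-\alpha/2}$, which is exactly your second bullet.
\end{itemize}
In exchange, the paper's approach gives information about power as a function of $\theta$, namely monotonicity of the one-tail power ratio in the effect size. Your level-based argument does not directly provide that.
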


Theorem~\ref{thm:main-directional-pred} reveals a linear trade-off structure. Setting $\lambda = 0$ yields perfect robustness but only 50\% consistency, as the symmetric test achieves half the power of the oracle one-sided test in each direction. Conversely, setting $\lambda \to 1$ yields perfect consistency but zero robustness. Intermediate values $\lambda \in (0,1)$ interpolate smoothly these extreme behaviors.

\section{Hypothesis testing with individual-level predictions}\label{sec:idividual}

Hypothesis testing for effect size estimation is directly related to mean estimation and confidence interval construction. In A/B testing, we typically estimate means for both treatments and construct confidence intervals to determine the effect size sign. This is why in this section we focus on mean estimation in the setting where individual-level predictions are available: rather than a single population-level prediction as in the previous section, we assume access to predictions for each individual. When agentic simulations or machine learning models generate individual-level outcome estimates, we can leverage substantially more information to improve statistical inference. This aligns with the Prediction-Powered Inference (PPI) framework \citep{angelopoulos2023prediction, angelopoulos2023ppi}, which utilizes both labeled data (ground truth outcomes) and abundant unlabeled data (predictions only).

\paragraph{Problem setup.} 
We observe labeled data $(X_i, Y_i)$ for $i \in [n]$, unlabeled data $\tilde{X}_{j}$ for $j \in [N]$ (typically $N \gg n$), and have access to predictor $f: \mathcal{X} \to \mathbb{R}$ available for all observations. Our goal is to estimate the population mean $\theta^* = \mathbb{E}[Y]$ and construct a valid confidence interval leveraging the predictions. The classical estimator $\hat{\theta}_{\text{classical}} = \frac{1}{n}\sum_{i=1}^{n} Y_i$ ignores unlabeled data entirely. 
PPI++ \citep{angelopoulos2023ppi} improves upon this by constructing
\begin{equation}\label{eq:ppi-estimator}
\widehat{\theta}_{\text{PPI}} = \frac{\lambda}{N}\sum_{j=1}^{N} f(\tilde{X}_j) + \frac{1}{n}\sum_{i=1}^{n} (Y_i - \lambda f(X_i)) \;,
\end{equation}
where $\lambda \in \mathbb{R}$ minimizes asymptotic variance. This estimator combines scaled predictions on unlabeled data with a correction term learned from labeled data that captures systematic deviation between $Y_i$ and $\lambda f(X_i)$. The structure applies a multiplicative then additive correction to predictions, which is effective when $Y \approx \lambda^* f(X) + b$ for constants $\lambda^*, b$, up to noise. However, when the relationship is more complex, for example $Y = G(f(X))$ for a nonlinear function $G$, the linear correction is insufficient.

\subsection{Generalized PPI++ (GPPI)} 
To overcome the limitation above, we extend PPI++ by transforming predictions through a transformation $\phi: \mathbb{R} \to \mathbb{R}^d$, for instance $\phi(z) = (z, z^2, \ldots, z^d)^T$. Define sample averages
\[
\bar{Y}_n = \frac{1}{n}\sum_{i=1}^n Y_i, \quad 
\bar{\phi}_n^L = \frac{1}{n}\sum_{i=1}^n \phi(f(X_i)), \quad 
\bar{\phi}_N^U = \frac{1}{N}\sum_{j=1}^{N} \phi(f(\tilde{X}_j)).
\]
The GPPI estimator is then
\begin{equation} \label{eq:gppi-estimator}
\hat{\theta}_{\text{GPPI}} = \bar{Y}_n + \lambda^T(\bar{\phi}_N^U - \bar{\phi}_n^L)
\end{equation}
where $\lambda \in \mathbb{R}^d$ is a weight vector. Assuming $n/N \to r$, the optimal weight minimizing asymptotic variance is 
$\lambda^* = \frac{1}{1+r} \Sigma_\phi^{-1} \sigma_{Y,\phi}$, 
where $\Sigma_\phi = \mathrm{Cov}(\phi(f(X)))$ and $\sigma_{Y,\phi} = \mathrm{Cov}(Y, \phi(f(X)))$. We estimate these using sample covariances:
\begin{align}
\hat{\Sigma}_\phi &= \frac{1}{N-1}\sum_{j=1}^{N} (\phi(f(\tilde{X}_j)) - \bar{\phi}_N^U)(\phi(f(\tilde{X}_j)) - \bar{\phi}_N^U)^T \label{eq:sigma-phi-hat} \\
\hat{\sigma}_{Y,\phi} &= \frac{1}{n-1}\sum_{i=1}^n (Y_i - \bar{Y}_n)(\phi(f(X_i)) - \bar{\phi}_n^L) \label{eq:sigma-y-phi-hat}
\end{align}
yielding the plug-in estimator $\hat{\lambda} = \frac{1}{1+n/N} \hat{\Sigma}_\phi^{-1} \hat{\sigma}_{Y,\phi}$.

Our main result proves how to construct an asymptotically valid confidence interval using the predictions and their augmentation via transformation.

\begin{theorem}
\label{thm:gppi-main}
Assume that $\mathbb{E}[Y^2] < \infty$, $\mathbb{E}[\|\phi(f(X))\|^2] < \infty$, $r_n := n/N \to r > 0$, and $\Sigma_\phi = \mathrm{Cov}(\phi(f(X)))$ is positive-definite. Then:
\begin{enumerate}[label=(\roman*), leftmargin=*]
\item Let $V^* = \sigma_Y^2 - \frac{1}{1+r} \sigma_{Y,\phi}^T \Sigma_\phi^{-1} \sigma_{Y,\phi}$, then we have $\sqrt{n}(\hat{\theta}_{\text{GPPI}} - \theta^*) \xrightarrow{d} \mathcal{N}(0, V^*)$
\item Define respectively the residual variance and the variance estimator
$$
\hat{\sigma}_R^2 = \frac{1}{n-1}\sum_{i=1}^n [(Y_i - \bar{Y}_n) - \hat{\lambda}^T(\phi(f(X_i)) - \bar{\phi}_n^L)]^2
\quad \text{ and } \quad
\hat{V} = \hat{\sigma}_R^2 + r_n \hat{\lambda}^T \hat{\Sigma}_\phi \hat{\lambda}\;,
$$
Then $\hat{V} \xrightarrow{p} V^*$.
\item
$\text{CI}_{1-\alpha} = [\hat{\theta}_{\text{GPPI}} \pm z_{\alpha/2} \sqrt{\hat{V}/n}\;]$
satisfies $\lim_{n\to\infty} \mathbb{P}(\theta^* \in \text{CI}_{1-\alpha}) = 1-\alpha$.
\end{enumerate}
\end{theorem}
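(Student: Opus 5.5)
Throughout, write $W_i = \phi(f(X_i))$ and $\tilde W_j = \phi(f(\tilde X_j))$, and set $\mu_\phi = \mathbb{E}[W]$. The labeled and unlabeled samples are independent and share the distribution of $X$. The plan is to reduce everything to the oracle estimator that uses the deterministic weight $\lambda^* = \frac{1}{1+r}\Sigma_\phi^{-1}\sigma_{Y,\phi}$, and then control the error from plugging in $\hat\lambda$ by Slutsky's lemma.

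\textbf{Part (i).} The key decomposition is
\[
\sqrt n(\hat\theta_{\text{GPPI}}-\theta^*) = \sqrt n(\bar Y_n-\theta^*) - \lambda^{*T}\sqrt n(\bar\phi^L_n-\mu_\phi) + \lambda^{*T}\sqrt{r_n}\,\sqrt N(\bar\phi^U_N-\mu_\phi) + (\hat\lambda-\lambda^*)^T\sqrt n(\bar\phi^U_N-\bar\phi^L_n).
\]
\begin{enumerate*}[label=(\alph*)]
\item The first two terms equal $n^{-1/2}\sum_i [(Y_i-\theta^*)-\lambda^{*T}(W_i-\mu_\phi)]$. By the multivariate CLT (finite second moments), this converges to $\mathcal N(0,\sigma_Y^2-2\lambda^{*T}\sigma_{Y,\phi}+\lambda^{*T}\Sigma_\phi\lambda^*)$.
\item The third term is independent of the first two and converges to $\mathcal N(0, r\lambda^{*T}\Sigma_\phi\lambda^*)$.
\item In the last term, $\sqrt n(\bar\phi^U_N-\bar\phi^L_n)=O_p(1)$ by the CLT. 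Also $\hat\lambda\to\lambda^*$ in probability, by the weak LLN for $\hat\Sigma_\phi$ and $\hat\sigma_{Y,\phi}$ (the latter needs only $\mathbb{E}[Y^2],\mathbb{E}\|W\|^2<\infty$ via Cauchy--Schwarz), continuity of matrix inversion at the positive-definite $\Sigma_\phi$, and $n/N\to r$. So the last term is $o_p(1)$.
\end{enumerate*}

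Adding the two independent Gaussian limits gives variance $\sigma_Y^2-2\lambda^{*T}\sigma_{Y,\phi}+(1+r)\lambda^{*T}\Sigma_\phi\lambda^*$. Substituting $\lambda^*$ makes the last two terms equal to $-\frac{2}{1+r}q$ and $\frac{1}{1+r}q$, where $q=\sigma_{Y,\phi}^T\Sigma_\phi^{-1}\sigma_{Y,\phi}$. The total is therefore $V^*$. Independence of the two samples is what justifies combining the limits via characteristic functions.

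\textbf{Part (ii).} Expand $\hat\sigma_R^2 = \hat\sigma_Y^2 - 2\hat\lambda^T\hat\sigma_{Y,\phi} + \hat\lambda^T\hat\Sigma^L_\phi\hat\lambda$, where $\hat\Sigma^L_\phi$ is the labeled-sample covariance. Each empirical moment is consistent by the LLN. Combined with $\hat\lambda\xrightarrow{p}\lambda^*$ and the continuous mapping theorem, this gives $\hat\sigma_R^2\to\sigma_Y^2-2\lambda^{*T}\sigma_{Y,\phi}+\lambda^{*T}\Sigma_\phi\lambda^*$. Likewise $r_n\hat\lambda^T\hat\Sigma_\phi\hat\lambda\to r\lambda^{*T}\Sigma_\phi\lambda^*$. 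Their sum is the same expression that was shown to equal $V^*$ in (i).

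\textbf{Part (iii).} First check that $V^*>0$, or else treat the degenerate case separately. Note that $V^*\ge \sigma_Y^2-q\ge 0$, with $\sigma_Y^2 - q$ the residual variance of $Y$ after linear projection on $W$. If $V^*>0$, then Slutsky's lemma applied to (i) and (ii) gives $\sqrt n(\hat\theta_{\text{GPPI}}-\theta^*)/\sqrt{\hat V}\xrightarrow{d}\mathcal N(0,1)$. Coverage then follows by continuity of $\Phi$, reading $z_{\alpha/2}$ as the magnitude $|z_{\alpha/2}|$ of the critical value.

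\textbf{Main obstacle.} I expect the main point needing care to be the plug-in term in (i). Its handling relies on $\hat\lambda$ depending on both samples while the CLT pieces are treated separately. The $O_p(1)\cdot o_p(1)$ argument avoids needing independence there, so the obstacle is mostly bookkeeping, together with the positivity of $V^*$ required in (iii).
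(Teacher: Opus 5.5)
Your proposal is correct and follows essentially the same route as the paper: reduce to the oracle estimator with weight $\lambda^*$ (a CLT on the labeled residuals plus an independent CLT on the unlabeled features), dispose of the plug-in term $\sqrt n(\hat\lambda-\lambda^*)^T(\bar\phi^U_N-\bar\phi^L_n)$ as $o_p(1)\cdot O_p(1)$, obtain $\hat V\xrightarrow{p}V^*$ from the LLN and continuous mapping, and conclude with Slutsky. Two of your choices improve on the paper. Your exact expansion $\hat\sigma_R^2=\hat\sigma_Y^2-2\hat\lambda^T\hat\sigma_{Y,\phi}+\hat\lambda^T\hat\Sigma^L_\phi\hat\lambda$ in (ii) is tidier than the paper's residual-perturbation sketch, and your remarks on the sign of $z_{\alpha/2}$ and on needing $V^*>0$ address points the paper leaves implicit; in fact $V^*\ge \frac{r}{1+r}\sigma_Y^2$, so this positivity fails only when $Y$ is almost surely constant.
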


The proof of Theorem~\ref{thm:gppi-main} follows the general approach of \citet{angelopoulos2023prediction}, which establishes asymptotic normality for PPI estimators in the context of general M-estimators. Our result extends mean-estimation to $d$-dimensional transformations $\phi(\cdot)$ and vector-valued weights $\lambda \in \mathbb{R}^d$.

\subsection{GPPI with greedy transformation Selection}
\label{sec:basis-selection}
While Theorem~\ref{thm:gppi-main} establishes GPPI's validity with a fixed transformation $\phi$, the choice of this transformation critically affects performance. Although function classes like polynomials can approximate the optimal transformation under appropriate assumptions with sufficient dimension, this creates a fundamental trade-off: high-dimensional bases better approximate the relationship between $f(X)$ and $Y$ but yield noisy coefficient estimates with limited labeled data. The challenge is that decision-makers lack prior knowledge of the underlying relationship, hence cannot decide beforehand which transformation to consider.

We propose a greedy selection algorithm from a candidate set of $m$ transformations, relaxing the need to specify a single transformation a priori.
Let $\phi^1, \ldots, \phi^m$ be candidate transformations where $\phi^j: \mathbb{R} \to \mathbb{R}^{d^j}$ with $d^j \leq d$ for all $j \in [m]$. Each transformation induces an asymptotic variance $V_j$ in the estimator \eqref{eq:gppi-estimator}. Since width of the confidence interval built by GPPI scales with $\sqrt{V_j}$, we seek $j^* = \arg\min_{j \in [m]} V_j$. Our greedy algorithm is
\begin{enumerate}[label=(\roman*), leftmargin=*]
\item For each $\phi^j$, compute the plug-in variance estimator using Theorem~\ref{thm:gppi-main}(ii):
\begin{equation}\label{eq:variance-estimator-j}
\hat{V}_j = \hat{\sigma}_{R,j}^2 + r_n \hat{\lambda}_j^T \hat{\Sigma}_{\phi^j} \hat{\lambda}_j
\end{equation}
where $\hat{\lambda}_j = \frac{1}{1+r_n} \hat{\Sigma}_{\phi^j}^{-1} \hat{\sigma}_{Y,\phi^j}$ and $\hat{\sigma}_{R,j}^2$ is the residual variance for transformation $j$.
\item Select $\hat{j} = \arg\min_{j \in [m]} \hat{V}_j$.
\item Construct the GPPI estimator using $\phi^{\hat{j}}$.
\end{enumerate}

The following theorem shows this procedure asymptotically achieves the optimal transformation variance.

\begin{theorem}[Greedy transformation Selection]
\label{thm:basis-selection}
Let $\phi^1, \ldots, \phi^m$ be candidate transformations with dimensions $d^j \leq d$, where $d$ is a fixed constant. Assume for all $j \in [m]$ that $\|\phi^j(f(X))\|^2 \leq M$ and $Y^2 \leq K_Y$ almost surely, $\Sigma_{\phi^j} = \mathrm{Cov}(\phi^j(f(X)))$ is positive-definite, and there exists a unique minimizer $j^* = \arg\min_{j \in [m]} V_j$. Let $\hat{j}$ be the selected transformation index. Then:
\begin{enumerate}[label=(\roman*), leftmargin=*]
\item \textbf{Consistent selection:} $\hat{j} \xrightarrow{p} j^*$ as $n \to \infty$.
\item \textbf{Asymptotic normality:} The greedy GPPI estimator satisfies 
$\sqrt{n}(\hat{\theta}_{\text{GPPI}}^{\hat{j}} - \theta^*) \xrightarrow{d} \mathcal{N}(0, V_{j^*})\;.$
\item \textbf{Variance bound:} For all $\delta \in (0,1)$, we have with probability $1-\delta$ that
\[
\hat{V}_{\hat{j}} \leq \min_{j \in [m]} V_j + C K_Y M^2 \sqrt{\frac{\log(md/\delta)}{n}}\;.
\]
\end{enumerate}
\end{theorem}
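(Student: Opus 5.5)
The plan is to reduce all three parts to a uniform (over $j\in[m]$) consistency statement for the plug-in variances. First, apply Theorem~\ref{thm:gppi-main}(ii) to each fixed $\phi^j$. Its hypotheses hold because $\|\phi^j(f(X))\|^2\le M$ and $Y^2\le K_Y$ bound the second moments, and $\Sigma_{\phi^j}$ is positive-definite. This gives $\hat V_j \xrightarrow{p} V_j$ for each $j$. Since $m$ is finite, we get $\max_j|\hat V_j - V_j|\xrightarrow{p}0$.

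\textbf{Part (i).} Let $\Delta=\min_{j\ne j^*}V_j - V_{j^*}>0$, which is positive by uniqueness of the minimizer. On the event $\max_j|\hat V_j-V_j|<\Delta/2$, every $j\neq j^*$ has $\hat V_j > V_j-\Delta/2 \ge V_{j^*}+\Delta/2 > \hat V_{j^*}$. Hence $\hat j=j^*$ on this event, and its probability tends to one.

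\textbf{Part (ii).} Decompose
\[
\sqrt n(\hat\theta^{\hat j}_{\text{GPPI}}-\theta^*)=\sqrt n(\hat\theta^{j^*}_{\text{GPPI}}-\theta^*)+\sqrt n(\hat\theta^{\hat j}_{\text{GPPI}}-\hat\theta^{j^*}_{\text{GPPI}}).
\]
The second term vanishes on $\{\hat j=j^*\}$, so it is $o_p(1)$ by part (i). The first term converges to $\mathcal N(0,V_{j^*})$ by Theorem~\ref{thm:gppi-main}(i), and Slutsky's lemma concludes.

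\textbf{Part (iii).} This is the quantitative part and the main obstacle. I would show that with probability $1-\delta$, simultaneously for all $j$,
\[
|\hat V_j-V_j|\le C K_Y M^2\sqrt{\log(md/\delta)/n}.
\]
Then $\hat V_{\hat j}=\min_j\hat V_j\le \hat V_{j^*}\le V_{j^*}+\text{err}=\min_jV_j+\text{err}$, which is the claim.

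To obtain the concentration bound for a fixed $j$, I would proceed as follows.

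1. Apply Hoeffding or matrix Bernstein inequalities to the bounded entries of $\hat\Sigma_{\phi^j}$, $\hat\sigma_{Y,\phi^j}$, the empirical means, and $\hat\sigma_Y^2$. Union bounding over the $O(d^2)$ entries and the $m$ candidates produces the $\log(md/\delta)$ factor. This gives deviations of order $M\sqrt{\log(md/\delta)/n}$ for the covariance terms and $\sqrt{K_YM}\sqrt{\log/n}$ for the cross terms. The unlabeled quantities concentrate at the faster rate $\sqrt{\log/N}$, and since $N\asymp n/r$ this is absorbed into the same rate.

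2. Rewrite $V_j$ as the smooth functional $\sigma_Y^2-\frac{1}{1+r}\sigma^T\Sigma^{-1}\sigma$, and $\hat V_j$ similarly. Expanding the residual variance gives $\hat V_j = \hat\sigma_Y^2 - 2\hat\lambda^T\hat\sigma + \hat\lambda^T\hat\Sigma^L\hat\lambda + r_n\hat\lambda^T\hat\Sigma\hat\lambda$, up to the labeled-versus-unlabeled covariance mismatch, which is itself controlled in step 1.

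3. Perturb the inverse via $\|\hat\Sigma^{-1}-\Sigma^{-1}\|\le\|\Sigma^{-1}\|\,\|\hat\Sigma-\Sigma\|\,\|\hat\Sigma^{-1}\|$. Together with $\|\sigma_{Y,\phi}\|\le\sqrt{K_YM}$ and $|r_n-r|$ terms (absorbed into $C$), this propagates the entrywise bounds from step 1 into a bound on $|\hat V_j - V_j|$.

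The delicate point is controlling $\|\Sigma_{\phi^j}^{-1}\|$: positive-definiteness gives only a fixed constant, not an explicit one. I would absorb $\max_j\lambda_{\min}(\Sigma_{\phi^j})^{-1}$, together with $d$ and $r$, into $C$. To ensure $\|\hat\Sigma^{-1}\|\le 2\|\Sigma^{-1}\|$, I would restrict to $n$ large enough that $\|\hat\Sigma-\Sigma\|\le\lambda_{\min}/2$. For smaller $n$ the bound holds trivially once $C$ is enlarged, since both $\hat V_j$ and $V_j$ are bounded by a constant times $K_Y M^2$. Collecting the powers of $M$ arising from the $\hat\lambda^T\hat\Sigma\hat\lambda$ terms yields the stated $K_YM^2$ scaling.
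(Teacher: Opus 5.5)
Your parts (ii) and (iii) follow the paper. Part (ii) uses the same decomposition on $\{\hat j=j^*\}$ plus Slutsky. Part (iii) uses the same chain $\hat V_{\hat j}\le \hat V_{j^*}\le V_{j^*}+\max_j|\hat V_j-V_j|$. That chain is fed by a uniform concentration bound built from Bernstein-type inequalities, a union bound over $j$, Weyl's inequality giving $\|\hat\Sigma_{\phi^j}^{-1}\|_{\text{op}}\le 2/\kappa$ with $\kappa=\min_j\lambda_{\min}(\Sigma_{\phi^j})$, the resolvent identity, and $\|\sigma_{Y,\phi^j}\|\le\sqrt{K_YM}$.

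There are two real differences from the paper. For (i), you get $\max_j|\hat V_j-V_j|\xrightarrow{p}0$ by applying Theorem~\ref{thm:gppi-main}(ii) to each of the finitely many candidates. The paper instead routes (i) through its quantitative Lemma~\ref{lem:uniform-residual-var}. Your argument needs only second moments, while the paper's yields a rate for $\Pbb\{\hat j\neq j^*\}$. In (iii), you correctly notice that the algorithm's $\hat V_j=\hat\sigma_{R,j}^2+r_n\hat\lambda_j^T\hat\Sigma_{\phi^j}\hat\lambda_j$ is not the closed form the paper analyzes. Expanding the residual variance and substituting $\hat\lambda_j$ gives exactly
\[
\hat V_j=\hat\sigma_Y^2-\tfrac{1}{1+r_n}\hat\sigma_{Y,\phi^j}^T\hat\Sigma_{\phi^j}^{-1}\hat\sigma_{Y,\phi^j}+\hat\lambda_j^T(\hat\Sigma^L_{\phi^j}-\hat\Sigma_{\phi^j})\hat\lambda_j,
\]
where $\hat\Sigma^L_{\phi^j}$ is the labeled-sample covariance. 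The paper silently drops the last term. You keep it, and it is $O(K_YM^2\kappa^{-2}\sqrt{\log(md/\delta)/n})$, so the stated rate survives.

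Two points in your proposal need fixing.

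\textbf{Small $n$.} Your treatment of small $n$ is wrong. $\hat V_j$ is not almost surely bounded by a multiple of $K_YM^2$, because $r_n\hat\lambda_j^T\hat\Sigma_{\phi^j}\hat\lambda_j=\frac{r_n}{(1+r_n)^2}\hat\sigma_{Y,\phi^j}^T\hat\Sigma_{\phi^j}^{-1}\hat\sigma_{Y,\phi^j}$. This pairs the labeled $\hat\sigma_{Y,\phi^j}$ with an independent unlabeled $\hat\Sigma_{\phi^j}$. At any fixed $N$, that matrix can be arbitrarily ill-conditioned, or singular, leaving $\hat V_j$ undefined. So (iii) can only be claimed for $n\ge n_0(\delta)$, the threshold beyond which $\max_j\|\hat\Sigma_{\phi^j}-\Sigma_{\phi^j}\|_{\text{op}}\le\kappa/2$ with the required probability. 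That is also all the paper proves: its Lemma~\ref{lem:uniform-residual-var} is stated for $n$ sufficiently large.

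\textbf{The $|r_n-r|$ terms.} Absorbing these into $C$ is valid only if $|r_n-r|=O(n^{-1/2})$, e.g.\ $N=n/r$ exactly. $\hat V_j$ concentrates around $\sigma_Y^2-\frac{1}{1+r_n}\sigma_{Y,\phi^j}^T\Sigma_{\phi^j}^{-1}\sigma_{Y,\phi^j}$. This differs from $V_j$ by a deterministic amount proportional to $|r_n-r|$, which mere convergence $r_n\to r$ does not make $O(\sqrt{\log(md/\delta)/n})$. The paper avoids this only by writing $r$ in place of $r_n$ inside $\hat\lambda_j$ and $\hat V_j$, so you should state the assumption explicitly. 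Neither issue affects your parts (i) or (ii).
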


Note that Theorem~\ref{thm:basis-selection} requires $Y$ and $\phi(f(X))$ to be bounded almost surely, rather than finite second moments as in Theorem~\ref{thm:gppi-main}. This assumption enables high-probability bounds.

A natural question concerns coverage: does it deteriorate (in finite sample regime) as the number of candidate transformations increases? We prove in \ref{cor:greedy_gppi_coverage} that the nominal coverage is also asymptotically preserved by greedy GPPI. 

\paragraph{Other aggregation strategies}

Theorem~\ref{thm:basis-selection} (iii) demonstrates that the confidence interval width, proportional to $\sqrt{\hat{V}_{\hat{j}}}$, approaches the optimal asymptotic width among candidate transformations. Alternatively, we could stack all transformations into an $md$-dimensional transformation and estimate the optimal linear combination from labeled data. While this may improve asymptotic performance, it creates a crucial trade-off: since $M$ bounds $\|\phi^j(f(X))\|_2$, it scales with dimension $d$ sometimes polynomially or even exponentially. Therefore, the greedy selection achieves a performance near the best transformation with minimal convergence cost, whereas stacking transformations may yield even better asymptotic variance but suffers from parameter estimation variance with limited labeled data.

In practice, PPI++ typically works with black-box predictors $f$ (e.g., LLMs, pre-trained models) that provide reasonable signals but require minor corrections for biases or behavioral changes due to distribution shifts,  systematic over/underestimation, threshold adjustments, or capping effects. In these settings, corrections often require only simple transformations (linear, polynomial, threshold functions,...). Making a broad initial guess of possible transformations and applying greedy selection provides stable improvements without the variance penalty of high-dimensional parameter estimation from finite labeled data. If the model needs significant and complex modification, then further training might be a better approach.

\subsection{Dimension scaling}
Following the discussion above, we dive deeper into the impact of high dimensional transformations 
In practice, GPPI provides the greatest benefits when predictions exhibit systematic nonlinear errors. However, the transformation dimension must remain moderate to avoid large variance from estimating $\lambda$. The following result characterizes the permissible growth rate of dimension $d$ with sample size $n$.

\begin{theorem}
\label{thm:dimension-scaling}
Let $\phi_d: \mathbb{R} \to \mathbb{R}^d$ denote a $d$-dimensional transformation where $d=d_n$ may grow with $n$. Assume:
\begin{enumerate*}[label=(\roman*), leftmargin=*]
\item $\mathbb{E}[Y^2] < \infty$ and $\lim_{n\to\infty} n/N = r > 0$
\item $\|\phi_d(f(X))\|^2 \leq M_d$ almost surely
\item The eigenvalues of $\Sigma_d = \mathrm{Cov}(\phi_d(f(X)))$ satisfy $\lambda_{\min}(\Sigma_d) \geq c_d > 0$
\item The asymptotic variance $V^*_d$ converges to a limit $V_\infty$
\end{enumerate*}
If $M_d^2 \sqrt{\log d} / c_d^2 = o(\sqrt{n})$, then
\[
\sqrt{n}(\hat{\theta}_{\text{GPPI}} - \theta^*) \xrightarrow{d} \mathcal{N}(0, V_\infty), \quad 
\hat{V} \xrightarrow{p} V_\infty,
\]
and the confidence interval $\text{CI}_{1-\alpha}$ achieves asymptotic coverage probability $1-\alpha$.
\end{theorem}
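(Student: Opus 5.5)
We will prove Theorem~\ref{thm:dimension-scaling} by comparing the GPPI estimator to an oracle estimator with the population weight $\lambda_d^* = \frac{1}{1+r}\Sigma_d^{-1}\sigma_{Y,\phi_d}$, mirroring the fixed-dimension argument of Theorem~\ref{thm:gppi-main} while tracking every constant's dependence on $d$. Write $\Delta_n := \bar{\phi}_N^U - \bar{\phi}_n^L$. Then
\[
\hat{\theta}_{\text{GPPI}} - \theta^* = \bigl(\bar{Y}_n - \theta^* + \lambda_d^{*T}\Delta_n\bigr) + (\hat{\lambda} - \lambda_d^*)^T\Delta_n .
\]
The first bracket is an average of independent mean-zero terms (a triangular array, since $\phi_d$ changes with $n$). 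Its variance times $n$ equals $V_d^*$ up to the $r_n$ versus $r$ discrepancy. We will apply the Lindeberg--Feller CLT. Lindeberg holds because $Y$ has a finite second moment and $|\lambda_d^{*T}\phi_d(f(X))| \le \|\lambda_d^*\|\sqrt{M_d}$. Here $\|\lambda_d^*\| \le c_d^{-1}\|\sigma_{Y,\phi_d}\| \le c_d^{-1}\sqrt{\mathrm{Var}(Y) M_d}$, so this term is $O(M_d/c_d)$, which is $o(\sqrt{n})$ under the growth condition. Combined with assumption (iv), this gives $\mathcal{N}(0,V_\infty)$ for the oracle part.

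\textbf{The remainder term.} It suffices to show $\sqrt{n}\,(\hat{\lambda}-\lambda_d^*)^T\Delta_n = o_p(1)$. We bound it by $\|\hat{\lambda}-\lambda_d^*\|\cdot\sqrt{n}\|\Delta_n\|$.

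For $\|\Delta_n\|$, vector Bernstein (or simply $\mathbb{E}\|\cdot\|^2 \le M_d(1/n+1/N)$) gives $\sqrt{n}\|\Delta_n\| = O_p(\sqrt{M_d})$.

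For $\hat{\lambda}$, we use the standard perturbation identity
\[
\hat{\Sigma}^{-1}\hat{\sigma} - \Sigma^{-1}\sigma = \hat{\Sigma}^{-1}(\hat{\sigma}-\sigma) + \hat{\Sigma}^{-1}(\Sigma-\hat{\Sigma})\Sigma^{-1}\sigma .
\]
Matrix Bernstein gives $\|\hat{\Sigma}_d-\Sigma_d\|_{op} = O_p(M_d\sqrt{\log d/N})$. Once this is below $c_d/2$, which the rate condition guarantees, we have $\|\hat{\Sigma}_d^{-1}\|_{op} \le 2/c_d$. For the cross-covariance, $\|\hat{\sigma}-\sigma\| = O_p(\sqrt{M_d\,\mathbb{E}Y^2/n})$. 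We also account for the change from $1/(1+r_n)$ to $1/(1+r)$, which is negligible. Altogether,
\[
\|\hat{\lambda}-\lambda_d^*\| \lesssim \frac{\sqrt{M_d}}{c_d\sqrt{n}} + \frac{M_d\sqrt{\log d}}{c_d^2\sqrt{n}}\sqrt{M_d} .
\]
Multiplying by $\sqrt{M_d}$ gives a remainder of order $M_d^2\sqrt{\log d}/(c_d^2\sqrt{n})$, which is $o(1)$ exactly under the hypothesis. This matching of rates is why the condition takes the stated form. Slutsky's lemma then yields the CLT.

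\textbf{Variance consistency.} We expand $\hat{\sigma}_R^2$ and $r_n\hat{\lambda}^T\hat{\Sigma}_d\hat{\lambda}$ around their population counterparts $\sigma_Y^2 - 2\lambda^{*T}\sigma + \lambda^{*T}\Sigma\lambda^*$ and $r\lambda^{*T}\Sigma\lambda^*$. We reuse the same bounds: every error is a product of some of $\|\hat{\lambda}-\lambda^*\|$, $\|\hat{\Sigma}^L-\Sigma\|_{op}$, $\|\hat{\sigma}-\sigma\|$ with factors of $\|\lambda^*\|$ and $M_d$. Each such product is dominated by the rate $M_d^2\sqrt{\log d}/(c_d^2\sqrt n)$, hence vanishes. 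Here $\hat{\Sigma}^L$ denotes the labeled-sample covariance, which appears inside $\hat{\sigma}_R^2$ and is controlled by matrix Bernstein with $n$ samples.

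This gives $\hat{V}-V_d^* \to 0$ in probability, and with (iv), $\hat{V}\xrightarrow{p}V_\infty$. The coverage claim follows from Slutsky's lemma as in Theorem~\ref{thm:gppi-main}(iii), provided $V_\infty>0$; otherwise the coverage statement must be read degenerately.

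\textbf{Main obstacle.} The hardest step is the inverse-covariance perturbation with growing $d$: ensuring $\hat{\Sigma}_d$ is invertible with high probability and keeping careful track of the powers of $M_d$ and $c_d$ so that the final remainder matches the stated condition. If the crude bound on $\|\lambda^*\|$ loses a factor, we would instead bound the term $\lambda^{*T}(\cdot)$ directly via $\Sigma^{-1/2}$ normalization.
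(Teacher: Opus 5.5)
Your proposal is correct in outline and has the same architecture as the paper's proof. The paper likewise splits into an oracle term at $\lambda_d^*$ plus the remainder $\sqrt{n}\,(\hat{\lambda}-\lambda_d^*)^T(\bar{\phi}_N^U-\bar{\phi}_n^L)$. It controls the remainder with matrix Bernstein for $\hat{\Sigma}_d$, a second-moment bound for $\hat{\sigma}_{Y,d}$, and the resolvent identity with $\|\hat{\Sigma}_d^{-1}\|_{\text{op}}\le 2/c_d$, reaching the same rate $M_d^2\sqrt{\log d}/(c_d^2\sqrt{n})$. Where you differ, you are mostly more careful, with a few bookkeeping points to fix.

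\textbf{Oracle term.} The paper only appeals to the argument of Theorem~\ref{thm:gppi-main}. That argument does not literally apply, because $\phi_d$ and $\lambda_d^*$ change with $n$. Your Lindeberg--Feller check, via $|\lambda_d^{*T}\phi_d(f(X))|=O(M_d/c_d)=o(\sqrt{n})$, supplies the missing triangular-array step.

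\textbf{Variance consistency: what each route buys.} The paper's Lemma~\ref{lem:variance-consistency} treats the closed form $\hat{\sigma}_Y^2-\frac{1}{1+r}\hat{\sigma}_{Y,d}^T\hat{\Sigma}_d^{-1}\hat{\sigma}_{Y,d}$. This is a clean quadratic-form perturbation with no $\hat{\lambda}$ in it. However, it equals the statement's $\hat{V}=\hat{\sigma}_R^2+r_n\hat{\lambda}^T\hat{\Sigma}_d\hat{\lambda}$ only after two replacements: the labeled covariance $\hat{\Sigma}^L$ by the unlabeled one, and $r_n$ by $r$. Your direct expansion is faithful to the statement.

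\textbf{Variance consistency: the first-order terms.} Your blanket claim that every product is dominated by the rate fails for the terms $(\hat{\lambda}-\lambda_d^*)^T\Sigma_d\lambda_d^*$ if you bound them by $\|\hat{\lambda}-\lambda_d^*\|\,M_d\,\|\lambda_d^*\|$. With your bound on $\|\lambda_d^*\|$, this overshoots by a factor $M_d/c_d$, so the $\Sigma_d^{1/2}$-normalization you list as a fallback is actually needed here. Either use $\Sigma_d\lambda_d^*=\sigma_{Y,d}/(1+r)$, or note that these terms cancel against $-2(\hat{\lambda}-\lambda_d^*)^T\sigma_{Y,d}$. They cancel because $\lambda_d^*$ is the stationary point of $V(\lambda)=\sigma_Y^2-2\lambda^T\sigma_{Y,d}+(1+r)\lambda^T\Sigma_d\lambda$. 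What remains is $V(\hat{\lambda})-V(\lambda_d^*)=(1+r)(\hat{\lambda}-\lambda_d^*)^T\Sigma_d(\hat{\lambda}-\lambda_d^*)\le(1+r)M_d\|\hat{\lambda}-\lambda_d^*\|^2$, which is of the order of the squared rate.

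\textbf{The $r_n$ versus $r$ mismatch.} The $1/(1+r_n)$ versus $1/(1+r)$ mismatch in the remainder is negligible, but not for the reason your sketch suggests. Folding it into $\|\hat{\lambda}-\lambda_d^*\|$ and multiplying by $\sqrt{M_d}$ does not work, since $r_n\to r$ at no specified rate. Instead, either compute its variance directly, which is $O((r_n-r)^2\sigma_Y^2)$, or center the oracle at $\frac{1}{1+r_n}\Sigma_d^{-1}\sigma_{Y,d}$.

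\textbf{The proviso $V_\infty>0$.} This is automatic unless $Y$ is degenerate, because $V_d^*\ge\frac{r}{1+r}\sigma_Y^2$.
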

\section{Experiments}
We validate our findings through experiments on population-level directional predictions (Section~\ref{sec:directional}) and individual-level predictions (Section~\ref{sec:idividual}) using persona-based LLM simulations on four datasets: (i) \emph{MovieLens-25M}~\citep{harper2015movielens}: 25M ratings from 162K users on 59K movies, (ii) \emph{Jester}~\citep{goldberg2001eigentaste}: 1.4M ratings from 26K users on 140 jokes, (iii) \emph{Last.fm}~\citep{Bertinmahieux2011}: 423K listening events from 8.6K users on 20K tracks, and (iv) \emph{Book-Crossing}~\citep{ziegler2005improving}: 296K ratings from 7.3K users on 146K books.
For each dataset, we select 15 test items based on recency and popularity  (detailed in Appendix~\ref{app:experimental-details})), with remaining items forming the training set for persona construction. Given a user's historical interactions, we construct textual personas summarizing their preferences and provide these to LLMs for test item predictions. We use three models to make predictions: \emph{mistral-7b}, \emph{mistral-24b}, and \emph{mistral-123b}. Comprehensive experimental details and additional results are provided in Appendix~\ref{app:experimental-details}.

We compare GPPI with greedy transform selection against PPI++. For numerical stability, we use ridge regularization in estimating the weights: $\hat{\lambda} = \frac{1}{1+n/N} (\hat{\Sigma}_\phi + \gamma I)^{-1} \hat{\sigma}_{Y,\phi}$ with $\gamma = 10^{-3}$. 
Furthermore, to avoid biasing the greedy selection algorithm towards selecting high dimensional transformations, we implement two penalized selection variants. Denoting by $d_j$ the dimension of the transformation $\phi^j$:
\begin{itemize}
    \item \textbf{Greedy AIC}: selects the family minimizing $\text{Var}(\hat{\theta}^\text{GPPI}(\phi^j)) + 2d_j/n$. This penalization proportional to $d$ is inspired by the Akaike-type correction~\citep{akaike1974new} for model selection.
    \item \textbf{Greedy BIC}: selects the family minimizing $\text{Var}(\hat{\theta}^\text{GPPI}(\phi^j)) + d_j \log(n)/n$, which is inspired by the  Bayesian information criterion penalty~\citep{schwarz1978estimating}.
\end{itemize}

To correct nonlinear prediction biases without prior knowledge of the error structure, we use $m=10$ complementary transformations $\phi^j: \mathbb{R} \to \mathbb{R}^{d_j}$ with dimensions $d \leq 6$, selected to span diverse function classes while maintaining small dimension:
\begin{itemize}[nosep, leftmargin=*]
    \item \textbf{Polynomials}: standard and Bernstein polynomials ($d \in \{3,5\}$)
    \item \textbf{Logarithmic}: $p \mapsto (\log(|p|+1), \ldots, \log(|p|+1)^3)$, handle magnitude-dependent errors
    \item \textbf{Piecewise}: cubic B-splines and tent functions ($d=4$), provide corrections for local biases
    \item \textbf{Sigmoidal}: logistic and softplus functions ($d \in \{3,5\}$), capture saturation effects and bounded prediction ranges
\end{itemize}

To all families, we add the identity transformation, ensuring PPI++ remains recoverable. 

As explained in Section \ref{sec:idividual}, GPPI is designed to guarantee better mean estimation than PPI++, and this naturally extends to better guarantees also for hypothesis testing and A/B tests. We experimentally test it on both tasks.

\subsection{Mean estimation with individual-level predictions}

We measure improvement using effective sample size (ESS) gain~\citep{krsteski2025valid}:
$\text{ESS}_{\text{gain}\%} = \left(\frac{\text{Var}(\hat{\theta}_{\text{classical}})}{\text{Var}(\hat{\theta}_{\text{method}})} - 1\right) \times 100$,
which measures how much the sample size can be reduced with the method we're testing (PPI, GPPI, greedy selection, ...) compared to the baseline estimator that uses only labeled data, while maintaining the same level of precision in the estimation. The higher this metric the better is the improvement.

We conduct a first experiment with $N = 5n$ over 5000 Monte Carlo repetitions, sampling $n$ labeled users and $N$ unlabeled users to compute estimators and 95\% confidence intervals. The figure below show ESS gains for Jester datasets, with columns representing items and rows representing different models. The subplot corresponding to each item and model shows ESS gain across different sample sizes $n$ and different estimators.

\begin{figure}[h]
    \centering
    \includegraphics[width=0.9\linewidth]{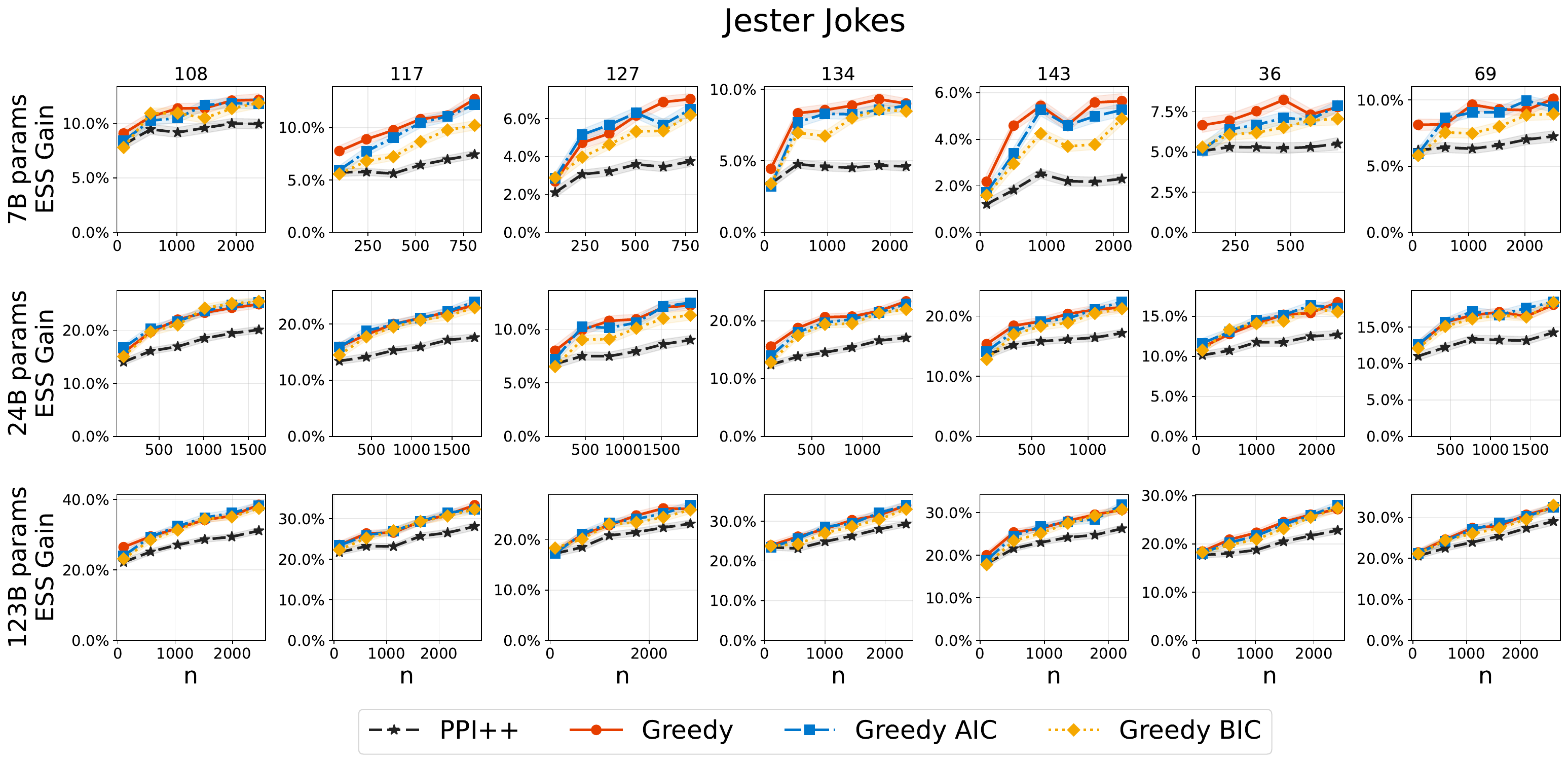}
    \caption{ESS gain comparison on Jester dataset across different jokes and Mistral models}
    \label{fig:jester-gppi}
\end{figure}
In Appendix \ref{app:heatmap-analysis}, we present results for the three remaining datasets and compare the coverage of PPI++ and GPPI across all four datasets. The experiments demonstrate that GPPI with greedy selection consistently outperforms PPI++ in ESS gain while maintaining comparable coverage. For the Book Crossing and Last.FM datasets, coverage falls below the nominal level for certain items; however, these failure modes are common to both PPI++ and GPPI. They occur for items whose individual values contain outliers or exhibit heavy-tailed distributions, which violate the finite-sample CLT approximation.

\subsection{A/B testing with population- and individual-level predictions}
\label{sec:exp-population}

We evaluate our framework's effectiveness in hypothesis testing by comparing item pairs to determine which has higher average ratings. 
We compare individual- and population-level predictions against classical estimators using ESS gain as our quality measure, with $N=5n$. Here, ESS gain is defined as the reduction of the sample size that we can achieve compared to a classical A/B test using only the labeled data, while maintaining the same power of the test. In this experiment, we set the type I error to $\alpha=5\%$.
For population-level predictions, we aggregate all available model predictions and compare mean predicted ratings. We test asymmetry values $\lambda \in \{0.25, 0.5, 0.75\}$.

Figures \ref{fig:ab_movielens} and \ref{fig:ab_jester} show that GPPI with greedy transform selection consistently outperforms PPI++, with both methods achieving significant improvements over the classical baselines which only uses labeled data, up to 30\% sample size reduction. Population-level predictions yield substantial improvements when predictions are accurate (MovieLens), but can negatively impact performance with noisy predictions (Jester).

\begin{figure}[h]
    \centering
    \includegraphics[width=0.8\linewidth]{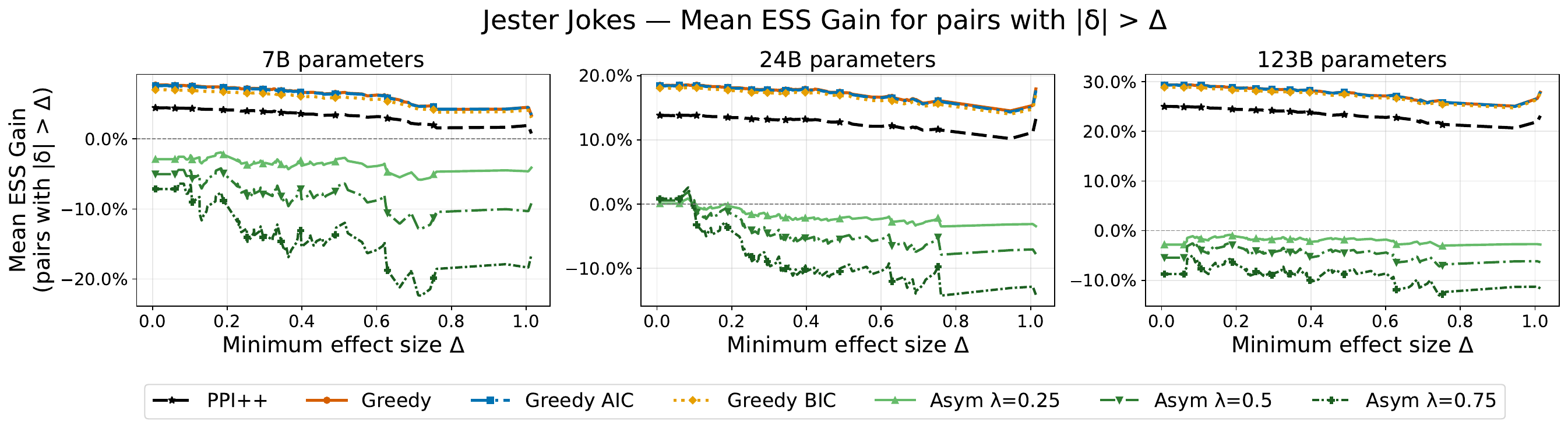}
    \caption{Comparison of the learning-augmented A/B testing approaches for Jester dataset}
    \label{fig:ab_jester}
\end{figure}

While the experiments above were not conducted in a live production A/B test environment, the methodology and conclusions directly translate to such settings, as A/B tests fundamentally rely on hypothesis testing to decide the best treatment.

\section{Conclusion}

We presented a framework for learning-augmented hypothesis testing that leverages predictions to reduce experimental costs while maintaining validity. For population-level directional predictions, we established a consistency-robustness trade-off through asymmetric tests with linear guarantees. For individual-level predictions, we introduced Generalized PPI (GPPI) with greedy transformation selection, extending Prediction-Powered Inference to handle nonlinear prediction errors.
In particular, we applied our methods to  persona-based LLM simulations on four real-world datasets,  demonstrating the usefulness of our framework and how it improves over prior methods.

\bibliographystyle{colm2026_conference}
\bibliography{bibliography}

@article{angelopoulos2023prediction,
  title={Prediction-powered inference},
  author={Angelopoulos, Anastasios N and Bates, Stephen and Fannjiang, Clara and Jordan, Michael I and Zrnic, Tijana},
  journal={Science},
  volume={382},
  number={6671},
  pages={669--674},
  year={2023},
  publisher={American Association for the Advancement of Science}
}

@article{angelopoulos2023ppi,
  title={Ppi++: Efficient prediction-powered inference},
  author={Angelopoulos, Anastasios N and Duchi, John C and Zrnic, Tijana},
  journal={arXiv preprint arXiv:2311.01453},
  year={2023}
}

@book{van2000asymptotic,
  title={Asymptotic statistics},
  author={Van der Vaart, Aad W},
  volume={3},
  year={2000},
  publisher={Cambridge university press}
}

@article{tropp2015introduction,
  title={An introduction to matrix concentration inequalities},
  author={Tropp, Joel A},
  journal={Foundations and Trends in Machine Learning},
  volume={8},
  number={1-2},
  pages={1--230},
  year={2015},
  publisher={Now Publishers, Inc.}
}

@article{kohavi2009controlled, 
  title={Controlled experiments on the web: survey and practical guide},
  author={Kohavi, Ron and Longbotham, Roger and Sommerfield, Dan and Henne, Randal M},
  journal={Data mining and knowledge discovery},
  volume={18},
  number={1},
  pages={140--181},
  year={2009},
  publisher={Springer}
}

@inproceedings{xu2015infrastructure,
  title={From infrastructure to culture: A/B testing challenges in large scale social networks},
  author={Xu, Ya and Chen, Nanyu and Fernandez, Addrian and Sinno, Omar and Bhasin, Anmol},
  booktitle={Proceedings of the 21th ACM SIGKDD international conference on knowledge discovery and data mining},
  pages={2227--2236},
  year={2015}
}

@inproceedings{deng2017data,
  title={Data-driven metric development for online controlled experiments: Seven lessons learned},
  author={Deng, Alex and Shi, Xiaolin},
  booktitle={Proceedings of the 22nd ACM SIGKDD International Conference on Knowledge Discovery and Data Mining},
  pages={77--86},
  year={2017}
}

@article{brown2020language,
  title={Language models are few-shot learners},
  author={Brown, Tom and Mann, Benjamin and Ryder, Nick and Subbiah, Melanie and Kaplan, Jared D and Dhariwal, Prafulla and Neelakantan, Arvind and Shyam, Pranav and Sastry, Girish and Askell, Amanda and others},
  journal={Advances in neural information processing systems},
  volume={33},
  pages={1877--1901},
  year={2020}
}

@article{openai2023gpt4,
  title={Gpt-4 technical report},
  author={Achiam, Josh and Adler, Steven and Agarwal, Sandhini and Ahmad, Lama and Akkaya, Ilge and Aleman, Florencia Leoni and Almeida, Diogo and Altenschmidt, Janko and Altman, Sam and Anadkat, Shyamal and others},
  journal={arXiv preprint arXiv:2303.08774},
  year={2023}
}

@article{touvron2023llama,
  title={Llama 2: Open foundation and fine-tuned chat models},
  author={Touvron, Hugo and Martin, Louis and Stone, Kevin and Albert, Peter and Almahairi, Amjad and Babaei, Yasmine and Bashlykov, Nikolay and Batra, Soumya and Bhargava, Prajjwal and Bhosale, Shruti and others},
  journal={arXiv preprint arXiv:2307.09288},
  year={2023}
}

@inproceedings{park2023generative,
  title={Generative agents: Interactive simulacra of human behavior},
  author={Park, Joon Sung and O'Brien, Joseph and Cai, Carrie Jun and Morris, Meredith Ringel and Liang, Percy and Bernstein, Michael S},
  booktitle={Proceedings of the 36th annual acm symposium on user interface software and technology},
  pages={1--22},
  year={2023}
}

@techreport{horton2023large,
  title={Large language models as simulated economic agents: What can we learn from homo silicus?},
  author={Horton, John J and Filippas, Apostolos and Manning, Benjamin S},
  year={2023},
  institution={National Bureau of Economic Research}
}

@article{argyle2023out,
  title={Out of one, many: Using language models to simulate human samples},
  author={Argyle, Lisa P and Busby, Ethan C and Fulda, Nancy and Gubler, Joshua R and Rytting, Christopher and Wingate, David},
  journal={Political Analysis},
  volume={31},
  number={3},
  pages={337--351},
  year={2023},
  publisher={Cambridge University Press}
}

@article{aher2023using,
  title={Using large language models to simulate multiple humans and replicate human subject studies},
  author={Aher, Gati V and Arriaga, Rosa I and Kalai, Adam Tauman},
  journal={International Conference on Machine Learning},
  pages={337--371},
  year={2023}
}

@techreport{manning2024automated,
  title={Automated social science: Language models as scientist and subjects},
  author={Manning, Benjamin S and Zhu, Kehang and Horton, John J},
  year={2024},
  institution={National Bureau of Economic Research}
}

@article{grossmann2023ai,
  title={AI and the transformation of social science research},
  author={Grossmann, Igor and Feinberg, Matthew and Parker, Dawn C and Christakis, Nicholas A and Tetlock, Philip E and Cunningham, William A},
  journal={Science},
  volume={380},
  number={6650},
  pages={1108--1109},
  year={2023},
  publisher={American Association for the Advancement of Science}
}

@article{dillion2023can,
  title={Can AI language models replace human participants?},
  author={Dillion, Danica and Tandon, Niket and Gu, Yuling and Gray, Kurt},
  journal={Trends in Cognitive Sciences},
  volume={27},
  number={7},
  pages={597--600},
  year={2023},
  publisher={Elsevier}
}

@inproceedings{zhao2021calibrate,
  title={Calibrate before use: Improving few-shot performance of language models},
  author={Zhao, Zihao and Wallace, Eric and Feng, Shi and Klein, Dan and Singh, Sameer},
  booktitle={International conference on machine learning},
  pages={12697--12706},
  year={2021},
  organization={Pmlr}
}

@inproceedings{lu2022fantastically,
  title={Fantastically ordered prompts and where to find them: Overcoming few-shot prompt order sensitivity},
  author={Lu, Yao and Bartolo, Max and Moore, Alastair and Riedel, Sebastian and Stenetorp, Pontus},
  booktitle={Proceedings of the 60th Annual Meeting of the Association for Computational Linguistics (Volume 1: Long Papers)},
  pages={8086--8098},
  year={2022}
}

@book{skinner1965science,
  title={Science and human behavior},
  author={Skinner, Burrhus Frederic},
  number={92904},
  year={1965},
  publisher={Simon and Schuster}
}

@article{lykouris2018competitive,
  title={Competitive caching with machine learned advice},
  author={Lykouris, Thodoris and Vassilvitskii, Sergei},
  journal={International Conference on Machine Learning},
  pages={3296--3305},
  year={2018}
}

@article{mitzenmacher2022algorithms,
  title={Algorithms with predictions},
  author={Mitzenmacher, Michael and Vassilvitskii, Sergei},
  journal={Communications of the ACM},
  volume={65},
  number={7},
  pages={33--35},
  year={2022}
}

@inproceedings{purohit2018improving,
  title={Improving online algorithms via ML predictions},
  author={Purohit, Manish and Svitkina, Zoya and Kumar, Ravi},
  booktitle={Advances in Neural Information Processing Systems},
  volume={31},
  year={2018}
}

@article{bamas2020primal,
  title={The primal-dual method for learning augmented algorithms},
  author={Bamas, Etienne and Maggiori, Andreas and Svensson, Ola},
  journal={Advances in Neural Information Processing Systems},
  volume={33},
  pages={20083--20094},
  year={2020}
}

@article{benomar2023advice,
  title={Advice querying under budget constraint for online algorithms},
  author={Benomar, Ziyad and Perchet, Vianney},
  journal={Advances in Neural Information Processing Systems},
  volume={36},
  pages={75026--75047},
  year={2023}
}

@inproceedings{rohatgi2020near,
  title={Near-optimal bounds for online caching with machine learned advice},
  author={Rohatgi, Dhruv},
  booktitle={Proceedings of the Fourteenth Annual ACM-SIAM Symposium on Discrete Algorithms},
  pages={1834--1845},
  year={2020}
}

@inproceedings{lattanzi2020online,
  title={Online scheduling via learned weights},
  author={Lattanzi, Silvio and Lavastida, Thomas and Moseley, Benjamin and Vassilvitskii, Sergei},
  booktitle={Proceedings of the Fourteenth Annual ACM-SIAM Symposium on Discrete Algorithms},
  pages={1859--1877},
  year={2020}
}

@inproceedings{mitzenmacher2020scheduling,
  title={Scheduling with predictions and the price of misprediction},
  author={Mitzenmacher, Michael},
  booktitle={Proceedings of the 11th Innovations in Theoretical Computer Science Conference},
  pages={14:1--14:18},
  year={2020}
}

@inproceedings{dutting2021secretaries,
  title={Secretaries with advice},
  author={D{\"u}tting, Paul and Lattanzi, Silvio and Paes Leme, Renato and Vassilvitskii, Sergei},
  booktitle={Proceedings of the 22nd ACM Conference on Economics and Computation},
  pages={409--429},
  year={2021}
}

@inproceedings{antoniadis2020secretary,
  title={Secretary and online matching problems with machine learned advice},
  author={Antoniadis, Antonios and Gouleakis, Themis and Kleer, Pieter and Kolev, Pavel},
  booktitle={Advances in Neural Information Processing Systems},
  volume={33},
  pages={7933--7944},
  year={2020}
}

@inproceedings{mansour2025paars,
  title={Paars: Persona aligned agentic retail shoppers},
  author={Mansour, Saab and Perelli, Leonardo and Mainetti, Lorenzo and Davidson, George and D’Amato, Stefano},
  booktitle={Proceedings of the 1st Workshop for Research on Agent Language Models (REALM 2025)},
  pages={143--159},
  year={2025}
}

@inproceedings{lin2022learning,
  title={Learning augmented binary search trees},
  author={Lin, Honghao and Luo, Tian and Woodruff, David},
  booktitle={International Conference on Machine Learning},
  pages={13431--13440},
  year={2022},
  organization={PMLR}
}

@article{mccauley2023online,
  title={Online list labeling with predictions},
  author={McCauley, Samuel and Moseley, Ben and Niaparast, Aidin and Singh, Shikha},
  journal={Advances in Neural Information Processing Systems},
  volume={36},
  pages={60278--60290},
  year={2023}
}

@inproceedings{zeynalirobust,
  title={Robust Learning-Augmented Dictionaries},
  author={Zeynali, Ali and Kamali, Shahin and Hajiesmaili, Mohammad},
  booktitle={Forty-first International Conference on Machine Learning}
}

@article{benomar2024learning,
  title={Learning-augmented priority queues},
  author={Benomar, Ziyad and Coester, Christian},
  journal={Advances in Neural Information Processing Systems},
  volume={37},
  pages={124163--124197},
  year={2024}
}

@article{bhaskara2021logarithmic,
  title={Logarithmic regret from sublinear hints},
  author={Bhaskara, Aditya and Cutkosky, Ashok and Kumar, Ravi and Purohit, Manish},
  journal={Advances in Neural Information Processing Systems},
  volume={34},
  pages={28222--28232},
  year={2021}
}

@inproceedings{golowich2022can,
  title={Can Q-learning be improved with advice?},
  author={Golowich, Noah and Moitra, Ankur},
  booktitle={Conference on Learning Theory},
  pages={4548--4619},
  year={2022},
  organization={PMLR}
}

@article{raman2024online,
  title={Online classification with predictions},
  author={Raman, Vinod and Tewari, Ambuj},
  journal={Advances in Neural Information Processing Systems},
  volume={37},
  pages={55884--55914},
  year={2024}
}

@article{shoham2025prediction,
  title={Prediction-Powered Semi-Supervised Learning with Online Power Tuning},
  author={Shoham, Noa and Dorfman, Ron and Shaer, Shalev and Levy, Kfir Y and Romano, Yaniv},
  journal={arXiv preprint arXiv:2510.22586},
  year={2025}
}

@inproceedings{
cowen2026multippi,
title={Multiple-Prediction-Powered Inference},
author={Charlie Cowen-Breen and Alekh Agarwal and Stephen Bates and William W. Cohen and Jacob Eisenstein and Amir Globerson and Adam Fisch},
booktitle={The Fourteenth International Conference on Learning Representations},
year={2026},
url={https://openreview.net/forum?id=gJZ5rf2bS4}
}

@article{ji2025multi,
  title={Multi-armed bandits with machine learning-generated surrogate rewards},
  author={Ji, Wenlong and Pan, Yihan and Zhu, Ruihao and Lei, Lihua},
  journal={arXiv preprint arXiv:2506.16658},
  year={2025}
}

@article{krsteski2025valid,
  title={Valid survey simulations with limited human data: The roles of prompting, fine-tuning, and rectification},
  author={Krsteski, Stefan and Russo, Giuseppe and Chang, Serina and West, Robert and Gligori{\'c}, Kristina},
  journal={arXiv preprint arXiv:2510.11408},
  year={2025}
}

@article{goldberg2001eigentaste,
  title={Eigentaste: A constant time collaborative filtering algorithm},
  author={Goldberg, Ken and Roeder, Theresa and Gupta, Dhruv and Perkins, Chris},
  journal={Information Retrieval},
  volume={4},
  number={2},
  pages={133--151},
  year={2001},
  publisher={Springer}
}

@article{harper2015movielens,
  title={The movielens datasets: History and context},
  author={Harper, F Maxwell and Konstan, Joseph A},
  journal={ACM Transactions on Interactive Intelligent Systems (TiiS)},
  volume={5},
  number={4},
  pages={1--19},
  year={2015},
  publisher={ACM New York, NY, USA}
}

@article{rieder2026simab,
  title={SimAB: Simulating A/B Tests with Persona-Conditioned AI Agents for Rapid Design Evaluation},
  author={Rieder, Tim and Schneider, Marian and Truss, Mario and Tsaplin, Vitaly and Rublea, Alina and Dere, Sinem and Sanz, Francisco Chicharro and Reiss, Tobias and Dogan, Mustafa Doga},
  journal={arXiv preprint arXiv:2603.01024},
  year={2026}
}

@article{chatzi2024prediction,
  title={Prediction-powered ranking of large language models},
  author={Chatzi, Ivi and Straitouri, Eleni and Thejaswi, Suhas and Rodriguez, Manuel G},
  journal={Advances in Neural Information Processing Systems},
  volume={37},
  pages={113096--113133},
  year={2024}
}

@inproceedings{hu2024quantifying,
  title={Quantifying the persona effect in LLM simulations},
  author={Hu, Tiancheng and Collier, Nigel},
  booktitle={Proceedings of the 62nd Annual Meeting of the Association for Computational Linguistics (Volume 1: Long Papers)},
  pages={10289--10307},
  year={2024}
}

@article{ji2026overview,
  title={An overview of large language models for statisticians},
  author={Ji, Wenlong and Yuan, Weizhe and Getzen, Emily and Cho, Kyunghyun and Jordan, Michael I and Mei, Song and Weston, Jason and Su, Weijie J and Xu, Jing and Zhang, Linjun},
  journal={The American Statistician},
  number={just-accepted},
  pages={1--106},
  year={2026},
  publisher={Taylor \& Francis}
}

@inproceedings{calderon2025alternative,
  title={The alternative annotator test for llm-as-a-judge: How to statistically justify replacing human annotators with llms},
  author={Calderon, Nitay and Reichart, Roi and Dror, Rotem},
  booktitle={Proceedings of the 63rd Annual Meeting of the Association for Computational Linguistics (Volume 1: Long Papers)},
  pages={16051--16081},
  year={2025}
}

@article{castelo2026simgym,
  title={SimGym: Traffic-Grounded Browser Agents for Offline A/B Testing in E-Commerce},
  author={Castelo, Alberto and Foumani, Zahra Zanjani and Fan, Ailin and Koay, Keat Yang and Malik, Vibhor and Zhu, Yuanzheng and Li, Han and Feghhi, Meysam and Uliana, Ronie and Xie, Shuang and others},
  journal={arXiv preprint arXiv:2602.01443},
  year={2026}
}

@article{li2025llm,
  title={Llm generated persona is a promise with a catch},
  author={Li, Ang and Chen, Haozhe and Namkoong, Hongseok and Peng, Tianyi},
  journal={arXiv preprint arXiv:2503.16527},
  year={2025}
}

@article{chandrasegaran2025synthetic,
  title={Synthetic users: insights from designers’ interactions with persona-based chatbots},
  author={Chandrasegaran, Senthil and Lloyd, Peter and others},
  journal={AI EDAM},
  volume={39},
  pages={e2},
  year={2025},
  publisher={Cambridge University Press}
}

@inproceedings{ziegler2005improving,
  title={Improving recommendation lists through topic diversification},
  author={Ziegler, Cai-Nicolas and McNee, Sean M and Konstan, Joseph A and Lausen, Georg},
  booktitle={Proceedings of the 14th international conference on World Wide Web},
  pages={22--32},
  year={2005}
}

@INPROCEEDINGS{Bertinmahieux2011,
  author = {Thierry Bertin-Mahieux and Daniel P.W. Ellis and Brian Whitman and Paul Lamere},
  title = {The Million Song Dataset},
  booktitle = {{Proceedings of the 12th International Conference on Music Information
	Retrieval ({ISMIR} 2011)}},
  year = {2011}
}

@article{akaike1974new,
  title={A new look at the statistical model identification},
  author={Akaike, Hirotugu},
  journal={IEEE Transactions on Automatic Control},
  volume={19},
  number={6},
  pages={716--723},
  year={1974}
}

@article{schwarz1978estimating,
  title={Estimating the dimension of a model},
  author={Schwarz, Gideon},
  journal={The Annals of Statistics},
  volume={6},
  number={2},
  pages={461--464},
  year={1978}
}

@inproceedings{li2015toward,
  title={Toward predicting the outcome of an A/B experiment for search relevance},
  author={Li, Lihong and Kim, Jin Young and Zitouni, Imed},
  booktitle={Proceedings of the Eighth ACM International Conference on Web Search and Data Mining},
  pages={37--46},
  year={2015}
}

@inproceedings{sales2018using,
  title={Using big data to sharpen design-based inference in A/B tests},
  author={Sales, Adam and Botelho, AF and Patikorn, Thanaporn and Heffernan, Neil T},
  booktitle={Proceedings of the eleventh international conference on educational data mining},
  year={2018}
}

@article{benomar2024non,
  title={Non-clairvoyant scheduling with partial predictions},
  author={Benomar, Ziyad and Perchet, Vianney},
  journal={arXiv preprint arXiv:2405.01013},
  year={2024}
}

@article{benomar2026non,
  title={Non-Clairvoyant Scheduling with Progress Bars},
  author={Benomar, Ziyad and Cosson, Romain and Lindermayr, Alexander and Schl{\"o}ter, Jens},
  journal={Advances in Neural Information Processing Systems},
  volume={38},
  pages={93520--93561},
  year={2026}
}

@article{benomar2025tradeoffs,
  title={On tradeoffs in learning-augmented algorithms},
  author={Benomar, Ziyad and Perchet, Vianney},
  journal={arXiv preprint arXiv:2501.12770},
  year={2025}
}

\newpage
\appendix
\section{Population level sign prediction}
In this section we provide the proof of Theorem \ref{thm:main-directional-pred}. The first Lemma below gives the expression of the power of the asymmetric Z-test. Then we prove the consistency and robustness bounds in separate lemmas.

\begin{lemma}\label{lem:power-lambda}
For the asymmetric Z-test with critical values at quantiles $(1-\lambda)\alpha/2$ and $1-(1+\lambda)\alpha/2$, the power function is:
$$\text{Power}(\theta, \alpha, \lambda) = \Phi(z_{(1-\lambda)\alpha/2} - \theta) + \Phi(\theta - z_{1-(1+\lambda)\alpha/2})$$
\end{lemma}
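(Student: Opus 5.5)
The plan is a direct computation of the rejection probability under the alternative. Under $H_1$ with non-centrality $\theta$, the statistic satisfies $Z \sim \mathcal{N}(\theta,1)$, so we write $Z = \theta + W$ with $W \sim \mathcal{N}(0,1)$. The test rejects on the event $\{Z < c_L(\lambda)\} \cup \{Z > c_R(\lambda)\}$, with $c_L(\lambda) = z_{(1-\lambda)\alpha/2}$ and $c_R(\lambda) = z_{1-(1+\lambda)\alpha/2}$. The power is therefore the sum of the probabilities of these two events.

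The one point to check is that the events are disjoint, which requires $c_L(\lambda) \le c_R(\lambda)$. This follows from monotonicity of $\Phi^{-1}$ once we verify $(1-\lambda)\alpha/2 \le 1-(1+\lambda)\alpha/2$. That inequality is equivalent to $\alpha \le 1$, which holds for $\alpha \in (0,1)$. In the limiting case $\lambda = 1$ we interpret $z_0 = -\infty$, so the left region is empty and the formula still holds with $\Phi(-\infty)=0$.

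Next we compute each tail separately. For the left tail,
\[
\mathbb{P}(Z < c_L) = \mathbb{P}(W < c_L - \theta) = \Phi(z_{(1-\lambda)\alpha/2} - \theta).
\]
For the right tail,
\[
\mathbb{P}(Z > c_R) = 1 - \Phi(c_R - \theta) = \Phi(\theta - c_R),
\]
using the symmetry $1-\Phi(x) = \Phi(-x)$. Summing the two terms gives the stated expression. Boundary points have probability zero because the distribution is continuous, so strict versus weak inequalities do not matter.

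No step presents a real obstacle. The only mild care is the disjointness check and the $\lambda = 1$ convention. As a sanity check, at $\theta = 0$ the formula returns $\alpha_L + \alpha_R = \alpha$.
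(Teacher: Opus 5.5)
Your proposal is correct and follows the same route as the paper: write the power as the sum of the two tail probabilities of $Z \sim \mathcal{N}(\theta,1)$, standardize via $Z-\theta \sim \mathcal{N}(0,1)$, and use $1-\Phi(u)=\Phi(-u)$ for the right tail. Your check that $c_L(\lambda)\le c_R(\lambda)$ (equivalently $\alpha\le 1$), which justifies adding the two probabilities, and your $z_0=-\infty$ convention at $\lambda=1$ are small points the paper leaves implicit.
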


\begin{proof}
We test $H_0: \theta = 0$ vs $H_1: \theta \neq 0$ using test statistic $Z \sim \Ncal(\theta, 1)$ under the alternative with effect size $\theta$.
The rejection region is:
$$Z < z_{(1-\lambda)\alpha/2} \quad \text{or} \quad Z > z_{1-(1+\lambda)\alpha/2}\;.$$
 Therefore, since $Z-\theta \sim \Ncal(0,1)$ the power is:
\begin{align*}
\text{Power}(\theta, \alpha, \lambda) 
&= P(Z < z_{(1-\lambda)\alpha/2} \mid \theta) + P(Z > z_{1-(1+\lambda)\alpha/2} \mid \theta)\\
&= P\left(Z-\theta < z_{(1-\lambda)\alpha/2} - \theta\right) + P\left(Z-\theta > z_{1-(1+\lambda)\alpha/2} - \theta\right)\\
&= \Phi(z_{(1-\lambda)\alpha/2} - \theta) + \left(1 - \Phi(z_{1-(1+\lambda)\alpha/2} - \theta)\right)\\
&= \Phi(z_{(1-\lambda)\alpha/2} - \theta) + \Phi(\theta - z_{1-(1+\lambda)\alpha/2})\;,
\end{align*}
where we used for the last inequality the symmetry $\Phi(-u) = 1-\Phi(u)$.
\end{proof}

\subsection{Consistency}

\begin{lemma}\label{lem:f-decreasing}
Denote by $\phi: x \mapsto e^{-x^2/2}/\sqrt{2\pi}$ the PDF of the standard normal distribution $\Ncal(0,1)$, and $\Phi: x \mapsto \int_{-\infty}^x \phi(t) dt$ its CDF. Then
$f = \phi/\Phi$
is decreasing.
\end{lemma}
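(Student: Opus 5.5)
We need to show $f=\phi/\Phi$ is decreasing on $\mathbb{R}$. The plan is to compute the derivative and reduce the claim to the classical inequality $x\Phi(x)+\phi(x)>0$, which expresses that the lower-tail function of the Gaussian stays positive.

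\textbf{Step 1 (derivative).} Using $\phi'(x)=-x\phi(x)$ and $\Phi'=\phi$, we obtain
\[
f'(x)=\frac{\phi'(x)\Phi(x)-\phi(x)^2}{\Phi(x)^2}=-\frac{\phi(x)}{\Phi(x)^2}\bigl(x\Phi(x)+\phi(x)\bigr).
\]
Since $\phi>0$ and $\Phi>0$ everywhere, it suffices to show that $g(x):=x\Phi(x)+\phi(x)>0$ for all $x\in\mathbb{R}$. This gives $f'<0$, so $f$ is strictly decreasing.

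\textbf{Step 2 (positivity of $g$).} The cleanest route is to identify $g$ as an integral of a positive function. We have $g'(x)=\Phi(x)+x\phi(x)-x\phi(x)=\Phi(x)$. Moreover, $g(x)\to 0$ as $x\to-\infty$: indeed $\phi(x)\to 0$, and by the Mills-ratio bound $\Phi(x)\le \phi(x)/|x|$ for $x<0$ we get $|x\Phi(x)|\le\phi(x)\to 0$. Hence
\[
g(x)=\int_{-\infty}^x \Phi(t)\,dt>0.
\]
Alternatively, one can verify directly that $\int_{-\infty}^x (x-t)\phi(t)\,dt = x\Phi(x)+\phi(x)$, using $\int_{-\infty}^x t\phi(t)\,dt=-\phi(x)$. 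The integrand on the left is positive for $t<x$, so this representation gives positivity without any limit argument.

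\textbf{Main obstacle.} The only delicate point is the behavior as $x\to-\infty$, where $x\Phi(x)$ is negative and could a priori compete with $\phi(x)$. The integral representation $\mathbb{E}[(x-T)_+]$ with $T\sim\mathcal{N}(0,1)$ handles this cleanly, so I would present that version to avoid invoking Mills-ratio asymptotics. For $x\ge 0$ positivity is trivial anyway, so one could also restrict the argument to $x<0$.
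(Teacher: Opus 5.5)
Your proof is correct, and the version you say you would present, $x\Phi(x)+\phi(x)=\int_{-\infty}^x (x-t)\phi(t)\,dt>0$, is exactly the paper's argument. The paper reduces $f'<0$ to $-x\Phi(x)<\phi(x)$ and proves it via $\int_{-\infty}^x(-x)\phi(t)\,dt<\int_{-\infty}^x(-t)\phi(t)\,dt=\phi(x)$; your alternative $g'=\Phi$ route with the Mills-ratio limit is also valid but not needed.
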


\begin{proof}
To demonstrate the result, we will show that the derivative of $f$ is negative. It holds for all $x \in \Rbb$ that
\[
f'(x) = \frac{\phi'(x) \Phi(x) - \phi(x)^2}{\Phi(x)^2}\:.
\]
Recalling that  $\phi(x) = e^{-x^2/2}/\sqrt{2\pi}$, we have that $\phi'(x) = -x \phi(x)$, and since $\phi$ and $\Phi$ are both positives, we have for all $x \in \Rbb$ that
\begin{align*}
f'(x) < 0 
&\iff \phi'(x) \Phi(x) < \phi(x)^2\\
&\iff -x \phi(x) \Phi(x) < \phi(x)^2\\
&\iff -x \Phi(x) < \phi(x)\;.
\end{align*}
Therefore, it suffices to prove that $-x \Phi(x) < \phi(x)$ for all $x$.
Let $x \in \Rbb$, we have
\begin{align*}
    -x \Phi(x) 
    &= -x \int_{-\infty}^{x} \frac{e^{-t^2/2}}{\sqrt{2\pi}}  dt = \int_{-\infty}^{x} (-x) \frac{e^{-t^2/2}}{\sqrt{2\pi}} dt\\
    &< \int_{-\infty}^{x} (-t) \frac{e^{-t^2/2}}{\sqrt{2\pi}} dt
    = \frac{1}{\sqrt{2\pi}} \left[ e^{-t^2/2} \right]_{-\infty}^x
    = \phi(x)\;.
\end{align*}
which concludes the proof.
\end{proof}

\begin{lemma}[Consistency]
For any fixed $\lambda \in (0,1)$, we have for all $\theta > 0$ and $\alpha \in (0,1)$ that
$$\frac{\text{Power}(\theta, \alpha, \lambda)}{\text{Power}(\theta, \alpha, 1)} \geq \frac{1+\lambda}{2}$$
\end{lemma}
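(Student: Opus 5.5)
The plan is to reduce the ratio to a monotone function of $\theta$, using Lemma~\ref{lem:power-lambda} for the power formula and Lemma~\ref{lem:f-decreasing} for the monotonicity, and then to evaluate its infimum at the boundary $\theta \to 0^+$.

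\textbf{Step 1 (the oracle power).} I will first write the oracle power explicitly. As $\lambda \to 1$ the left critical value $z_{(1-\lambda)\alpha/2} \to -\infty$, so Lemma~\ref{lem:power-lambda} gives $\mathrm{Power}(\theta,\alpha,1) = \Phi(\theta - z_{1-\alpha})$, which is the one-sided test.

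\textbf{Step 2 (drop the left tail).} For the numerator, I will discard the nonnegative left-tail term $\Phi(z_{(1-\lambda)\alpha/2}-\theta)$. This yields $\mathrm{Power}(\theta,\alpha,\lambda) \ge \Phi(\theta - a)$ with $a := z_{1-(1+\lambda)\alpha/2}$. Setting $b := z_{1-\alpha}$, we have $a \ge b$, because $1-(1+\lambda)\alpha/2 \ge 1-\alpha$. It therefore suffices to show that
\[
g(\theta) := \frac{\Phi(\theta-a)}{\Phi(\theta-b)} \ge \frac{1+\lambda}{2} \quad \text{for all } \theta>0 .
\]

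\textbf{Step 3 (monotonicity of $g$).} This is the key step, and it is where Lemma~\ref{lem:f-decreasing} enters. Differentiating the logarithm gives
\[
(\log g)'(\theta) = f(\theta-a) - f(\theta-b), \qquad f=\phi/\Phi .
\]
Since $\theta - a \le \theta - b$ and $f$ is decreasing, this derivative is nonnegative. Hence $g$ is nondecreasing on $\mathbb{R}$, and its infimum over $\theta>0$ is the limit as $\theta\to 0^+$:
\[
g(0) = \frac{\Phi(-a)}{\Phi(-b)} = \frac{(1+\lambda)\alpha/2}{\alpha} = \frac{1+\lambda}{2},
\]
using $\Phi(-z_{1-p}) = p$. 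This gives the claimed bound for every $\alpha$.

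\textbf{Main obstacle.} I expect the only delicate point to be noticing that discarding the left tail loses nothing in the worst case. The bound is attained in the limit $\theta\to 0^+$, where the left tail contributes only $(1-\lambda)\alpha/2$, so the constant $(1+\lambda)/2$ is tight. Everything beyond that is handled by the monotonicity of the ratio, which follows directly from Lemma~\ref{lem:f-decreasing}.
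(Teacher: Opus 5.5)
Your proof is correct and follows the paper's argument essentially step for step: you drop the left-tail term, show that $\theta\mapsto\Phi(\theta-a)/\Phi(\theta-b)$ is nondecreasing via Lemma~\ref{lem:f-decreasing}, and evaluate at $\theta=0$ using $\Phi(-z_{1-p})=p$. Your tightness remark is wrong, though it does not affect the lower bound: as $\theta\to0^+$ both powers tend to $\alpha$, so the full ratio tends to $1$ (the dropped left tail is exactly what is lost there), and the constant $(1+\lambda)/2$ is instead approached as $\alpha\to0$ with $\theta$ fixed, where the left tail becomes negligible.
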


\begin{proof}
Let $\lambda \in [0,1]$, $\theta \geq 0$, and $\alpha \in (0,1)$. We have by Lemma \ref{lem:power-lambda} that
\begin{align*}
\text{Power}(\theta, \alpha, \lambda) &= \Phi(z_{(1-\lambda)\alpha/2} - \theta) + \Phi(\theta - z_{1-(1+\lambda)\alpha/2})\\
\text{Power}(\theta, \alpha, 1) &= \Phi(\theta - z_{1-\alpha})
\end{align*}
We need to prove that
\[
\Phi(z_{(1-\lambda)\alpha/2} - \theta) + \Phi(\theta - z_{1-(1+\lambda)\alpha/2}) \geq \frac{1+\lambda}{2} \Phi(\theta - z_{1-\alpha})\;.
\]
Since $\Phi > 0$, it suffices to prove that
\begin{equation}\label{eq:consisteny-simplified-one-term}
\Phi(\theta - z_{1-(1+\lambda)\alpha/2}) \geq \frac{1+\lambda}{2} \Phi(\theta - z_{1-\alpha})\;.
\end{equation}

Let us denote by $t = z_{1-\alpha}$ and $s = z_{1-(1+\lambda)\alpha/2}$. We have that $(1+\lambda)\alpha/2 \leq \alpha$ and $u \mapsto z_{1-u}$ is decreasing, hence $s\geq t$.
Now observe that for all $x \in \Rbb$ we have
\begin{align*}
\frac{d}{dx}\left[ \frac{\Phi(x-s)}{\Phi(x-t)}\right] 
&= \frac{\phi(x-s)\Phi(x-t) - \Phi(x-s)\phi(x-t)}{\Phi(x-t)^2}\\
&= \frac{\Phi(x-s)}{\Phi(x-t)} \left( \frac{\phi(x-s)}{\Phi(x-s)} - \frac{\phi(x-t)}{\Phi(x-t)}\right)\\
&\geq 0\;,
\end{align*}
where we used for the last inequality that $\phi/\Phi$ is decreasing (by Lemma \ref{lem:f-decreasing}) and that $x-s \leq x-t$. We deduce that $x \mapsto \frac{\Phi(x-s)}{\Phi(x-t)}$ is non-decreasing, hence for $\theta \geq 0$ we have
\begin{align*}
\frac{\Phi(\theta-s)}{\Phi(\theta-t)}
&\geq \frac{\Phi(-s)}{\Phi(-t)}
= \frac{1-\Phi(s)}{1-\Phi(t)}\\
&= \frac{1-\Phi(z_{1-(1+\lambda)\alpha/2})}{1-\Phi(z_{1-\alpha})}
= \frac{(1+\lambda)\alpha/2)}{\alpha}
= \frac{1+\lambda}{2}
\end{align*}
where we used the symmetry property of the normal CDF $\Phi(-u) = 1 - \Phi(u)$ and the definition of the quantiles $z_u = \Phi^{-1}(u)$.

Therefore, we deduce that Equation \eqref{eq:consisteny-simplified-one-term} is true, which concludes the proof.

\end{proof}

\subsubsection{Robustness}
\begin{lemma}[Robustness]
For any fixed $\lambda \in (0,1)$, we have for all $\theta > 0$ and $\alpha \in (0,1)$ that
\[
\frac{\text{Power}(\theta, \alpha, \lambda)}{\text{Power}(\theta, \alpha, 0)} \geq 1 - \lambda
\]
\end{lemma}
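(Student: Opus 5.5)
The plan is to prove something stronger than the stated bound. For $\theta>0$, I will show that $\lambda\mapsto \mathrm{Power}(\theta,\alpha,\lambda)$ is non-decreasing on $[0,1)$. This gives $\mathrm{Power}(\theta,\alpha,\lambda)\ge \mathrm{Power}(\theta,\alpha,0)$, so the ratio is at least $1\ge 1-\lambda$.

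I will not try to compare the left and right tails separately. A term-by-term comparison of the left tails fails: the ratio $\Phi(c_L(\lambda)-\theta)/\Phi(c_L(0)-\theta)$ tends to $0$ as $\theta\to\infty$, because by Lemma \ref{lem:f-decreasing} it is monotone in $\theta$ in the unfavourable direction. Differentiating in $\lambda$ avoids this.

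\textbf{Step 1 (derivative of the critical values).} Start from Lemma \ref{lem:power-lambda}:
\[
\mathrm{Power}(\theta,\alpha,\lambda)=\Phi(c_L-\theta)+\Phi(\theta-c_R),\qquad c_L=z_{(1-\lambda)\alpha/2},\quad c_R=z_{1-(1+\lambda)\alpha/2}.
\]
Here $\phi$ denotes the standard normal density, as in Lemma \ref{lem:f-decreasing}. Since $\frac{d}{du}z_u=1/\phi(z_u)$, we get
\[
\frac{dc_L}{d\lambda}=-\frac{\alpha}{2\phi(c_L)},\qquad \frac{dc_R}{d\lambda}=-\frac{\alpha}{2\phi(c_R)}.
\]

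\textbf{Step 2 (derivative of the power).} By the chain rule,
\[
\frac{\partial}{\partial\lambda}\mathrm{Power}=\frac{\alpha}{2}\left(\frac{\phi(c_R-\theta)}{\phi(c_R)}-\frac{\phi(c_L-\theta)}{\phi(c_L)}\right).
\]
For any $c$, $\phi(c-\theta)/\phi(c)=\exp(\theta c-\theta^2/2)$. The derivative therefore has the sign of $e^{\theta c_R}-e^{\theta c_L}$.

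\textbf{Step 3 (ordering of the quantiles).} The remaining point is to check that $c_R\ge c_L$. This holds because the quantile levels satisfy
\[
1-\frac{(1+\lambda)\alpha}{2}\;\ge\;\frac{(1-\lambda)\alpha}{2}\iff \alpha\le 1,
\]
and $\alpha\in(0,1)$ by assumption. Since $\theta>0$, it follows that $e^{\theta c_R}\ge e^{\theta c_L}$, so $\partial_\lambda \mathrm{Power}\ge 0$.

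\textbf{Step 4 (conclusion).} Integrating the derivative from $0$ to $\lambda$ gives $\mathrm{Power}(\theta,\alpha,\lambda)\ge\mathrm{Power}(\theta,\alpha,0)$, and the stated bound follows.

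I expect no serious obstacle. The only place that needs care is Step 1: the derivative of the quantile function and the sign conventions must be computed correctly, since these determine the direction of the final inequality.
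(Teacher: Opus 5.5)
Your derivative computation is correct, and it proves the lemma exactly as typed. For $\theta>0$ you get $\partial_\lambda\mathrm{Power}=\tfrac{\alpha}{2}e^{-\theta^2/2}\bigl(e^{\theta c_R}-e^{\theta c_L}\bigr)\ge 0$, so the ratio is at least $1$. However, the condition $\theta>0$ in this statement is a typo. The lemma is the robustness half of Theorem~\ref{thm:main-directional-pred}, whose regime is $\theta<0$ (the prediction $\hat s=+1$ is wrong), and the paper's proof is written for all $\theta\le 0$. That your bound comes out as $1$ rather than $1-\lambda$ is the tell: where you worked, the asymmetric test simply dominates the symmetric one, so there is nothing to be robust about. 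In the intended regime your method cannot work. For $\theta<0$ the same derivative has the sign of $e^{\theta c_R}-e^{\theta c_L}\le 0$, because $c_R\ge c_L$. So power is non-increasing in $\lambda$, and integrating only gives $\mathrm{Power}(\theta,\alpha,\lambda)\le\mathrm{Power}(\theta,\alpha,0)$. That is an upper bound on the ratio, not the lower bound required.

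The missing idea is the tail-by-tail comparison you discarded. It fails as $\theta\to+\infty$, but it is exactly right for $\theta\le 0$. Set $a=z_{\alpha/2}$, $b=z_{(1-\lambda)\alpha/2}$, $c=z_{1-\alpha/2}$, $d=z_{1-(1+\lambda)\alpha/2}$, so that $b\le a$, $d\le c$ and $\Phi(b)=(1-\lambda)\Phi(a)$.
\begin{itemize}
\item \emph{Right tail.} This is immediate: $\Phi(\theta-d)\ge\Phi(\theta-c)\ge(1-\lambda)\Phi(\theta-c)$.
\item \emph{Left tail.} The ratio $\Phi(b-\theta)/\Phi(a-\theta)$ is non-increasing in $\theta$ by Lemma~\ref{lem:f-decreasing}; this is precisely the monotonicity you noticed. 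So for $\theta\le 0$ it is at least its value at $\theta=0$, namely $\Phi(b)/\Phi(a)=1-\lambda$.
\end{itemize}
Adding the two tail bounds gives $\mathrm{Power}(\theta,\alpha,\lambda)\ge(1-\lambda)\,\mathrm{Power}(\theta,\alpha,0)$. The paper organizes the same split as nonnegativity of $g(\theta)=\Phi(b-\theta)-(1-\lambda)\Phi(a-\theta)$ and $h(\theta)=\Phi(\theta-d)-(1-\lambda)\Phi(\theta-c)$ on $(-\infty,0]$, proved through monotonicity of $g$ and $h$. If you follow that route, note one weak step: the paper justifies $g'\le 0$ by claiming $b-\theta\ge 0$. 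That claim fails near $\theta=0$, since $b<0$. The ratio argument above is the sound way to close that step.
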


\begin{proof}
The power functions are
\begin{align*}
\text{Power}(\theta, \alpha, 0) &= \Phi(z_{\alpha/2} - \theta) + \Phi(\theta - z_{1-\alpha/2})\\
\text{Power}(\theta, \alpha, \lambda) &= \Phi(z_{(1-\lambda)\alpha/2} - \theta) + \Phi(\theta - z_{1-(1+\lambda)\alpha/2})
\end{align*}
We need to prove that
\begin{equation}\label{eq:robustness-goal}
\forall \theta \leq 0: \quad\text{Power}(\theta, \alpha, \lambda) - (1-\lambda)\text{Power}(\theta, \alpha, 0) \geq 0\;,
\end{equation}
Defining for all $\theta \leq 0$ the functions
\begin{align*}
g(\theta) &= \Phi(z_{(1-\lambda)\alpha/2} - \theta) - (1-\lambda)\Phi(z_{\alpha/2} - \theta)\\
h(\theta) &= \Phi(\theta - z_{1-(1+\lambda)\alpha/2}) - (1-\lambda)\Phi(\theta - z_{1-\alpha/2})\;,
\end{align*}
The objective \eqref{eq:robustness-goal} is equivalent to 
\[
\forall \theta \leq 0: \quad g(\theta) + h(\theta) \geq 0\;.
\]
In the following, we will prove separately that both $g$ and $h$ and non-negative for $\theta \leq 0$.

\paragraph{Analysis of $g(\theta)$}

Let $a = z_{\alpha/2}$, $b = z_{(1-\lambda)\alpha/2}$. We can write $g(\theta) = \Phi(b-\theta) - (1-\lambda)\Phi(a-mu)$, and we have for all $\theta \leq 0$ that
\[
g'(\theta) = -\phi(b-\theta) + (1-\lambda)\phi(a-\theta)
\]
Since $\alpha/2 \geq (1-\lambda)\alpha/2$, then $a \geq b$. Moreover, $\phi: x \mapsto e^{-x^2/2}/\sqrt{2\pi}$ is decreasing on $[0,\infty)$ and we have for $\theta \leq 0$ that $a-\theta \geq b-\theta \geq 0$, hence $\phi(a-\theta) \geq \phi(b-\theta)$, which yields
\begin{align*}
g'(\theta) 
&= -\phi(b-\theta) + (1-\lambda)\phi(a-\theta) \\
&\leq -\phi(b-\theta) + (1-\lambda)\phi(b-\theta) \\
&= - \lambda \phi(b-\theta) \leq 0\;.
\end{align*}
We deduce that $g$ is non-increasing on $(-\infty,0]$, and consequently we have for all $\theta \leq 0$ that
\begin{align*}
g(\theta) 
& \geq g(0) \\
&= \Phi(b) - (1-\lambda)\Phi(a) \\
&= \Phi(z_{(1-\lambda)\alpha/2}) - (1-\lambda)\Phi(z_{\alpha/2})\\
&= (1-\lambda)\alpha/2 - (1-\lambda)\alpha/2 \\
&= 0\;,
\end{align*}
which proves that $g(\theta) \geq 0$ for all $\theta \leq 0$.

\paragraph{Analysis of $h(\theta)$:}

Let $c = z_{1-\alpha/2}$ and $d = z_{1-(1+\lambda)\alpha/2}$, then $h(\theta) = \Phi(\theta - d) - (1-\lambda)\Phi(\theta - c)$, and we have for all $\theta \leq 0$ that
\[
h'(\theta) = \phi(\theta - d) - (1-\lambda)\phi(\theta - c)\;.
\]
Using similar arguments as for the analysis of $g$, observe that since $(1+\lambda)\alpha/2 > \alpha/2$, then $d < c$. 
Moreover, $\phi: x \mapsto e^{-x^2/2}/\sqrt{2\pi}$ is increasing on $(-\infty,0]$ and we have for $\theta \leq 0$ that $\theta - c \leq \theta - d \leq 0$, hence $\phi(\theta-c) \leq \phi(\theta - d)$. The derivative of $h$ can be lower bounded as
\begin{align*}
h'(\theta) 
&= \phi(\theta-d) - (1-\lambda)\phi(\theta-c) \\
&\geq \phi(\theta-c) - (1-\lambda)\phi(\theta - c) \\
&= \lambda \phi(\theta - c) \geq 0\;.
\end{align*}
Therefore, $h$ is non-decreasing on $(-\infty,0]$ and it holds for all $\theta \leq 0$ that
\begin{align*}
h(\theta) 
&\geq \lim_{u \to -\infty} h(u) \\
&= \lim_{u \to -\infty} \left[ \Phi(u-d) - (1-\lambda)\Phi(u-c)  \right] \\
&= 0
\end{align*}
where the last inequality is true because $\lim\limits_{u \to -\infty} \Phi(u-d) = \lim\limits_{u \to -\infty} \Phi(u-c) = 0$. 

All in all, we proved that $g$ and $h$ are both non-negative on $(-\infty,0]$, and consequently $\quad\text{Power}(\theta, \alpha, \lambda) - (1-\lambda)\text{Power}(\theta, \alpha, 0) \geq 0$ for all $\theta \leq 0$, which concludes the result.

\end{proof}
\section{Individual-level predictions - GPPI}
\subsection{Proof of Theorem~\ref{thm:gppi-main}}

We establish three key lemmas, then prove the main result.

\begin{lemma}[Fixed Weight Asymptotics]
\label{lem:gppi-fixed}
For any fixed $\lambda \in \mathbb{R}^d$, define $\hat{\theta}(\lambda) = \bar{Y}_n + \lambda^T(\bar{\phi}_N^U - \bar{\phi}_n^L)$. Then:
\[
\sqrt{n}(\hat{\theta}(\lambda) - \theta^*) \xrightarrow{d} \mathcal{N}(0, V(\lambda))
\]
where $V(\lambda) = \sigma_R^2(\lambda) + r \lambda^T \Sigma_\phi \lambda$ and $\sigma_R^2(\lambda) = \mathrm{Var}(Y - \lambda^T \phi(f(X)))$.
\end{lemma}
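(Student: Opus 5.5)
We decompose the centered estimator into two independent sample means, apply the classical CLT to each, and combine using independence of the labeled and unlabeled samples. Write $Z_i = \phi(f(X_i))$, $\tilde Z_j = \phi(f(\tilde X_j))$, and $\mu_\phi = \mathbb{E}[\phi(f(X))]$. We assume, as is implicit in the setup, that the labeled pairs and the unlabeled covariates are i.i.d. from the same distribution of $X$ and mutually independent. Since $\theta^* = \mathbb{E}[Y]$, we have the exact identity
\[
\hat\theta(\lambda) - \theta^* = \frac{1}{n}\sum_{i=1}^n \big[(Y_i - \theta^*) - \lambda^T(Z_i - \mu_\phi)\big] + \lambda^T\,\frac{1}{N}\sum_{j=1}^N (\tilde Z_j - \mu_\phi).
\]
Call the first average $A_n$ and the second term $B_N$.

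First, for $A_n$: its summands are i.i.d., centered, and have variance $\sigma_R^2(\lambda) = \mathrm{Var}(Y - \lambda^T\phi(f(X)))$. This variance is finite because $\mathbb{E}[Y^2]<\infty$ and $\mathbb{E}\|\phi(f(X))\|^2<\infty$. The Lindeberg--L\'evy CLT then gives $\sqrt n\, A_n \xrightarrow{d} \mathcal N(0,\sigma_R^2(\lambda))$.

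Second, for $B_N$: the scalars $\lambda^T(\tilde Z_j-\mu_\phi)$ are i.i.d., centered, with variance $\lambda^T\Sigma_\phi\lambda$. The CLT gives $\sqrt N\, B_N \xrightarrow{d} \mathcal N(0,\lambda^T\Sigma_\phi\lambda)$. Since $\sqrt{n}\,B_N = \sqrt{n/N}\,\sqrt N\, B_N$ and $n/N \to r$, Slutsky's lemma yields $\sqrt n\, B_N \xrightarrow{d} \mathcal N(0, r\,\lambda^T\Sigma_\phi\lambda)$.

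The step needing the most care is combining the two limits, because marginal convergence alone does not give joint convergence. Here $A_n$ depends only on the labeled sample and $B_N$ only on the unlabeled sample, so they are independent for every $n$. The characteristic function of $\sqrt n(A_n+B_N)$ therefore factorizes as the product of the two characteristic functions. Each factor converges pointwise to a Gaussian characteristic function, so the product converges to that of $\mathcal N(0,\sigma_R^2(\lambda)+r\lambda^T\Sigma_\phi\lambda)$. L\'evy's continuity theorem then concludes the proof, with $V(\lambda)$ as stated. The degenerate case of zero variance in either term is covered by the same argument, reading $\mathcal N(0,0)$ as a point mass.
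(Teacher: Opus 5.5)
Your proof is correct and follows essentially the same route as the paper: the same exact decomposition into a labeled-sample average of the residuals $(Y_i-\theta^*)-\lambda^T(\phi(f(X_i))-\mu_\phi)$ and an unlabeled-sample average, the CLT for each, Slutsky with $\sqrt{n/N}\to\sqrt{r}$, and independence of the two samples to combine the limits. Your characteristic-function factorization with L\'evy's continuity theorem makes explicit the combination step that the paper asserts directly from independence.
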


\begin{proof}
By linearity of expectation, $\mathbb{E}[\hat{\theta}(\lambda)] = \theta^*$ (unbiasedness), and defining $\mu_\phi = \mathbb{E}[\phi(f(X))]$ we have that
\begin{align*}
\theta(\lambda) - \theta^*
&= \big[(\bar{Y}_n - \lambda^T \bar{\phi}_n^L) - (\theta^* - \lambda^T \mu_\phi)\big] + \lambda^T (\bar{\phi}_N^U - \mu_\phi)\\
&= \frac{1}{n} \sum_{i=1}^n R_i(\lambda) + \frac{\lambda^T}{N} \sum_{i=1}^N (\phi(f(\tilde{X}_j)) - \mu_\phi)\;,
\end{align*}
where
\begin{align*}
R_i(\lambda) &= (Y_i - \theta^*) - \lambda^T(\phi(f(X_i)) - \mu_\phi), \quad i=1,\ldots,n\;.
\end{align*}
where $\mu_\phi = \mathbb{E}[\phi(f(X))]$. Muliplying by $\sqrt{n}$ gives
\begin{equation}\label{eq:CLT_decomp}
\sqrt{n}(\hat{\theta}(\lambda) - \theta^*) = \frac{1}{\sqrt{n}}\sum_{i=1}^n R_i(\lambda) + \sqrt{\frac{n}{N}} \cdot \frac{\lambda^T}{\sqrt{N}}\sum_{j=1}^{N} \phi_j^U\;.
\end{equation}

\paragraph{Term 1:} $R_i(\lambda)$ are i.i.d. with $\mathbb{E}[R_i] = 0$ and 
\begin{align*}
\mathrm{Var}(R_i) &= \mathrm{Var}(Y - \lambda^T\phi(f(X))) \\
&= \sigma_Y^2 + \lambda^T\Sigma_\phi\lambda - 2\lambda^T\sigma_{Y,\phi} =: \sigma_R^2(\lambda) < \infty
\end{align*}
since $\mathbb{E}[Y^2] < \infty$ and $\mathbb{E}[\|\phi(f(X))\|^2] < \infty$. By the Central Limit Theorem 
\[
\frac{1}{\sqrt{n}}\sum_{i=1}^n R_i(\lambda) \xrightarrow{d} \mathcal{N}(0, \sigma_R^2(\lambda))\;.
\]

\paragraph{Term 2:} $\phi_j^U$ are i.i.d. with $\mathbb{E}[\phi_j^U] = 0$ and $\mathrm{Cov}(\phi_j^U) = \Sigma_\phi$. The Multivariate Central Limit Theorem yields
\[
\frac{1}{\sqrt{N}}\sum_{j=1}^{N} \phi_j^U \xrightarrow{d} \mathcal{N}_d(0, \Sigma_\phi)\;,
\]
and we deduce that
\[
\frac{\lambda^T}{\sqrt{N}}\sum_{j=1}^{N} \phi_j^U \xrightarrow{d} \mathcal{N}(0, \lambda^T\Sigma_\phi\lambda)\;.
\]
Since $r_n = n/N \to r$, we have $\sqrt{n/N} \to \sqrt{r}$. By Slutsky's theorem:
\[
\sqrt{\frac{n}{N}} \cdot \frac{\lambda^T}{\sqrt{N}}\sum_{j=1}^{N} \phi_j^U \xrightarrow{d} \mathcal{N}(0, r \lambda^T\Sigma_\phi\lambda)
\]

Since labeled and unlabeled data are independent, the two sums in \eqref{eq:CLT_decomp} are independent. Therefore
\[
\sqrt{n}(\hat{\theta}(\lambda) - \theta^*) \xrightarrow{d} \mathcal{N}(0, \sigma_R^2(\lambda) + r\lambda^T\Sigma_\phi\lambda) = \mathcal{N}(0, V(\lambda))\;.
\]
\end{proof}

\begin{lemma}[Optimal Weight]
\label{lem:gppi-optimal}
$V(\lambda)$ is minimized at $\lambda^* = \frac{1}{1+r}\Sigma_\phi^{-1}\sigma_{Y,\phi}$ with minimum value:
\[
V^* = \sigma_Y^2 - \frac{1}{1+r}\sigma_{Y,\phi}^T\Sigma_\phi^{-1}\sigma_{Y,\phi}
\]
\end{lemma}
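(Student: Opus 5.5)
The plan is to write $V(\lambda)$ explicitly as a quadratic form in $\lambda$ and minimize it directly. Lemma~\ref{lem:gppi-fixed} and its proof give $\sigma_R^2(\lambda) = \sigma_Y^2 + \lambda^T\Sigma_\phi\lambda - 2\lambda^T\sigma_{Y,\phi}$. Adding the second term yields
\[
V(\lambda) = \sigma_Y^2 - 2\lambda^T\sigma_{Y,\phi} + (1+r)\,\lambda^T\Sigma_\phi\lambda\;.
\]
This is a quadratic function on $\Rbb^d$ whose Hessian is $2(1+r)\Sigma_\phi$. The standing assumption of Theorem~\ref{thm:gppi-main} is that $\Sigma_\phi$ is positive-definite, and $r>0$. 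So the Hessian is positive-definite and $V$ is strictly convex and coercive. Hence $V$ has a unique global minimizer, and it is characterized by the first-order condition.

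Next I will set the gradient to zero: $\nabla V(\lambda) = -2\sigma_{Y,\phi} + 2(1+r)\Sigma_\phi\lambda = 0$. Since $\Sigma_\phi$ is invertible, this gives $\lambda^* = \frac{1}{1+r}\Sigma_\phi^{-1}\sigma_{Y,\phi}$. As a cleaner alternative that avoids calculus, I can complete the square:
\[
V(\lambda) = \sigma_Y^2 - \tfrac{1}{1+r}\sigma_{Y,\phi}^T\Sigma_\phi^{-1}\sigma_{Y,\phi} + (1+r)(\lambda-\lambda^*)^T\Sigma_\phi(\lambda-\lambda^*)\;.
\]
Positive-definiteness of $\Sigma_\phi$ makes the last term nonnegative, and it vanishes exactly at $\lambda=\lambda^*$. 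This simultaneously proves minimality and uniqueness, and it reads off the minimum value.

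To confirm $V^*$, I will plug $\lambda^*$ into the quadratic form. Using $\Sigma_\phi^{-1}$ symmetric, the cross term is $-\frac{2}{1+r}\sigma_{Y,\phi}^T\Sigma_\phi^{-1}\sigma_{Y,\phi}$ and the quadratic term is $\frac{1}{1+r}\sigma_{Y,\phi}^T\Sigma_\phi^{-1}\sigma_{Y,\phi}$. Together they give $V^* = \sigma_Y^2 - \frac{1}{1+r}\sigma_{Y,\phi}^T\Sigma_\phi^{-1}\sigma_{Y,\phi}$.

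There is no real obstacle here. The only points needing care are the correct expansion of $\mathrm{Var}(Y-\lambda^T\phi(f(X)))$, where the cross covariance $\sigma_{Y,\phi}$ appears with factor $-2$, and invoking positive-definiteness of $\Sigma_\phi$ for invertibility and strict convexity.
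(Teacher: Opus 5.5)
Your proposal is correct and takes essentially the same route as the paper. Both expand $V(\lambda)=\sigma_Y^2-2\lambda^T\sigma_{Y,\phi}+(1+r)\lambda^T\Sigma_\phi\lambda$, use positive-definiteness of the Hessian $2(1+r)\Sigma_\phi$, solve the first-order condition, and substitute $\lambda^*$. Your completing-the-square identity is a small bonus: it gives minimality, uniqueness, and the value $V^*$ in a single step without calculus.
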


\begin{proof}
Expanding $\sigma_R^2(\lambda)$:
\[
\sigma_R^2(\lambda) = \sigma_Y^2 + \lambda^T\Sigma_\phi\lambda - 2\lambda^T\sigma_{Y,\phi}
\]
Therefore
\[
V(\lambda) = \sigma_Y^2 - 2\lambda^T\sigma_{Y,\phi} + (1+r)\lambda^T\Sigma_\phi\lambda\;,
\]
and
\[
\nabla_\lambda V = -2\sigma_{Y,\phi} + 2(1+r)\Sigma_\phi\lambda\;.
\]
The Hessian is $\nabla^2_\lambda V = 2(1+r)\Sigma_\phi \succ 0$ since $\Sigma_\phi \succ 0$, hence $V(\lambda)$ is convex, and it is minimal at the point where the gradient is zero, i.e. at
\[
\lambda^* = \frac{1}{1+r}\Sigma_\phi^{-1}\sigma_{Y,\phi}\;.
\]

Substituting $\lambda^*$ gives
\begin{align*}
V^* &= \sigma_Y^2 - 2(\lambda^*)^T\sigma_{Y,\phi} + (1+r)(\lambda^*)^T\Sigma_\phi\lambda^* \\
&= \sigma_Y^2 - \frac{2}{1+r}\sigma_{Y,\phi}^T\Sigma_\phi^{-1}\sigma_{Y,\phi} + \frac{1}{1+r}\sigma_{Y,\phi}^T\Sigma_\phi^{-1}\sigma_{Y,\phi} \\
&= \sigma_Y^2 - \frac{1}{1+r}\sigma_{Y,\phi}^T\Sigma_\phi^{-1}\sigma_{Y,\phi}\;.
\end{align*}
\end{proof}

\begin{lemma}[Consistency of Plug-in Estimators]
\label{lem:gppi-consistency}
Under the conditions of Theorem~\ref{thm:gppi-main}:
\[
\hat{\Sigma}_\phi \xrightarrow{p} \Sigma_\phi, \quad \hat{\sigma}_{Y,\phi} \xrightarrow{p} \sigma_{Y,\phi}, \quad \hat{\lambda} \xrightarrow{p} \lambda^*\;.
\]
\end{lemma}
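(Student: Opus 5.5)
The plan is to prove the three convergences in order, since the third follows from the first two. For $\hat{\Sigma}_\phi$, expand the sample covariance as
\[
\hat{\Sigma}_\phi = \frac{N}{N-1}\Big(\frac{1}{N}\sum_{j=1}^N \phi(f(\tilde X_j))\phi(f(\tilde X_j))^T - \bar{\phi}_N^U (\bar{\phi}_N^U)^T\Big).
\]
The unlabeled samples are i.i.d. and $\mathbb{E}[\|\phi(f(X))\|^2]<\infty$, so every entry of $\phi\phi^T$ is integrable, because $|\phi_k\phi_l|\le \|\phi\|^2$. The weak law of large numbers then gives $\frac1N\sum_j \phi\phi^T \xrightarrow{p} \mathbb{E}[\phi\phi^T]$ entrywise, and it also gives $\bar{\phi}_N^U \xrightarrow{p} \mu_\phi$. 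Because $n\to\infty$ and $n/N\to r>0$, we have $N\to\infty$ and $N/(N-1)\to 1$. The continuous mapping theorem therefore yields $\hat{\Sigma}_\phi \xrightarrow{p} \mathbb{E}[\phi\phi^T]-\mu_\phi\mu_\phi^T=\Sigma_\phi$.

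The cross-covariance is handled the same way on the labeled sample. Write
\[
\hat{\sigma}_{Y,\phi}=\frac{n}{n-1}\Big(\frac1n\sum_{i=1}^n Y_i\phi(f(X_i)) - \bar Y_n\bar{\phi}_n^L\Big).
\]
By Cauchy--Schwarz, $\mathbb{E}|Y\phi_k|\le (\mathbb{E}Y^2)^{1/2}(\mathbb{E}\|\phi\|^2)^{1/2}<\infty$, so the weak law of large numbers applies to $Y\phi$, to $Y$, and to $\phi$. Continuous mapping then gives $\hat\sigma_{Y,\phi}\xrightarrow{p}\mathbb{E}[Y\phi]-\theta^*\mu_\phi=\sigma_{Y,\phi}$.

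For $\hat\lambda$, recall $\hat\lambda = g(r_n,\hat\Sigma_\phi,\hat\sigma_{Y,\phi})$ with $g(s,A,b)=\frac{1}{1+s}A^{-1}b$. This map is continuous on the open set where $A$ is invertible and $s>-1$. Since $\Sigma_\phi\succ 0$ and $r_n\to r$ deterministically, the triple $(r_n,\hat\Sigma_\phi,\hat\sigma_{Y,\phi})$ converges in probability to $(r,\Sigma_\phi,\sigma_{Y,\phi})$, a point of continuity of $g$. The continuous mapping theorem therefore gives $\hat\lambda\xrightarrow{p}\lambda^*$.

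The only point needing care is that $\hat\Sigma_\phi$ may be singular for finite samples, so $\hat\lambda$ may be undefined there. I would address this by noting that $\lambda_{\min}$ is continuous, hence $\lambda_{\min}(\hat\Sigma_\phi)\xrightarrow{p}\lambda_{\min}(\Sigma_\phi)>0$, so $\mathbb{P}(\hat\Sigma_\phi \text{ invertible})\to 1$. One can then define $\hat\lambda$ arbitrarily on the vanishing-probability event where $\hat\Sigma_\phi$ is singular without affecting convergence in probability. Apart from this point, the proof is standard moment bookkeeping.
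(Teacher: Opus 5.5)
Your proposal is correct and follows the same route as the paper: a law of large numbers for the first and second moments gives $\hat{\Sigma}_\phi \xrightarrow{p} \Sigma_\phi$ and $\hat{\sigma}_{Y,\phi} \xrightarrow{p} \sigma_{Y,\phi}$. Continuity of matrix inversion at $\Sigma_\phi \succ 0$, together with $r_n \to r$ and continuous mapping/Slutsky, then yields $\hat{\lambda} \xrightarrow{p} \lambda^*$. Your version is more careful than the paper's in several places the paper leaves implicit: you expand the $\frac{1}{N-1}$-normalized covariances, check integrability of the cross terms, note that $N\to\infty$, and handle the vanishing-probability event on which $\hat{\Sigma}_\phi$ is singular.
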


\begin{proof}
By the Strong Law of Large Numbers, since $\mathbb{E}[\|\phi(f(X))\|^2] < \infty$ and $\mathbb{E}[Y^2] < \infty$, we have
\[
\hat{\Sigma}_\phi \xrightarrow{p} \Sigma_\phi, \quad \hat{\sigma}_{Y,\phi} \xrightarrow{p} \sigma_{Y,\phi}\;.
\]
Since $\Sigma_\phi \succ 0$, matrix inversion is continuous at $\Sigma_\phi$. By the continuous mapping theorem
\[
\hat{\Sigma}_\phi^{-1} \xrightarrow{p} \Sigma_\phi^{-1}\;.
\]
Since $r_n \to r$ and $1/(1+r_n) \to 1/(1+r)$, we deduce by Slutsky's theorem that
\[
\hat{\lambda} = \frac{1}{1+r_n}\hat{\Sigma}_\phi^{-1}\hat{\sigma}_{Y,\phi} \xrightarrow{p} \frac{1}{1+r}\Sigma_\phi^{-1}\sigma_{Y,\phi} = \lambda^*\;.
\]
\end{proof}

\subsubsection{Proof of Main Theorem}

\begin{proof}[Proof of Theorem~\ref{thm:gppi-main}]
We prove below the the three statements of the theorem.

\paragraph{(i): Asymptotic normality.}
Decompose
\[
\sqrt{n}(\hat{\theta}_{\text{GPPI}} - \theta^*) = \sqrt{n}(\hat{\theta}(\lambda^*) - \theta^*) + \sqrt{n}(\hat{\theta}(\hat{\lambda}) - \hat{\theta}(\lambda^*))\;.
\]
We have by Lemma~\ref{lem:gppi-fixed} that $\sqrt{n}(\hat{\theta}(\lambda^*) - \theta^*) \xrightarrow{d} \mathcal{N}(0, V^*)$.
For the second term, we can write
\[
\sqrt{n}(\hat{\theta}(\hat{\lambda}) - \hat{\theta}(\lambda^*)) = \sqrt{n}(\hat{\lambda} - \lambda^*)^T(\bar{\phi}_N^U - \bar{\phi}_n^L)\;,
\]
and we have by the CLT that $\bar{\phi}_N^U - \mu_\phi = O_p(N^{-1/2})$ and $\bar{\phi}_n^L - \mu_\phi = O_p(n^{-1/2})$. Since $n/N \to r \in (0,\infty)$, we have $N = \Theta(n)$, hence
\[
\bar{\phi}_N^U - \bar{\phi}_n^L = O_p(n^{-1/2})\;.
\]
Furthermore, Lemma~\ref{lem:gppi-consistency} gives $\hat{\lambda} - \lambda^* = o_p(1)$. Therefore:
\[
\sqrt{n}(\hat{\theta}(\hat{\lambda}) - \hat{\theta}(\lambda^*)) = \sqrt{n} \cdot o_p(1) \cdot O_p(n^{-1/2}) = o_p(1)\;.
\]
Finally, we deduce by Slutsky's theorem that
\[
\sqrt{n}(\hat{\theta}_{\text{GPPI}} - \theta^*) \xrightarrow{d} \mathcal{N}(0, V^*)
\]

\paragraph{(ii): Variance estimation.}
Define 
\[
\tilde{R}_i = (Y_i - \theta^*) - (\lambda^*)^T(\phi(f(X_i)) - \mu_\phi)\;.
\]
we have that $\mathrm{Var}(\tilde{R}_i) = \sigma_R^2(\lambda^*)$. The sample residuals satisfy
\begin{align*}
\hat{R}_i 
&= (Y_i - \bar{Y}_n) - \hat{\lambda}^T(\phi(f(X_i)) - \bar{\phi}_n^L) \\
&= \tilde{R}_i + (\lambda^* - \hat{\lambda})^T(\phi(f(X_i)) - \mu_\phi) + O_p(n^{-1/2})\;.
\end{align*}
By standard results on sample variance (see for example \citep{van2000asymptotic}) and Lemma~\ref{lem:gppi-consistency}:
\[
\hat{\sigma}_R^2 = \frac{1}{n-1}\sum_{i=1}^n \hat{R}_i^2 \xrightarrow{p} \sigma_R^2(\lambda^*)\;.
\]
Using again Lemma~\ref{lem:gppi-consistency} and the continuity of $(\lambda, \Sigma) \mapsto (\lambda^T \Sigma \lambda$, we obtain
\[
\hat{\lambda}^T\hat{\Sigma}_\phi\hat{\lambda} \xrightarrow{p} (\lambda^*)^T\Sigma_\phi\lambda^*
\]
Since $r_n = n/N \to r$:
\begin{align*}
\hat{V} 
&= \hat{\sigma}_R^2 + r_n\hat{\lambda}^T\hat{\Sigma}_\phi\hat{\lambda} \\
&\xrightarrow{p} \sigma_R^2(\lambda^*) + r(\lambda^*)^T\Sigma_\phi\lambda^* = V(\lambda^*) = V^*    
\end{align*}

\paragraph{(iii): Confidence interval validity.}
From claims (i)--(ii) and Slutsky's theorem, we deduce that
\[
\frac{\sqrt{n}(\hat{\theta}_{\text{GPPI}} - \theta^*)}{\sqrt{\hat{V}}} \xrightarrow{d} \mathcal{N}(0,1)\;.
\]
Therefore
\[
\lim_{n,N\to\infty} \mathbb{P}\left(\left|\frac{\sqrt{n}(\hat{\theta}_{\text{GPPI}} - \theta^*)}{\sqrt{\hat{V}}}\right| \leq z_{\alpha/2}\right) = 1-\alpha
\]
which is equivalent to $\lim_{n,N\to\infty} \mathbb{P}(\theta^* \in \text{CI}_{1-\alpha}) = 1-\alpha$.
\end{proof}

\subsection{Efficiency gains}\label{app:efficiency-gain}

Here we quantify how much the confidence interval width can be reduced using GPPI compared to PPI++.

\begin{lemma}
\label{lem:variance-decomposition}
Under the conditions of Theorem~\ref{thm:gppi-main}, the following hold:
\begin{enumerate}[label=(\roman*), leftmargin=*]
\item The asymptotic variance admits the decomposition
\begin{equation}
V^* = \sigma_Y^2\left(1 - \frac{R^2_\phi}{1+r}\right),
\end{equation}
where $R^2_\phi = \sigma_{Y,\phi}^T \Sigma_\phi^{-1} \sigma_{Y,\phi} / \sigma_Y^2 \in [0,1]$ is the population coefficient of determination from regressing $Y$ on $\phi(f(X))$.

\item For PPI++, with $\phi(z) = z$, we have $R^2_\phi = \rho_{Y,f}^2$, where $\rho_{Y,f} = \mathrm{Cor}(Y, f(X))$.

\item The asymptotic confidence interval half-widths satisfy
\begin{equation}
W_{\text{GPPI}} = W_{\text{classical}}\sqrt{1 - \frac{R^2_\phi}{1+r}} \leq W_{\text{PPI}} = W_{\text{classical}}\sqrt{1 - \frac{\rho_{Y,f}^2}{1+r}} \leq W_{\text{classical}},
\end{equation}
where $W_{\text{classical}} = z_{\alpha/2}\sqrt{\sigma_Y^2/n}$.

\item The width ratio is given by
\begin{equation}
\frac{W_{\text{GPPI}}}{W_{\text{PPI}}} = \sqrt{\frac{1 - R^2_\phi/(1+r)}{1 - \rho_{Y,f}^2/(1+r)}}.
\end{equation}
Strict inequality $W_{\text{GPPI}} < W_{\text{PPI}}$ holds if and only if $R^2_\phi > \rho_{Y,f}^2$, which occurs when nonlinear transformations capture additional structure beyond linear correlation.
\end{enumerate}
\end{lemma}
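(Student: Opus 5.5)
The plan is to derive all four items from the closed form $V^* = \sigma_Y^2 - \frac{1}{1+r}\sigma_{Y,\phi}^T\Sigma_\phi^{-1}\sigma_{Y,\phi}$ of Theorem~\ref{thm:gppi-main}(i) and Lemma~\ref{lem:gppi-optimal}. Almost everything is algebra; the only genuine content is interpreting $R^2_\phi$ as a coefficient of determination and comparing it with $\rho_{Y,f}^2$.

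\textbf{Item (i).} Assume $\sigma_Y^2>0$; otherwise everything is trivial. Factor out $\sigma_Y^2$ from $V^*$ and define $R^2_\phi$ as stated. This gives the decomposition immediately.

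To show $R^2_\phi\in[0,1]$, use the linear projection $\beta=\Sigma_\phi^{-1}\sigma_{Y,\phi}$. A direct expansion gives
\[
\mathrm{Var}(Y-\beta^T\phi(f(X))) = \sigma_Y^2 - \sigma_{Y,\phi}^T\Sigma_\phi^{-1}\sigma_{Y,\phi} \geq 0,
\]
which is exactly $\sigma_{Y,\phi}^T\Sigma_\phi^{-1}\sigma_{Y,\phi}\le \sigma_Y^2$. Nonnegativity follows from $\Sigma_\phi^{-1}\succ 0$. Equivalently, $R^2_\phi$ is the squared multiple correlation $\max_{\lambda}\mathrm{Cor}(Y,\lambda^T\phi(f(X)))^2$. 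This follows from Cauchy--Schwarz in the $\Sigma_\phi$ inner product, and the characterization will be reused below.

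\textbf{Item (ii).} Take $d=1$ and $\phi(z)=z$. Then $\Sigma_\phi=\mathrm{Var}(f(X))$ and $\sigma_{Y,\phi}=\mathrm{Cov}(Y,f(X))$, so
\[
R^2_\phi=\frac{\mathrm{Cov}(Y,f(X))^2}{\sigma_Y^2\,\mathrm{Var}(f(X))}=\rho_{Y,f}^2.
\]

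\textbf{Item (iii).} The asymptotic half-widths are $z_{\alpha/2}\sqrt{V/n}$ for the respective asymptotic variances. By Theorem~\ref{thm:gppi-main}(iii) and item (i), this gives the two displayed formulas; the classical case corresponds to $R^2=0$ (i.e. $\lambda=0$).

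The inequality $W_{\text{PPI}}\le W_{\text{classical}}$ is immediate from $\rho_{Y,f}^2\ge 0$.

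The key inequality $W_{\text{GPPI}}\le W_{\text{PPI}}$ requires $R^2_\phi\ge\rho_{Y,f}^2$. This holds only when the identity $z$ lies in the span of the coordinates of $\phi$, or more generally when $f(X)$ is an affine function of $\phi(f(X))$. The paper ensures this by always adding the identity transformation. I would state that assumption explicitly; without it the inequality can fail. Under it, the max-correlation characterization from item (i) gives the claim at once, because the maximum over $\lambda$ includes the direction recovering $f(X)$. Alternatively, $V(\lambda)$ minimized over a larger set is no larger, by Lemma~\ref{lem:gppi-optimal}.

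This step is the main obstacle: the statement as written silently needs this nesting condition, and I would make it explicit.

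\textbf{Item (iv).} The ratio is the quotient of the two formulas in item (iii). Since $t\mapsto\sqrt{1-t/(1+r)}$ is strictly decreasing on $[0,1]$, $W_{\text{GPPI}}<W_{\text{PPI}}$ holds if and only if $R^2_\phi>\rho_{Y,f}^2$.

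For the interpretive remark, write the gap as the variance explained by the component of the best $\phi$-predictor orthogonal to $f(X)$. It is positive exactly when the nonlinear coordinates explain variance in $Y$ beyond the linear term.
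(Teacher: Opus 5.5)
Your proof is correct and follows the paper's route. You factor $\sigma_Y^2$ out of $V^*$, compute directly for $\phi(z)=z$, substitute into $z_{\alpha/2}\sqrt{V^*/n}$, and use monotonicity of $t\mapsto\sqrt{1-t/(1+r)}$; your residual-variance argument for $R^2_\phi\in[0,1]$ is more explicit than the paper's one-line appeal to Cauchy--Schwarz.

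Your nesting caveat is warranted. The paper justifies $R^2_\phi\ge\rho_{Y,f}^2$ only by calling $\phi(z)=z$ a ``special case'' of the general transformation, which is valid only when $f(X)$ lies in the affine span of the coordinates of $\phi(f(X))$. For example, $\phi(z)=z^2$, $f(X)\sim\mathcal{N}(0,1)$, $Y=f(X)$ satisfies the theorem's conditions yet gives $R^2_\phi=0<1=\rho_{Y,f}^2$. In practice the condition is guaranteed by adding the identity to every family, as the experiments do.
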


\begin{proof} We prove below the three caims of the lemma.
\paragraph{Claim (i):} From Theorem~\ref{thm:gppi-main}(i), the asymptotic variance is
\begin{equation}
V^* = \sigma_Y^2 - \frac{1}{1+r} \sigma_{Y,\phi}^T \Sigma_\phi^{-1} \sigma_{Y,\phi} = \sigma_Y^2 \left(1 - \frac{1}{1+r} \cdot \frac{\sigma_{Y,\phi}^T \Sigma_\phi^{-1} \sigma_{Y,\phi}}{\sigma_Y^2}\right).
\end{equation}
Define $R^2_\phi = \sigma_{Y,\phi}^T \Sigma_\phi^{-1} \sigma_{Y,\phi} / \sigma_Y^2$. This is the population $R^2$ from the best linear predictor of $Y$ given $\phi(f(X))$. By the Cauchy-Schwarz inequality, $R^2_\phi \in [0,1]$.

\paragraph{Claim (ii):} When $\phi(z) = z$, we have $\Sigma_\phi = \mathrm{Var}(f(X)) = \sigma_f^2$ and $\sigma_{Y,\phi} = \mathrm{Cov}(Y, f(X))$. Thus
\begin{equation}
R^2_\phi = \frac{\mathrm{Cov}(Y, f(X))^2}{\sigma_Y^2 \cdot \sigma_f^2} = \mathrm{Cor}(Y, f(X))^2 = \rho_{Y,f}^2.
\end{equation}

\paragraph{Claim (iii):} The asymptotic $(1-\alpha)$ confidence interval has half-width $W = z_{\alpha/2}\sqrt{V^*/n}$. For the classical estimator, $V_{\text{classical}} = \sigma_Y^2$, giving $W_{\text{classical}} = z_{\alpha/2}\sqrt{\sigma_Y^2/n}$. Substituting the variance decomposition from Claim (i):
\begin{equation}
W_{\text{GPPI}} = z_{\alpha/2}\sqrt{\sigma_Y^2(1 - R^2_\phi/(1+r))/n} = W_{\text{classical}}\sqrt{1 - \frac{R^2_\phi}{1+r}}.
\end{equation}
Similarly, $W_{\text{PPI}} = W_{\text{classical}}\sqrt{1 - \rho_{Y,f}^2/(1+r)}$ by Claim (ii).

Since $\phi(z) = z$ is a special case of the general transformation (taking $d=1$ and including only the linear term), and $R^2_\phi$ measures variance explained by a $d$-dimensional transformation while $\rho_{Y,f}^2$ measures variance explained by a 1-dimensional predictor, we have $R^2_\phi \geq \rho_{Y,f}^2$. Therefore $W_{\text{GPPI}} \leq W_{\text{PPI}} \leq W_{\text{classical}}$.

\textbf{Claim (iv):} Taking the ratio of the expressions in Claim (iii) yields the stated formula.
\end{proof}
\subsection{Matrix concentration inequalitites}

The two Lemmas we prove below are immediate corollaries of the Bernstein inequalities for symmetrical then rectangular matrices (Theorems 1.6.2 and 7.3.1 in \citep{tropp2015introduction}).

\begin{lemma}\label{lem:bernstein}
Let $Z_1, \ldots, Z_n$ be independent random vectors in $\Rbb^d$, satisfying for all $i \in [n]$ that $\|Z_i\|^2 \leq M$ and having the same covariance matrix $\Sigma$. Let $\hat{\Sigma} = \frac{1}{n} \sum_{i=1}^n Z_i Z_i^T$, then for all $t\geq 0$
\[
\Pbb \left\{ \|\hat{\Sigma} - \Sigma\|_{\text{op}} \geq t\right\} \leq 2d \exp\left( \frac{-nt^2}{8M(\|\Sigma\|_{\text{op}} + 2t/3)} \right)\;.
\]
In particular, if $d=\omega(1)$, $M = \Omega(1)$ and $M^2 \log d = o(n)$ then
\[
\|\hat{\Sigma} - \Sigma\|_{\text{op}} = O_p\left( \sqrt{\frac{M^2 \log d}{n}} \right)\;.
\]
\end{lemma}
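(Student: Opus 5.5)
We will obtain Lemma~\ref{lem:bernstein} by applying the matrix Bernstein inequality for sums of independent, centered, bounded self-adjoint matrices (Theorem 1.6.2 in \citep{tropp2015introduction}) to the summands
\[
S_i = \frac{1}{n}\left(Z_i Z_i^T - \Sigma\right), \qquad \hat{\Sigma} - \Sigma = \sum_{i=1}^n S_i .
\]
We read $\Sigma$ as the common second-moment matrix $\Ebb[Z_iZ_i^T]$ (the vectors are centered, or are the centered features $\phi - \mu_\phi$). Then the $S_i$ are independent, symmetric $d\times d$ matrices with $\Ebb[S_i]=0$. The dimensional factor $2d$ in the bound comes directly from that theorem.

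\textbf{Uniform bound.} Since $Z_iZ_i^T$ is rank one, $\|Z_iZ_i^T\|_{\text{op}} = \|Z_i\|^2 \le M$. Moreover $\|\Sigma\|_{\text{op}} \le \Ebb\|Z_i\|^2 \le M$. Together these give $\|S_i\|_{\text{op}} \le 2M/n =: L$.

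\textbf{Variance proxy.} We need to bound $v = \|\sum_i \Ebb[S_i^2]\|_{\text{op}}$. First,
\[
\Ebb[(Z_iZ_i^T-\Sigma)^2] = \Ebb[\|Z_i\|^2 Z_iZ_i^T] - \Sigma^2 \preceq M\Sigma ,
\]
so $v \le M\|\Sigma\|_{\text{op}}/n$. Bernstein's inequality then reads
\[
\Pbb\{\|\hat\Sigma-\Sigma\|_{\text{op}}\ge t\} \le 2d\exp\!\left(\frac{-t^2/2}{v+Lt/3}\right) \le 2d \exp\!\left(\frac{-nt^2}{2M\|\Sigma\|_{\text{op}} + 4Mt/3}\right).
\]
This is already at least as strong as the stated bound, whose denominator $8M(\|\Sigma\|_{\text{op}}+2t/3)$ is larger, since the exponent is negative. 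The constant $8$ leaves slack for a cruder bound on $\|S_i\|$, or for the case where $\Sigma$ is a covariance of non-centered vectors with an additional mean-estimation term.

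\textbf{Rate statement.} We set $t = C\sqrt{M^2\log d/n}$. Because $\|\Sigma\|_{\text{op}}\le M$, the denominator is at most $8M(M+2t/3)$. The assumption $M^2\log d = o(n)$ gives $t=o(1)$. Together with $M=\Omega(1)$, this makes $t \lesssim M$ eventually, so the exponent is at most $-c\,C^2\log d$. The tail is then $2d^{1-cC^2}$, which tends to $0$ for $C$ large since $d=\omega(1)$; this yields the $O_p$ claim.

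\textbf{Main obstacle.} The only delicate point is the variance computation: the semidefinite bound $\Ebb[\|Z\|^2ZZ^T]\preceq M\Sigma$ and dropping $-\Sigma^2\preceq0$. We will also need to keep the statement honest about whether $\Sigma$ denotes the covariance or the second moment; if it is the covariance of non-centered vectors, we would handle the extra centering term separately.
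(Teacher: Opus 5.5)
Your proposal is correct and follows essentially the same route as the paper. Both apply matrix Bernstein to $\frac{1}{n}(Z_iZ_i^T-\Sigma)$ with $L=2M/n$ and variance proxy $M\|\Sigma\|_{\text{op}}/n$ (via $\Ebb[\|Z_i\|^2Z_iZ_i^T]\preceq M\Sigma$), then take $t\asymp M\sqrt{\log d/n}$ for the rate. The only difference is bookkeeping: the paper gets the constant $8$ by centering, $\tilde Z_i=Z_i-\Ebb[Z_i]$ with $\|\tilde Z_i\|\le 2\sqrt{M}$, i.e.\ replacing $M$ by $4M$ in the centered bound, which is exactly the slack you anticipated; and your explicit reading of $\Sigma$ as $\Ebb[Z_iZ_i^T]$ is the interpretation under which the statement, with $\hat\Sigma=\frac1n\sum_i Z_iZ_i^T$, is actually true.
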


\begin{proof}
By the matrix Bernstein inequality \citep{tropp2015introduction}, for independent centered symmetric matrices $Q_1, \ldots, Q_n$ with $\|Q_i\|_{\text{op}} \leq L$ almost surely:
\[
\mathbb{P}\left\{\left\|\sum_{i=1}^n Q_i\right\|_{\text{op}} \geq t\right\} \leq 2d \exp\left(-\frac{t^2/2}{\sigma^2 + Lt/3}\right),
\]
where $\sigma^2 = \left\|\sum_{i=1}^n \Ebb[Q_i^2]\right\|_{\text{op}}$.

Define for all $i \in [n]$ the matrix $Q_i =  \frac{1}{n}(\tilde{Z}_i\tilde{Z}_i^T - \Sigma)$. The matrices $Q_i$ are centered, and symmetric. To use Bernstein inequality, we will prove first the bound $L$ on the operator norm almost surely, then the bound on $\sigma^2$.  

Using that $\|\tilde{Z}_i\|^2 \leq M$ a.s., by definition of the operator norm and Cauchy Schwarz inequality we obtain
\[
\|\tilde{Z}_i\tilde{Z}_i^T\|_{\text{op}}
= \sup_{\|u\|=1} u^T \tilde{Z}_i\tilde{Z}_i^Tu
= \sup_{\|u\|=1} (\tilde{Z}_i^T u)^2
\leq \sup_{\|u\|=1} (\|\tilde{Z}_i\| \|u\|)^2
\leq M \quad \text{a.s.}\;,
\]
and by Jensen's inequality $\|\Sigma\|_{\text{op}} = \|\Ebb[\tilde{Z}_i\tilde{Z}_i^T]\|_{\text{op}} \leq \Ebb \|\tilde{Z}_i\tilde{Z}_i^T\|_{\text{op}} \leq M$. Therefore $\|Q_i\|_{\text{op}} \leq \frac{1}{n}\|\tilde{Z}_i \tilde{Z}_i^T\|_{\text{op}} + \frac{1}{n}\|\Sigma\|_{\text{op}} \leq 2M/n$ a.s.

On the other hand, $\left\|\sum_{i=1}^n \Ebb[Q_i^2]\right\|_{\text{op}} \leq \sum_{i=1}^n \left\|\Ebb[Q_i^2]\right\|_{\text{op}} = \frac{1}{n^2} \sum_{i=1}^n \left\|\Ebb[(\tilde{Z}_i\tilde{Z}_i^T - \Sigma)^2]\right\|_{\text{op}}$, and for all $i \in [n]$ we have
\begin{align*}
\Ebb[(\tilde{Z}_i\tilde{Z}_i^T - \Sigma)^2]
&= \Ebb[(\tilde{Z}_i\tilde{Z}_i^T)^2] - \Sigma^2 \\
&\preceq \Ebb[(\tilde{Z}_i\tilde{Z}_i^T)^2]
= \Ebb[\tilde{Z}_i (\tilde{Z}_i^T \tilde{Z}_i) \tilde{Z}_i^T]
= \Ebb[\|\tilde{Z}_i\|^2  \tilde{Z}_i \tilde{Z}_i^T] \\
&\preceq M \Ebb[\tilde{Z}_i \tilde{Z}_i^T]
= M \Sigma\;.
\end{align*}
Therefore $\sigma^2 = \left\|\sum_{i=1}^n \Ebb[Q_i^2]\right\|_{\text{op}} \leq M  \|\Sigma\|_{\text{op}}/n$

Therefore, using Bernstein's inequality with $L = 2M/n$ and $\sigma^2 \leq M  \|\Sigma\|_{\text{op}}/n$ gives
\[
\Pbb \left\{ \|\hat{\Sigma} - \Sigma\|_{\text{op}} \geq t\right\} \leq 2d \exp\left( \frac{-nt^2}{2M(\|\Sigma\|_{\text{op}} + 2t/3)} \right)\;.
\]
which proves the concentration inequality for centered random variables.

Now in the general case, if we do not assume that $(Z_i)_i$ are centered, then we define for all $i$ the centered random variable $\tilde{Z}_i = Z_i - \Ebb[Z_i]$. With the triangle then Jensen's inequality we have that $\|\tilde{Z}_i\| \leq 2 \sqrt{M}$, and the previous result holds by replacing $M$ with $4 M$:
\[
\Pbb \left\{ \|\hat{\Sigma} - \Sigma\|_{\text{op}} \geq t\right\} \leq 2d \exp\left( \frac{-nt^2}{8M(\|\Sigma\|_{\text{op}} + 2t/3)} \right)\;,
\]
which proves the claimed concentration bound.

Using that $\|\Sigma\|_{\text{op}} \leq M$ as we proved earlier, we also have that 
\[
\Pbb \left\{ \|\hat{\Sigma} - \Sigma\|_{\text{op}} \geq t\right\} \leq 2d \exp\left( \frac{-nt^2}{2M(M + 2t/3)} \right)\;.
\]
Now assuming that $M = \Omega(1)$ and $M^2 \log d = o(n)$, taking $t = \frac{4 M \sqrt{\log d}}{\sqrt{n}}$ gives
\begin{align*}
\Pbb \left\{ \|\hat{\Sigma} - \Sigma\|_{\text{op}} \geq \frac{4 M \sqrt{\log d}}{\sqrt{n}}\right\} 
&\leq 2d \exp\left( \frac{- 16M^2 \log d}{8M(M + o(1))} \right)\\
&= 2d \exp\left( - (2-o(1)) \log d \right)
= 2/d^{1-o(1)} = o(1)\;,
\end{align*}
hence 
\[
\|\hat{\Sigma} - \Sigma\|_{\text{op}} = O_p\left( \sqrt{\frac{M^2 \log d}{n}} \right)\;.
\]
\end{proof}

\begin{lemma}\label{lem:conc-covariance-vector}
Let $Z_1,\dots,Z_n$ be independent random vectors in $\mathbb{R}^d$ such that $\|Z_i\|^2 \leq M$ a.s. and $Y_1,\ldots,Y_n$ be real-valued random variables such that $Y_i^2 \leq K_Y$ a.s. Then for $t \geq 11 \sqrt{\frac{K_Y M}{n}}$,
\[
\mathbb{P}\{\|\widehat{\text{Cov}}(Y,Z) - \text{Cov}(Y,Z)\| \geq t\} \leq 4(d+1) \exp\left(\frac{-n t^2}{16 (8 K_Y M + \sqrt{K_Y M}t/3)}\right)\;.
\]
\end{lemma}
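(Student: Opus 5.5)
The plan is to decompose the sample cross-covariance into a vector sum of centered independent terms plus a product of two sample-mean errors, and to handle each piece with the rectangular matrix Bernstein inequality (Theorem 7.3.1 in \citep{tropp2015introduction}), mirroring the proof of Lemma~\ref{lem:bernstein}. Write $\mu_Y=\Ebb[Y]$, $\mu_Z=\Ebb[Z]$, $\tilde Y_i=Y_i-\mu_Y$ and $\tilde Z_i=Z_i-\mu_Z$. Then $|\tilde Y_i|\le 2\sqrt{K_Y}$ and $\|\tilde Z_i\|\le 2\sqrt M$ a.s., by the triangle and Jensen inequalities exactly as in Lemma~\ref{lem:bernstein}. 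Taking the $1/n$ normalization (the $1/(n-1)$ version differs by a factor $n/(n-1)$ and is absorbed in constants), we have
\[
\widehat{\text{Cov}}(Y,Z)-\text{Cov}(Y,Z)=\frac1n\sum_{i=1}^n\bigl(\tilde Y_i\tilde Z_i-\Ebb[\tilde Y\tilde Z]\bigr)-(\bar Y_n-\mu_Y)(\bar Z_n-\mu_Z).
\]
We therefore bound each of the two terms at level $t/2$ and combine them with a union bound.

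\textbf{Step 1 (main sum).} Treat $Q_i=\frac1n(\tilde Y_i\tilde Z_i-\Ebb[\tilde Y\tilde Z])$ as independent centered $d\times 1$ matrices. Their operator norm is the Euclidean norm, so $\|Q_i\|\le \frac{8\sqrt{K_YM}}{n}=:L$. For the variance proxy, the two second-moment quantities are $\sum\Ebb[Q_iQ_i^T]\preceq \frac1n\Ebb[\tilde Y^2\tilde Z\tilde Z^T]$, whose operator norm is at most $16K_YM/n$, and $\sum\Ebb[Q_i^TQ_i]\le 16K_YM/n$. Rectangular Bernstein with dimension $d+1$ then gives
\[
\Pbb\{\|\textstyle\sum Q_i\|\ge t/2\}\le (d+1)\exp\!\Bigl(\frac{-(t/2)^2/2}{16K_YM/n+L t/6}\Bigr),
\]
which is of the form $(d+1)\exp\bigl(-nt^2/(16(8K_YM+c\sqrt{K_YM}\,t))\bigr)$. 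The constants need to be tracked so that they land on $8K_YM$ and $\sqrt{K_YM}t/3$. Loose centering bounds may require slightly adjusting $L$, and I would fix the constants last.

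\textbf{Step 2 (product term).} Apply scalar Hoeffding or Bernstein to $\bar Y_n-\mu_Y$, and vector Bernstein (again Theorem 7.3.1 with a $d\times1$ matrix) to $\bar Z_n-\mu_Z$. Each error is at most of order $\sqrt{K_Y}\,s$ and $\sqrt M\,s$ respectively, except on events with probability bounded by an exponential in $ns^2$. Their product is then at most $\sqrt{K_YM}\,s^2$. Requiring this to be $\le t/2$ and choosing $s^2\asymp t/\sqrt{K_YM}$ gives a failure probability of the form $(d+1)\exp(-c\,nt/\sqrt{K_YM})$. This is dominated by the Step 1 bound, because $nt^2/(K_YM+\sqrt{K_YM}t)\lesssim nt/\sqrt{K_YM}$. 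After the union bound the total prefactor becomes $4(d+1)$.

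\textbf{Main obstacle.} The real difficulty is the bookkeeping in Step 2 together with the side condition $t\ge 11\sqrt{K_YM/n}$. This lower bound on $t$ is exactly what makes the product term, which is of order $K_YM/n$ in typical size, negligible relative to $t$. It also lets the deviation bounds for the means be written in the same exponential form as Step 1. I would first try to show directly that under $t\ge 11\sqrt{K_YM/n}$ the choice $s=\sqrt{t/(2\sqrt{K_YM})}$ makes the product-term failure probability at most $2(d+1)\exp(-nt^2/(16(8K_YM+\sqrt{K_YM}t/3)))$. I would then verify the numerical constant $11$ by checking the inequality at the boundary value of $t$.
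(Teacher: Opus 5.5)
Your route is valid in outline but differs from the paper's, and on one point it is more careful. The paper applies the intrinsic-dimension Bernstein inequality (Tropp's Theorem~7.3.1, rectangular form) once, at level $t$, to $\sum_i S_i$ with $S_i=\frac1n(\tilde Y_i\tilde Z_i-\Ebb[\tilde Y_1\tilde Z_1])$. It uses the crude proxies $L=8\sqrt{K_YM}/n$ and $v=64K_YM/n$. Then $\frac{t^2/2}{v+Lt/3}$ is exactly the stated exponent, and $4(d+1)$ is $4\,\mathrm{intdim}(V)$. The constant $11$ is only that inequality's validity threshold: $t\ge\sqrt{v}+L/3=8\sqrt{K_YM/n}+\frac{8\sqrt{K_YM}}{3n}\le\frac{32}{3}\sqrt{K_YM/n}$. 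The paper never meets your product term, because it asserts $\widehat{\text{Cov}}(Y,Z)=\frac1n\sum_i\tilde Y_i\tilde Z_i$, i.e.\ it centers at the population means. Your decomposition is what the sample-mean-centered estimator \eqref{eq:sigma-y-phi-hat} actually requires.

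Your reading of the $11$ is off, though, and Step~2 does not need it. By Cauchy--Schwarz the deviation is a.s.\ at most $\|\widehat{\text{Cov}}(Y,Z)\|+\|\text{Cov}(Y,Z)\|\le 2\sqrt{K_YM}$. So on the nontrivial range the target exponent is at most $nt^2/(128K_YM)\le nt/(64\sqrt{K_YM})$. Your choice $u^2=t/(2\sqrt{K_YM})$ gives Step~2 exponents of order $nt/\sqrt{K_YM}$; Hoeffding plus vector Bernstein give at least $3nt/(20\sqrt{K_YM})$. Step~2 therefore holds for every $t$, and no boundary check is needed.

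What the split at $t/2$ costs you is constants, and your proxies are too loose for it. With $v=16K_YM/n$ and $L=8\sqrt{K_YM}/n$, Step~1 gives the exponent $\frac{nt^2}{16(8K_YM+\frac23\sqrt{K_YM}t)}$, which is strictly weaker than claimed. The ratio of the two exponentials grows with $n$, so the spare prefactor cannot absorb the difference.

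The fix is to use $\Ebb\|\tilde Z\|^2\le M$ instead of the a.s.\ bound $4M$. Then $\Ebb[\tilde Y^2\|\tilde Z\|^2]\le 4K_YM$, so $v\le 4K_YM/n$. The exponent becomes $\frac{nt^2}{32K_YM+\frac{32}{3}\sqrt{K_YM}t}$, which dominates the target whenever $t\le 18\sqrt{K_YM}$, hence on the whole nontrivial range. The union bound then totals $2(d+1)+2\le 4(d+1)$.

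Two smaller points. First, the $(d_1+d_2)$-prefactor inequality valid for all $t$ is Tropp's Theorem~6.1.1, not 7.3.1. Second, because the constants are explicit, the $1/(n-1)$ normalization cannot simply be ``absorbed''. You must spend some of the slack just created on the deterministic bias $\frac{1}{n-1}\text{Cov}(Y,Z)$. That is where the lower bound on $t$ becomes convenient, since it makes this bias a vanishing fraction of $t$.
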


\begin{proof}
Define the centered variables $\tilde{Y}_i = Y_i - \mathbb{E}[Y_1]$ and  $\tilde{Z}_i = Z_i - \mathbb{E}[Z_1]$, then:
\[
\widehat{\text{Cov}}(Y,Z) = \frac{1}{n}\sum_{i=1}^n \tilde{Y}_i \tilde{Z}_i, \quad \text{Cov}(Y,Z) = \mathbb{E}[\tilde{Y}_1 \tilde{Z}_1]
\]

Define $S_i = \frac{1}{n}(\tilde{Y}_i \tilde{Z}_i - \mathbb{E}[\tilde{Y}_1 \tilde{Z}_1]) \in \mathbb{R}^{d \times 1}$ for $i = 1,\ldots,n$. Then:
\[
\widehat{\text{Cov}}(Y,Z) - \text{Cov}(Y,Z) = \sum_{i=1}^n S_i
\]
We will use for our proof the Intrinsic Matrix Bernstein inequality (Theorem 7.3.1 in \citep{tropp2015introduction}). Let us first verify that $(S_i)_i$ satisfy the assumptions of the theorem.

\textbf{1. Zero mean}
$\mathbb{E}[S_i] = \frac{1}{n}(\mathbb{E}[\tilde{Y}_i \tilde{Z}_i] - \mathbb{E}[\tilde{Y}_1 \tilde{Z}_1]) = 0$.

\textbf{2. Uniform bound}
We need to bound $\|\tilde{Y}_i \tilde{Z}_i\|_2 = |\tilde{Y}_i|\|\tilde{Z}_i\|_2$.

By triangle inequality then Jensen inequality we have:
$|\tilde{Y}_i| = |Y_i - \mathbb{E}[Y_1]| \leq |Y_i| + \mathbb{E}[|Y_1|] \leq 2\sqrt{K_Y}$ a.s., and similarly $\|\tilde{Z}_i\|_2 \leq  2\sqrt{M}$ a.s. Therefore:
\[
\|\tilde{Y}_i \tilde{Z}_i\| \leq 2\sqrt{K_Y} \cdot 2\sqrt{M} = 4\sqrt{K_Y M} \quad \text{a.s.}
\]
And by Jensen's inequality $\|\mathbb{E}[\tilde{Y}_1 \tilde{Z}_1]\| \leq 4\sqrt{K_Y M}$.

Thus:
\[
\|S_i\|_{\text{op}} \leq \frac{1}{n}(4\sqrt{K_Y M} + 4\sqrt{K_Y M}) = \frac{8\sqrt{K_Y M}}{n} := L
\]

\textbf{3. Matrix-valued variances}
We need to prove semidefinite upper bounds on $\mathbb{E}[S_i S_i^T]$ and $\mathbb{E}[S_i^T S_i]$. We have immediately that
\[
\sum_{i=1}^n \mathbb{E}[S_i S_i^T] \preceq \sum_{i=1}^n \mathbb{E}[\|S_i\|_2^2] I_d \preceq \frac{64 K_Y M}{n} I_d
\]
and $S_i^T S_i \in \Rbb$ and satisfies
\[
\sum_{i=1}^n \mathbb{E}[S_i^T S_i] = \sum_{i=1}^n \mathbb{E}[\|S_i\|_2^2] \leq \frac{64 K_Y M}{n}
\]

\textbf{4. Apply Intrinsic Matrix Bernstein}
Therefore, using the intrinsic Matrix Bernstein inequality (Theorem 7.3.1 in \citep{tropp2015introduction}) with $v = \frac{64 K_Y M}{n}$ and $L = \frac{8 \sqrt{K_Y M}}{n}$, we obtain for all $t \geq \frac{8 \sqrt{K_Y M}}{\sqrt{n}} + \frac{8 \sqrt{K_Y M}}{3 n}$:
\[
\mathbb{P}\{\|\widehat{\text{Cov}}(Y,Z) - \text{Cov}(Y,Z)\|_2 \geq t\} \leq 4(d+1) \exp\left(\frac{-t^2/2}{v + Lt/3}\right)
\]
It follows that for all $t \geq 11 \sqrt{\frac{K_Y M}{n}}$:
\[
\mathbb{P}\{\|\widehat{\text{Cov}}(Y,Z) - \text{Cov}(Y,Z)\|_2 \geq t\} \leq 4(d+1) \exp\left(\frac{-n t^2}{16 (8 K_Y M + \sqrt{K_Y M}t/3)}\right)
\]
\end{proof}
\subsection{Adaptive Transformation Selection}

This section provides the proof of Theorem~\ref{thm:basis-selection}, which establishes that empirical variance minimization asymptotically selects the optimal transformation.

\subsubsection{Notation and Setup}
We remind here the notation and assumptions used in the proof.
For each transformation $\phi^j: \mathbb{R} \to \mathbb{R}^{d^j}$ with $j \in [m]$, define:
\begin{itemize}[leftmargin=*]
\item Population quantities: $\Sigma_j = \mathrm{Cov}(\phi^j(f(X)))$, $\sigma_{Y,j} = \mathrm{Cov}(Y, \phi^j(f(X)))$
\item Optimal weight: $\lambda^*_j = \frac{1}{1+r} \Sigma_j^{-1} \sigma_{Y,j}$
\item Asymptotic variance: $V_j = \sigma_Y^2 - \frac{1}{1+r} \sigma_{Y,j}^T \Sigma_j^{-1} \sigma_{Y,j}$
\item Sample estimators: $\hat{\Sigma}_j$, $\hat{\sigma}_{Y,j}$, $\hat{\lambda}_j$, $\hat{V}_j$ as defined in Section~\ref{sec:basis-selection}
\end{itemize}

Let $j^* = \arg\min_{j \in [m]} V_j$ denote the oracle-optimal transformation and $\hat{j} = \arg\min_{j \in [m]} \hat{V}_j$ the empirically selected transformation.

\begin{assumption}
\label{assump:basis-selection}
For all $j \in [m]$:
\begin{enumerate}[label=(\roman*), leftmargin=*]
\item $\|\phi^j(f(X))\|^2 \leq  M $ almost surely, where $ M $ is a uniform bound
\item $Y^2 \leq K_Y$ almost surely
\item $\Sigma_j = \mathrm{Cov}(\phi^j(f(X))) \succ 0$ (positive definite)
\item $d^j \leq d$ for all $j$, where $d$ is a fixed constant
\item $d \log(m) = o(n)$
\end{enumerate}
\end{assumption}

\subsubsection{Preliminary lemmas}

\begin{lemma}[Uniform Covariance Concentration]
\label{lem:uniform-cov-conc}
Under Assumption~\ref{assump:basis-selection}, for any $\delta \in (0,1)$, we have for $N$ sufficiently large that
\[
\Pbb \left\{ \max_{j \in [m]}\|\hat{\Sigma}_j - \Sigma_j\|_{\text{op}} \geq 3\sqrt{\frac{ M^2 \log(2dm/\delta)}{N}} \right\} 
\leq \delta
\]
\end{lemma}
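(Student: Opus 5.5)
The plan is to apply Lemma~\ref{lem:bernstein} to each candidate transformation separately, then take a union bound over the $m$ candidates.

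\textbf{Setup for a single $j$.} Take $Z_k = \phi^j(f(\tilde X_k))$ for $k\in[N]$. These vectors are i.i.d., lie in $\Rbb^{d^j}$ with $d^j\le d$, and satisfy $\|Z_k\|^2\le M$ almost surely. Lemma~\ref{lem:bernstein} concerns the uncentered average $\frac1N\sum_k \tilde Z_k\tilde Z_k^T$ with $\tilde Z_k = Z_k-\Ebb[Z_k]$. The estimator $\hat\Sigma_j$ in \eqref{eq:sigma-phi-hat}, however, centers at the sample mean $\bar\phi^U_N$ and normalizes by $N-1$. I will therefore decompose
\[
\hat\Sigma_j=\tfrac{N}{N-1}\Big(\tfrac1N\textstyle\sum_k\tilde Z_k\tilde Z_k^T-(\bar\phi^U_N-\mu_j)(\bar\phi^U_N-\mu_j)^T\Big),
\]
where $\mu_j=\Ebb[\phi^j(f(X))]$. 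This splits the error into three pieces:
\begin{enumerate*}[label=(\alph*)]
\item the Bernstein term,
\item the rank-one term $\|\bar\phi^U_N-\mu_j\|^2$,
\item the factor $\frac{1}{N-1}\hat\Sigma$ from the normalization.
\end{enumerate*}

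\textbf{Bounding each piece.} For (a), I use the tail bound of Lemma~\ref{lem:bernstein} together with $\|\Sigma_j\|_{\text{op}}\le M$. This gives
\[
\Pbb\{\|\cdot\|_{\text{op}}\ge t\}\le 2d\exp\!\big(-Nt^2/(8M(M+2t/3))\big).
\]
I set this equal to $\delta/(2m)$ and choose $t=\sqrt{c\,M^2\log(4dm/\delta)/N}$. For $N$ large enough that $t\le M$, the denominator is at most $8M\cdot\frac53 M$, so a constant like $c\approx 14$ suffices up to adjustment. I expect the stated constant $3$ and the argument $\log(2dm/\delta)$ to come out only after tuning, or possibly after using the sharper centered version of the lemma (denominator $2M$ rather than $8M$). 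I will not insist on the exact constant at first; I will aim to show the bound has the form $C\sqrt{M^2\log(2dm/\delta)/N}$, then tighten.

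For (b), the vector $\bar\phi^U_N-\mu_j$ concentrates as well. The simplest route is to apply Lemma~\ref{lem:conc-covariance-vector} with $Y\equiv 1$, or a vector Bernstein / bounded-differences argument. Either gives $\|\bar\phi^U_N-\mu_j\|^2=O(M\log(dm/\delta)/N)$ with probability at least $1-\delta/(2m)$. This is of order $1/N$ rather than $1/\sqrt N$, so for $N$ sufficiently large it is absorbed into the leading term. The same holds for (c), which contributes at most $\frac{1}{N-1}\cdot 2M$. This is exactly where the hypothesis ``for $N$ sufficiently large'' is used.

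\textbf{Union bound and main obstacle.} Finally I take a union bound over $j\in[m]$ and over the two events (a) and (b) for each $j$, for a total failure probability of at most $\delta$. The main obstacle is bookkeeping the constants so that the leading term matches $3\sqrt{M^2\log(2dm/\delta)/N}$ while the lower-order terms are dominated. I would first try the centered-variable form of the Bernstein bound with $L=2M/N$ and variance proxy $M^2/N$. That yields $t^2\approx 2M^2\log(2dm/\delta)/N\cdot(1+o(1))$, comfortably below $9M^2\log(\cdot)/N$. The slack then absorbs the mean-centering and normalization corrections, as well as the distinction between $4dm/\delta$ and $2dm/\delta$ that the union bound over two events introduces.
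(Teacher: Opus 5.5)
Your skeleton is the paper's: Lemma~\ref{lem:bernstein} for each $\phi^j$ on the unlabeled sample, a union bound over $j\in[m]$, and $N$ large enough that $t/M\to0$.

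Your decomposition $\hat\Sigma_j-\Sigma_j=\tfrac{N}{N-1}(A_j-\Sigma_j)+\tfrac{1}{N-1}\Sigma_j-\tfrac{N}{N-1}(\bar\phi^U_N-\mu_j)(\bar\phi^U_N-\mu_j)^T$, with $A_j=\frac1N\sum_k(Z_k-\mu_j)(Z_k-\mu_j)^T$, is more careful than the paper. The paper's proof applies Lemma~\ref{lem:bernstein} directly to $\hat\Sigma_j$, and so silently ignores the sample-mean centering and the $N-1$ normalization in \eqref{eq:sigma-phi-hat}. That shortcut lets the paper spend all of $\delta$ on the Bernstein events alone. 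Once you add the second event (b), that is no longer possible, and this is where your final step breaks.

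The sharper ``centered form'' ($L=2M/N$, variance proxy $M^2/N$) is only valid when $\|Z_k-\mu_j\|^2\le M$. You only know $\|Z_k-\mu_j\|\le 2\sqrt M$, which is exactly why Lemma~\ref{lem:bernstein} carries $8M$. At $t=3M\sqrt{\log(2dm/\delta)/N}$ the exponent is therefore $\frac98\log(2dm/\delta)(1-o(1))$, a margin of $9/8$ rather than roughly $9/2$.

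With your even split, event (a) needs exponent at least $\log(4dm/\delta)$. That means $\frac18\log(2dm/\delta)\ge\log 2$, i.e.\ $2dm/\delta\ge 2^8$, which fails for every $N$ when, e.g., $d=m=1$ and $\delta=1/2$.

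The repair is an uneven split:
\begin{itemize}
\item Run (a) at threshold $(3-\epsilon)M\sqrt{\log(2dm/\delta)/N}$ with total budget $(1-\eta)\delta$.
\item Run (b) with total budget $\eta\delta$.
\end{itemize}
Since $\log(2dm/\delta)\ge\log 2$, any fixed constants with $\big(\frac{(3-\epsilon)^2}{8}-1\big)\log 2>\log\frac{1}{1-\eta}$ make (a) hold for $N$ large; for example $\epsilon=0.05$ and $\eta=0.02$ work. The leftover $\epsilon M\sqrt{\log(2dm/\delta)/N}$ then absorbs the $O(1/N)$ contributions of (b) and (c).

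For (b), do not use Lemma~\ref{lem:conc-covariance-vector} with $Y\equiv 1$. That lemma centers $Y$, so with $Y$ constant the quantity it bounds is identically zero. Markov suffices: $\Ebb\|\bar\phi^U_N-\mu_j\|^2\le M/N$, so $\|\bar\phi^U_N-\mu_j\|^2\le mM/(\eta\delta N)$ with probability at least $1-\eta\delta/m$, which is $O(1/N)$ for fixed $m,\delta,\eta$.
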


\begin{proof}
Let $j \in [M]$, the random vectors $(\phi^j(f(\tilde{X}_i)))_i$ are independent and have their norm bounded by $M$ almost surely. Therefore, by union-bound and Lemma \ref{lem:bernstein} we have for all $t\geq$ that
\[
\Pbb \left\{ \max_{j \in [m]}\|\hat{\Sigma}_j - \Sigma_j\|_{\text{op}} \geq t\right\} 
\leq \sum_{j=1}^m \Pbb \left\{ \|\hat{\Sigma}_j - \Sigma_j\|_{\text{op}} \geq t\right\} \leq 2d m \exp\left( \frac{-Nt^2}{8M(M + 2t/3)} \right)\;.
\]
Let $\delta > 0$, and $t = 3\sqrt{\frac{ M^2 \log(2dm/\delta)}{N}}$. For $N$ sufficiently large we have $\sqrt{\frac{4 \log(2dm/\delta)}{N}} \leq 1$ and
\begin{align*}
\Pbb \left\{ \max_{j \in [m]}\|\hat{\Sigma}_j - \Sigma_j\|_{\text{op}} \geq  3\sqrt{\frac{ M^2 \log(2dm/\delta)}{N}} \right\} 
&\leq 2d m \exp\left( \frac{-Nt^2}{8M(M + 2t/3)} \right) \\
&= 2d m \exp\left( \frac{-9 \log(2dm/\delta)}{8\left(1 + \sqrt{\frac{4 \log(2dm/\delta)}{N}}\right)} \right) \\
&\leq 2d m \exp\left( - \log(2dm/\delta) \right) = \delta
\end{align*}

\end{proof}

\begin{lemma}[Uniform Cross-Covariance Concentration]
\label{lem:uniform-cross-cov-conc}
Under Assumption~\ref{assump:basis-selection}, for any $\delta \in (0,1)$, we have for $N$ sufficiently large that
\[
\Pbb \left\{ \max_{j \in [m]}\|\hat{\sigma}_{Y,j} - \sigma_{Y,j}\|_2 \geq 3\sqrt{\frac{2 K_Y M \log(2md/\delta)}{n}} \right\} 
\leq \delta \;.
\]
\end{lemma}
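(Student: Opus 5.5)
The plan mirrors the proof of Lemma~\ref{lem:uniform-cov-conc}: apply a single-transformation tail bound for each $j$, take a union bound over $j\in[m]$, then choose $t$ so that the total failure probability is at most $\delta$.

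For a fixed $j \in [m]$, the cross-covariance estimator $\hat{\sigma}_{Y,j}$ is computed from the $n$ labeled pairs $(Y_i, \phi^j(f(X_i)))$. These pairs are i.i.d. By Assumption~\ref{assump:basis-selection}, $\|\phi^j(f(X_i))\|^2 \leq M$ and $Y_i^2 \leq K_Y$ almost surely, and $d^j \leq d$. So Lemma~\ref{lem:conc-covariance-vector} applies with $Z_i = \phi^j(f(X_i))$. For $t \geq 11\sqrt{K_Y M/n}$ it gives
\[
\Pbb\{\|\hat{\sigma}_{Y,j} - \sigma_{Y,j}\|_2 \geq t\} \leq 4(d+1)\exp\left(\frac{-nt^2}{16(8K_Y M + \sqrt{K_Y M}\,t/3)}\right).
\]
The normalization differs between $\hat{\sigma}_{Y,\phi}$ in \eqref{eq:sigma-y-phi-hat} and the $1/n$-form with population-centered variables in Lemma~\ref{lem:conc-covariance-vector}. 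I will absorb this into the constant:
\begin{itemize}
\item Replacing population means by sample means changes the estimator by $(\bar Y_n - \Ebb Y)(\bar\phi^L_n - \mu_\phi)$, which has norm $O(\sqrt{K_Y M}\log(1/\delta)/n)$ with high probability by scalar and vector Hoeffding bounds.
\item The factor $n/(n-1)$ contributes $O(\sqrt{K_YM}/n)$.
\end{itemize}
Both terms are of lower order than $\sqrt{\log/n}$ and can be absorbed for $n$ large.

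A union bound over $j\in[m]$ then gives total failure probability at most $4m(d+1)\exp(\cdots)$. I take $t = 3\sqrt{2K_YM\log(2md/\delta)/n}$. With this choice $t \to 0$, so for $n$ large the term $\sqrt{K_YM}\,t/3$ is negligible next to $8K_YM$. The exponent is then roughly $-nt^2/(128 K_Y M) = -18\log(2md/\delta)/128$.

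This constant is where I expect the main obstacle. The constant $3\sqrt{2}$ is too small to beat the prefactor $4m(d+1)$ with the denominator $128K_YM$ from Lemma~\ref{lem:conc-covariance-vector}. To reach $\delta$, I would first try to sharpen the variance proxy in Lemma~\ref{lem:conc-covariance-vector}. The crude bound $v = 64K_YM/n$ came from bounding $\|S_i\|$ by $L$; instead, use $\Ebb\|\tilde Y\tilde Z\|^2 \leq \Ebb[\tilde Y^2]\,\|\tilde Z\|_\infty^2 \leq 4K_Y M$, so $v \leq 4K_YM/n$. Then $-nt^2/2/(4K_YM) = -9\log(2md/\delta)$, which comfortably beats $4m(d+1) \le 8md$ once $2md/\delta \geq 2$. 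I will also check that the threshold condition $t\geq 11\sqrt{K_YM/n}$ holds; this needs $\log(2md/\delta)$ bounded below, which is true since $2md/\delta > 2$. If the constant still does not close exactly, I would state the bound with a generic absolute constant in place of $3\sqrt2$, which is all that Theorem~\ref{thm:basis-selection}(iii) needs given its constant $C$.
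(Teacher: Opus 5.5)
Your skeleton is the same as the paper's: Lemma~\ref{lem:conc-covariance-vector} for each fixed $j$, a union bound over $j\in[m]$, then a choice of $t$. The real difference is at the point you flagged, and your diagnosis is right.

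The paper keeps the crude proxy $v=64K_YM/n$. It sets $t=C\sqrt{K_YM\log(2md/\delta)/n}$ with $C\ge 11$, to meet the lemma's threshold, and then $C\ge 18$, so that $C^2/160\ge 2$. What this establishes is the bound with constant $18$, not $3\sqrt{2}=\sqrt{18}$. So the paper's argument does not actually yield the stated constant. This is harmless for Theorem~\ref{thm:basis-selection}(iii), which only needs some constant, and it is exactly what your fallback would give.

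Your sharper proxy $v\le 4K_YM/n$ is what genuinely reaches $3\sqrt{2}$. It comes from using $\mathbb{E}[\tilde Y^2]\le K_Y$ instead of the squared almost-sure bound. Your handling of the sample centring and the $1/(n-1)$ factor also fills a step the paper skips. The paper applies the lemma to the population-centred average $\frac{1}{n}\sum_i\tilde Y_i\tilde Z_i$, which is not $\hat\sigma_{Y,j}$.

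Two computations need repair.

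\emph{The exponent.} You have $\frac{t^2/2}{v}=\frac{9K_YM\log(2md/\delta)/n}{4K_YM/n}=\frac{9}{4}\log(2md/\delta)$, not $9\log(2md/\delta)$. Set $x=2md/\delta$ and use $4m(d+1)\le 8md=4\delta x$. The union bound is then at most $4\delta x^{-5/4+o(1)}$.
\begin{itemize}
\item When $md\ge 2$, this is at most $4^{-1/4+o(1)}\delta$. That leaves room to split $\delta$ with your centring events.
\item When $m=d=1$, it exceeds $\delta$ once $\delta>2^{-3/5}$.
\end{itemize}

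\emph{The threshold.} The condition $t\ge 11\sqrt{K_YM/n}$ would need $\log(2md/\delta)\ge 121/18$. That does not follow from $2md/\delta>2$; the $11$ was produced by the old $v$. With your $v$, the intrinsic-Bernstein threshold is $\sqrt{v}+L/3=2\sqrt{K_YM/n}+O(1/n)$. This only requires $\log(2md/\delta)\ge 2/9+o(1)$.

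Simpler still is the ordinary rectangular matrix Bernstein inequality: Tropp's Theorem 1.6.2, with prefactor $d+1$, valid for all $t\ge 0$. Since $m(d+1)\le\delta x$, the union bound is at most $\delta x^{-5/4+o(1)}\le 2^{-5/4+o(1)}\delta$ for all $m,d\ge 1$. There is then no threshold condition and no edge case.
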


\begin{proof}
For each $j \in [m]$, we apply Lemma~\ref{lem:conc-covariance-vector} with $Z_i = \phi^j(f(X_i))$ and $Y_i$. By Assumption~\ref{assump:basis-selection}(i) and (ii), we have $\|\phi^j(f(X_i))\|^2 \leq M$ a.s. and $Y_i^2 \leq K_Y$ a.s.

By Lemma~\ref{lem:conc-covariance-vector}, for $t \geq 11\sqrt{\frac{K_Y M}{n}}$:
\[
\mathbb{P}\{\|\hat{\sigma}_{Y,j} - \sigma_{Y,j}\|_2 \geq t\} \leq 4(d^j+1) \exp\left(\frac{-n t^2}{16 (8 K_Y M + \sqrt{K_Y M}t/3)}\right)
\]

Set $t = C\sqrt{\frac{K_Y M \log(2md/\delta)}{n}}$ with $C \geq 11$. Assuming that $d\geq 2$ or $m \geq 2$ or $\delta \leq 1/2$ we have $\log(2md/\delta) \geq 1$, hence $t \geq 11\sqrt{\frac{K_Y M}{n}}$. For $N$ sufficiently large we have $\frac{\sqrt{K_Y M}}{3}t \leq 2 K_Y M$, we have:
\[
\frac{nt^2}{16(8 K_Y M + \sqrt{K_Y M}t/3)} \geq \frac{C^2 \log(2md/\delta)}{160}\;,
\]
and taking $C^2 \geq 320$ (i.e. $C \geq 18$) ensures:
\begin{align*}
\mathbb{P}\{\|\hat{\sigma}_{Y,j} - \sigma_{Y,j}\|_2 \geq t\} 
&\leq 4(d+1) \exp(-2\log(2md/\delta)) \\
&= \frac{4(d+1)}{(2md/\delta)^{2}}\;.
\end{align*}
By the union bound over all $j \in [m]$:
\begin{align*}
\mathbb{P}\left\{\max_{j \in [m]}\|\hat{\sigma}_{Y,j} - \sigma_{Y,j}\|_2 \geq t\right\} &\leq \frac{4 m(d+1)}{(2md/\delta)^{2}} 
\leq \frac{8 md}{(2md/\delta)^{2}} \\
&= \frac{2 \delta^2}{md}
= \delta\;.
\end{align*}
where the last inequality is true when $md \geq 2\delta$, for example if $d\geq 2$ or $m\geq 2$. This completes the proof.
\end{proof}

\begin{lemma}[Uniform Weight Estimation Error]
\label{lem:uniform-weight-error}
Under Assumption~\ref{assump:basis-selection}, for any $\delta \in (0,1)$ and $n$ sufficiently large:
\[
\Pbb \left\{ \max_{j \in [m]} \|\hat{\lambda}_j - \lambda^*_j\| \geq C\sqrt{\frac{K_Y M^3 \log(md/\delta)}{n}} \right\} \leq \delta
\]
for some universal constant $C > 0$.
\end{lemma}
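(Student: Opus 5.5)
The plan is a standard perturbation bound for the linear solve $\hat{\lambda}_j = \frac{1}{1+r_n}\hat{\Sigma}_j^{-1}\hat{\sigma}_{Y,j}$. We work on the intersection of the two high-probability events from Lemma~\ref{lem:uniform-cov-conc} and Lemma~\ref{lem:uniform-cross-cov-conc}, each taken at level $\delta/2$, so the intersection has probability at least $1-\delta$. On this event, simultaneously for all $j\in[m]$,
\[
\|\hat{\Sigma}_j-\Sigma_j\|_{\text{op}} \le \epsilon_\Sigma := 3\sqrt{\tfrac{M^2\log(4dm/\delta)}{N}},
\qquad
\|\hat{\sigma}_{Y,j}-\sigma_{Y,j}\| \le \epsilon_\sigma := 3\sqrt{\tfrac{2K_YM\log(4md/\delta)}{n}}.
\]
Because $n/N\to r>0$, we can write $\epsilon_\Sigma \lesssim M\sqrt{\log(md/\delta)/n}$. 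We also absorb the $\log 4$ versus $\log 2$ difference into the constant.

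Next we decompose the error. Writing $a_n=1/(1+r_n)$ and $a=1/(1+r)$, we have
\[
\hat{\lambda}_j-\lambda_j^* = a_n\hat{\Sigma}_j^{-1}(\hat{\sigma}_{Y,j}-\sigma_{Y,j}) + a_n(\hat{\Sigma}_j^{-1}-\Sigma_j^{-1})\sigma_{Y,j} + (a_n-a)\Sigma_j^{-1}\sigma_{Y,j},
\]
and we use the resolvent identity $\hat{\Sigma}_j^{-1}-\Sigma_j^{-1} = \hat{\Sigma}_j^{-1}(\Sigma_j-\hat{\Sigma}_j)\Sigma_j^{-1}$.

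Set $c:=\min_j\lambda_{\min}(\Sigma_j)>0$, which is positive because $m$ is finite and each $\Sigma_j\succ0$. For $n$ large enough that $\epsilon_\Sigma\le c/2$, Weyl's inequality gives $\|\hat{\Sigma}_j^{-1}\|_{\text{op}}\le 2/c$. In addition, $\|\sigma_{Y,j}\|\le 4\sqrt{K_YM}$, by the same argument as in Lemma~\ref{lem:conc-covariance-vector}. Combining these, the first two terms are bounded by
\[
\tfrac{2}{c}\epsilon_\sigma+\tfrac{2}{c^2}\cdot 4\sqrt{K_YM}\,\epsilon_\Sigma \;\lesssim\; \sqrt{K_Y M}\,M\sqrt{\log(md/\delta)/n}\,\bigl(c^{-1}+c^{-2}\bigr),
\]
which has exactly the form $\sqrt{K_YM^3\log(md/\delta)/n}$ up to a constant. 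Here we use $M\ge$ some constant, or simply keep $\sqrt{K_YM}\le\sqrt{K_Y}M$ when $M\ge1$, to merge $\epsilon_\sigma$ into the same rate.

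The third term is deterministic and equals $|a_n-a|\,\|\lambda^*_j\|(1+r)$. We treat it either by assuming $r_n=r$ exactly, or by noting that it is $o(1)$ and declaring it dominated "for $n$ sufficiently large". I would flag this as requiring $|r_n-r|=O(n^{-1/2})$, or else absorb it into the "$n$ sufficiently large" clause.

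The main obstacle is the constant $C$. It cannot be universal in the strict sense, because it necessarily depends on $c=\min_j\lambda_{\min}(\Sigma_j)$ (through $c^{-1}$ and $c^{-2}$) and on $r$. My plan is to state explicitly that $C$ depends only on $c$ and $r$, and is therefore fixed under Assumption~\ref{assump:basis-selection}, where $m$ and $d$ are finite and each $\Sigma_j\succ0$. The requirement $\epsilon_\Sigma\le c/2$ is exactly what the "$n$ sufficiently large" hypothesis is used for, together with the conditions already needed in Lemmas~\ref{lem:uniform-cov-conc} and~\ref{lem:uniform-cross-cov-conc}.
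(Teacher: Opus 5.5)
Your proposal is correct and follows essentially the same route as the paper: the same split of $\hat{\lambda}_j-\lambda_j^*$ into a cross-covariance error term and a resolvent term, the Weyl bound $\|\hat{\Sigma}_j^{-1}\|_{\text{op}}\le 2/c$ (the paper's $2/\kappa$) on the event of Lemma~\ref{lem:uniform-cov-conc}, Lemma~\ref{lem:uniform-cross-cov-conc} for the cross-covariance error, $M\ge 1$ to merge the two rates, and a constant depending on the minimal eigenvalue and on $r$. The only difference is your extra deterministic term $(a_n-a)\Sigma_j^{-1}\sigma_{Y,j}$, which the paper sidesteps by writing $\hat{\lambda}_j$ with $1/(1+r)$ throughout its proof; you are right that this term can only be absorbed if $r_n=r$ or $|r_n-r|=O(n^{-1/2})$, so you should drop the alternative of absorbing a mere $o(1)$ term via ``$n$ sufficiently large''.
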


\begin{proof}
Recall that $\lambda^T_j = \frac{1}{1+r} \Sigma_j^{-1} \sigma_{Y,j}$ and $\hat{\lambda}_j = \frac{1}{1+r} \hat{\Sigma}_j^{-1} \hat{\sigma}_{Y,j}$. We decompose the error as:
\begin{align*}
\hat{\lambda}_j - \lambda^T_j &= \frac{1}{1+r}\left(\hat{\Sigma}_j^{-1} \hat{\sigma}_{Y,j} - \Sigma_j^{-1} \sigma_{Y,j}\right) \\
&= \frac{1}{1+r}\left(\hat{\Sigma}_j^{-1} \hat{\sigma}_{Y,j} - \hat{\Sigma}_j^{-1} \sigma_{Y,j} + \hat{\Sigma}_j^{-1} \sigma_{Y,j} - \Sigma_j^{-1} \sigma_{Y,j}\right) \\
&= \frac{1}{1+r}\left(\hat{\Sigma}_j^{-1} (\hat{\sigma}_{Y,j} - \sigma_{Y,j}) + (\hat{\Sigma}_j^{-1} - \Sigma_j^{-1}) \sigma_{Y,j}\right)
\end{align*}

By the triangle then Cauchy Schwarz inequalities:
\[
\|\hat{\lambda}_j - \lambda^T_j\| \leq \frac{1}{1+r}\left(\|\hat{\Sigma}_j^{-1}\| \cdot \|\hat{\sigma}_{Y,j} - \sigma_{Y,j}\| + \|\hat{\Sigma}_j^{-1} - \Sigma_j^{-1}\| \cdot \|\sigma_{Y,j}\|\right)
\]

\noindent
\textbf{Bound on $\|\sigma_{Y,j}\|$.} Jensen's inequality gives
\[
\|\sigma_{Y,j}\| = \|\mathbb{E}[Y \phi^j(f(X))]\| \leq \mathbb{E}[|Y| \cdot \|\phi^j(f(X))\|] \leq \sqrt{K_Y M}
\]

\noindent
\textbf{Bound on $\|\hat{\Sigma}_j^{-1}\|_{\text{op}}$.} Let $\rho_{\min}(\Sigma_j)$ denote the minimum eigenvalue of $\Sigma_j$. By Assumption~\ref{assump:basis-selection}(iii), $\rho_{\min}(\Sigma_j) > 0$. Let $\kappa = \min_{j \in [m]} \rho_{\min}(\Sigma_j) > 0$.

By Lemma~\ref{lem:uniform-cov-conc}, with probability at least $1 - \delta/2$:
\begin{equation}\label{eq:event-var-conc<kappa}
\max_{j \in [m]}\|\hat{\Sigma}_j - \Sigma_j\|_{\text{op}} \leq \sqrt{\frac{3 M^2 \log(4md/\delta)}{N}} \leq \frac{\kappa}{2} 
\end{equation}

for $N$ sufficiently large. On this event, by Weyl's inequality:
\[
\rho_{\min}(\hat{\Sigma}_j) \geq \rho_{\min}(\Sigma_j) - \|\hat{\Sigma}_j - \Sigma_j\|_{\text{op}} \geq \kappa - \frac{\kappa}{2} = \frac{\kappa}{2}\;.
\]
Therefore:
\[
\|\hat{\Sigma}_j^{-1}\|_{\text{op}} = \frac{1}{\rho_{\min}(\hat{\Sigma}_j)} \leq \frac{2}{\kappa}\;.
\]

\paragraph{Bound on  $\|\hat{\Sigma}_j^{-1} - \Sigma_j^{-1}\|$.}
Using the resolvent identity $\hat{\Sigma}_j^{-1} - \Sigma_j^{-1} = \hat{\Sigma}_j^{-1}(\Sigma_j - \hat{\Sigma}_j)\Sigma_j^{-1}$:
\[
\|\hat{\Sigma}_j^{-1} - \Sigma_j^{-1}\|_{\text{op}} \leq \|\hat{\Sigma}_j^{-1}\|_{\text{op}} \cdot \|\hat{\Sigma}_j - \Sigma_j\|_{\text{op}} \cdot \|\Sigma_j^{-1}\|_{\text{op}}
\]
On the high-probability event \eqref{eq:event-var-conc<kappa}:
\begin{align*}
\max_{j \in [m]}\|\hat{\Sigma}_j^{-1} - \Sigma_j^{-1}\|_{\text{op}} \leq \frac{2}{\kappa} \cdot \sqrt{\frac{3 M^2 \log(4dm/\delta)}{N}} \cdot \frac{1}{\kappa} = \frac{2}{\kappa^2}\sqrt{\frac{3 M^2 \log(4dm/\delta)}{N}}    
\end{align*}

\paragraph{Combining the bounds.}
By Lemma~\ref{lem:uniform-cross-cov-conc}, with probability at least $1 - \delta/2$:
\[
\max_{j \in [m]} \|\hat{\sigma}_{Y,j} - \sigma_{Y,j}\| \leq \sqrt{\frac{C_1 K_Y M \log(4md/\delta)}{n}}
\]

Combining all bounds, and assuming $M \geq 1$, we have with probability at least $1 - 2\delta/3$ that:
\begin{align*}
\max_{j \in [m]} \|\hat{\lambda}_j - \lambda^T_j\| &\leq \frac{1}{1+r}\left(\frac{2}{\kappa} \sqrt{\frac{C_1 K_Y M  \log(4md/\delta)}{n}} + \frac{2\sqrt{K_Y M}}{\kappa^2}\sqrt{\frac{3 M^2 \log(4md/\delta)}{N}}\right) \\
&\leq C\sqrt{\frac{K_Y M^3 \log(md/\delta)}{n}}
\end{align*}
for some universal constant $C$ depending on $\kappa$ and $r$. This concludes the proof.
\end{proof}

\begin{lemma}[Uniform Inverse Covariance Bounds]
\label{lem:uniform-inverse-cov}
Under Assumption~\ref{assump:basis-selection}, let $\kappa = \min_{j \in [m]} \rho_{\min}(\Sigma_j) > 0$. For any $\delta \in (0,1)$ and $N$ sufficiently large, with probability at least $1 - \delta$:
\[
\max_{j \in [m]}\|\hat{\Sigma}_j^{-1}\|_{\text{op}} \leq \frac{2}{\kappa}
\quad \text{and} \quad
\max_{j \in [m]}\|\hat{\Sigma}_j^{-1} - \Sigma_j^{-1}\|_{\text{op}} \leq \frac{6}{\kappa^2}\sqrt{\frac{M^2 \log(4dm/\delta)}{N}}
\]
\end{lemma}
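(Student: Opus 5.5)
The plan is to apply Lemma~\ref{lem:uniform-cov-conc} with confidence level $\delta$ replaced by $\delta/2$, and then proceed deterministically on the resulting high-probability event, as in the proof of Lemma~\ref{lem:uniform-weight-error}. Lemma~\ref{lem:uniform-cov-conc} (at level $\delta/2$) gives, with probability at least $1-\delta/2 \geq 1-\delta$,
\[
\max_{j \in [m]}\|\hat{\Sigma}_j - \Sigma_j\|_{\text{op}} \leq \epsilon_N := 3\sqrt{\frac{M^2 \log(4dm/\delta)}{N}}\;.
\]
Since $\kappa > 0$ is a fixed constant (it is a minimum over finitely many positive eigenvalues, by Assumption~\ref{assump:basis-selection}(iii)) and $\epsilon_N \to 0$ as $N\to\infty$, for $N$ sufficiently large we have $\epsilon_N \leq \kappa/2$. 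This is the only place where "$N$ sufficiently large" is needed beyond what Lemma~\ref{lem:uniform-cov-conc} already requires.

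On this event, fix $j\in[m]$. Since $\hat{\Sigma}_j$ and $\Sigma_j$ are symmetric, Weyl's inequality gives
\[
\rho_{\min}(\hat{\Sigma}_j) \geq \rho_{\min}(\Sigma_j) - \|\hat{\Sigma}_j-\Sigma_j\|_{\text{op}} \geq \kappa - \kappa/2 = \kappa/2 > 0\;.
\]
Hence $\hat{\Sigma}_j$ is positive definite, in particular invertible, and $\|\hat{\Sigma}_j^{-1}\|_{\text{op}} = 1/\rho_{\min}(\hat{\Sigma}_j) \leq 2/\kappa$. This proves the first bound uniformly in $j$.

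For the second bound, I would use the resolvent identity $\hat{\Sigma}_j^{-1} - \Sigma_j^{-1} = \hat{\Sigma}_j^{-1}(\Sigma_j - \hat{\Sigma}_j)\Sigma_j^{-1}$. Submultiplicativity of the operator norm, together with $\|\Sigma_j^{-1}\|_{\text{op}} = 1/\rho_{\min}(\Sigma_j) \leq 1/\kappa$, then yields
\[
\|\hat{\Sigma}_j^{-1} - \Sigma_j^{-1}\|_{\text{op}} \leq \frac{2}{\kappa}\cdot \epsilon_N \cdot \frac{1}{\kappa} = \frac{6}{\kappa^2}\sqrt{\frac{M^2\log(4dm/\delta)}{N}}\;.
\]
Both bounds hold for every $j$ on the same event, so taking the maximum over $j$ completes the argument.

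There is no serious obstacle here. The only care points are bookkeeping: calling Lemma~\ref{lem:uniform-cov-conc} at level $\delta/2$, which produces $\log(4dm/\delta)$ and matches the constant $6 = 2\cdot 3$; and checking that the "sufficiently large $N$" threshold is compatible with both Lemma~\ref{lem:uniform-cov-conc}'s own largeness requirement and the condition $\epsilon_N\le\kappa/2$. Both conditions are monotone in $N$, so taking the larger threshold suffices.
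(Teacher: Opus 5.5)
Your proof is correct and follows the paper's argument: concentration from Lemma~\ref{lem:uniform-cov-conc}, Weyl's inequality to get $\rho_{\min}(\hat{\Sigma}_j)\geq \kappa/2$, then the resolvent identity. Your constant bookkeeping is cleaner than the paper's. You invoke the lemma at level $\delta/2$ to get $3\sqrt{M^2\log(4dm/\delta)/N}$, which gives exactly $6/\kappa^2$, whereas the paper writes the intermediate bound as $\sqrt{3M^2\log(4md/\delta)/N}$, which does not match that lemma's constant, and then loosens $2\sqrt{3}\le 6$.
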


\begin{proof}
By Lemma~\ref{lem:uniform-cov-conc}, with probability at least $1 - \delta$:
\begin{equation}\label{eq:event-var-conc<kappa/2}
\max_{j \in [m]}\|\hat{\Sigma}_j - \Sigma_j\|_{\text{op}} \leq \sqrt{\frac{3 M^2 \log(4md/\delta)}{N}} \leq \frac{\kappa}{2} 
\end{equation}
for $N$ sufficiently large.

\noindent
\textbf{Bound on $\|\hat{\Sigma}_j^{-1}\|_{\text{op}}$.}
On the event \eqref{eq:event-var-conc<kappa/2}, by Weyl's inequality:
\[
\rho_{\min}(\hat{\Sigma}_j) \geq \rho_{\min}(\Sigma_j) - \|\hat{\Sigma}_j - \Sigma_j\|_{\text{op}} \geq \kappa - \frac{\kappa}{2} = \frac{\kappa}{2}
\]
Therefore:
\[
\|\hat{\Sigma}_j^{-1}\|_{\text{op}} = \frac{1}{\rho_{\min}(\hat{\Sigma}_j)} \leq \frac{2}{\kappa}
\]

\noindent
\textbf{Bound on $\|\hat{\Sigma}_j^{-1} - \Sigma_j^{-1}\|_{\text{op}}$.}
Using the resolvent identity $\hat{\Sigma}_j^{-1} - \Sigma_j^{-1} = \hat{\Sigma}_j^{-1}(\Sigma_j - \hat{\Sigma}_j)\Sigma_j^{-1}$:
\[
\|\hat{\Sigma}_j^{-1} - \Sigma_j^{-1}\|_{\text{op}} \leq \|\hat{\Sigma}_j^{-1}\|_{\text{op}} \cdot \|\hat{\Sigma}_j - \Sigma_j\|_{\text{op}} \cdot \|\Sigma_j^{-1}\|_{\text{op}}
\]
On the event \eqref{eq:event-var-conc<kappa/2}:
\begin{align*}
\max_{j \in [m]}\|\hat{\Sigma}_j^{-1} - \Sigma_j^{-1}\|_{\text{op}} &\leq \frac{2}{\kappa} \cdot \sqrt{\frac{3 M^2 \log(4dm/\delta)}{N}} \cdot \frac{1}{\kappa} \\
&= \frac{2\sqrt{3}}{\kappa^2}\sqrt{\frac{M^2 \log(4dm/\delta)}{N}} \\
&\leq \frac{6}{\kappa^2}\sqrt{\frac{M^2 \log(4dm/\delta)}{N}}
\end{align*}
This completes the proof.
\end{proof}

\begin{lemma}[Uniform Residual Variance Concentration]
\label{lem:uniform-residual-var}
Under Assumption~\ref{assump:basis-selection}, for any $\delta \in (0,1)$ and $n$ sufficiently large:
\[
\Pbb \left\{ \max_{j \in [m]} |\hat{V}_j - V_j| \geq C K_Y M^2 \sqrt{\frac{ \log(md/\delta)}{n}} \right\} \leq \delta
\]
for some universal constant $C > 0$, where $V_j = \sigma_Y^2 - \frac{1}{1+r} \sigma_{Y,j}^T \Sigma_j^{-1} \sigma_{Y,j}$ and $\hat{V}_j$ is defined analogously with sample quantities.
\end{lemma}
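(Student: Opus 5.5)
We split $\hat V_j - V_j$ into the residual-variance part and the unlabeled-variance part, and bound each uniformly over $j$ on a single high-probability event. We use the population identity $V_j = V(\lambda_j^*) = \sigma_R^2(\lambda_j^*) + r\,(\lambda_j^*)^T\Sigma_j\lambda_j^*$ from Lemma~\ref{lem:gppi-optimal}. This gives
\[
|\hat V_j - V_j| \le |\hat\sigma_{R,j}^2 - \sigma_R^2(\lambda_j^*)| + |r_n - r|\,\hat\lambda_j^T\hat\Sigma_j\hat\lambda_j + r\,|\hat\lambda_j^T\hat\Sigma_j\hat\lambda_j - (\lambda_j^*)^T\Sigma_j\lambda_j^*|.
\]
We work on the intersection of the events of Lemmas~\ref{lem:uniform-cov-conc}, \ref{lem:uniform-cross-cov-conc}, \ref{lem:uniform-weight-error} and \ref{lem:uniform-inverse-cov}, each taken at level $\delta/4$. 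By a union bound this intersection has probability at least $1-\delta$. On it, uniformly in $j$, we have $\|\hat\lambda_j-\lambda_j^*\|\lesssim \sqrt{K_YM^3\log(md/\delta)/n}$ and $\|\hat\Sigma_j-\Sigma_j\|_{\text{op}}\lesssim M\sqrt{\log(md/\delta)/N}$. We also have a priori bounds $\|\lambda_j^*\|\le \sqrt{K_YM}/\kappa$, $\|\hat\lambda_j\|\le 2\|\lambda_j^*\|$ for $n$ large, and $\|\Sigma_j\|_{\text{op}},\|\hat\Sigma_j\|_{\text{op}}\le 4M$.

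\textbf{Quadratic term.} Telescoping gives
\[
\hat\lambda^T\hat\Sigma\hat\lambda-\lambda^{*T}\Sigma\lambda^* = (\hat\lambda-\lambda^*)^T\hat\Sigma(\hat\lambda+\lambda^*) + \lambda^{*T}(\hat\Sigma-\Sigma)\lambda^*.
\]
Each piece is bounded by products of the above quantities and so is $O\big(\mathrm{poly}(K_Y,M)\sqrt{\log(md/\delta)/n}\big)$, using $N=\Theta(n)$. The term $|r_n-r|$ is deterministic and $o(1)$; we either absorb it by taking $n$ large, or note that $r_n=n/N$ exactly and compare $\hat V_j$ with $V_j$ defined at $r_n$. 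We choose the latter convention for the statement.

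\textbf{Residual variance.} We write
\[
\hat\sigma^2_{R,j} = \hat\sigma_Y^2 - 2\hat\lambda_j^T\hat\sigma_{Y,j} + \hat\lambda_j^T\hat\Sigma_j^{L}\hat\lambda_j,
\]
where $\hat\sigma_Y^2$ and $\hat\Sigma_j^L$ are the labeled sample variance and covariance, and compare termwise with $\sigma_Y^2-2\lambda_j^{*T}\sigma_{Y,j}+\lambda_j^{*T}\Sigma_j\lambda_j^*$. For $\hat\sigma_Y^2$ we use scalar Hoeffding or Bernstein with $Y^2\le K_Y$, which costs $K_Y\sqrt{\log(1/\delta)/n}$. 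For $\hat\sigma_{Y,j}$ we use Lemma~\ref{lem:uniform-cross-cov-conc}. For $\hat\Sigma_j^L$ we apply Lemma~\ref{lem:bernstein} to the labeled sample with a union bound over $j$, exactly as in Lemma~\ref{lem:uniform-cov-conc} but with $n$ in place of $N$. The cross term $\hat\lambda_j^T\hat\sigma_{Y,j}-\lambda_j^{*T}\sigma_{Y,j}$ splits into $(\hat\lambda_j-\lambda_j^*)^T\hat\sigma_{Y,j}+\lambda_j^{*T}(\hat\sigma_{Y,j}-\sigma_{Y,j})$. The $\frac{1}{n-1}$ versus $\frac1n$ and centering corrections are $O(K_Y/n)$ and $O(M/n)$, which are lower order.

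The main obstacle is bookkeeping the powers of $K_Y$, $M$ and $\kappa$ so that everything collapses to the claimed form $CK_YM^2\sqrt{\log(md/\delta)/n}$. Naively, products such as $\|\hat\lambda-\lambda^*\|\cdot\|\hat\Sigma\|\cdot\|\lambda^*\|$ give $K_Y M^{5/2}/\kappa^2$. I would handle this as the preceding lemmas do: let $C$ depend on $\kappa$ and $r$, and assume $M,K_Y\ge1$, so that all half-integer powers are dominated by $K_YM^2$. If a higher power of $K_Y$ appears, it can be absorbed by invoking the cross-covariance bound in the form $\sqrt{K_YM}$ times the concentration rate.
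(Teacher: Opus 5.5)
Your route differs from the paper's: you bound the residual-based estimator $\hat\sigma^2_{R,j}+r_n\hat\lambda_j^T\hat\Sigma_j\hat\lambda_j$ used by the selection rule by telescoping around $\lambda_j^*$, while the paper bounds the closed-form plug-in $\hat\sigma_Y^2-\frac{1}{1+r}\hat\sigma_{Y,j}^T\hat\Sigma_j^{-1}\hat\sigma_{Y,j}$ directly. As written, your argument does not reach the stated rate.

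\textbf{The gap.} With the bounds you list, $(\hat\lambda_j-\lambda_j^*)^T\hat\Sigma_j(\hat\lambda_j+\lambda_j^*)$ and its labeled analogue inside $\hat\sigma^2_{R,j}$ are controlled by $\sqrt{K_YM^3\log(md/\delta)/n}\cdot 4M\cdot 3\sqrt{K_YM}/\kappa$. That is of order $K_YM^3\sqrt{\log(md/\delta)/n}$; the exponents of $M$ add to $3$, not $5/2$. Your remedy then goes the wrong way. Under $M\ge 1$, the powers $M^{5/2}$ and $M^3$ dominate $M^2$, not the reverse. Letting $C$ depend on $\kappa$ and $r$ cannot absorb the extra factor of $M$, since nothing bounds $M$ in terms of $\kappa$. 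So your argument proves only $\max_j|\hat V_j-V_j|\le CK_YM^3\sqrt{\log(md/\delta)/n}$. That is enough for consistent selection, but it is not the lemma. The loss comes from separating $\|\hat\Sigma_j\|_{\mathrm{op}}$ from $\|\hat\lambda_j-\lambda_j^*\|$: the $M^{3/2}$ in the weight error comes from $\hat\Sigma_j^{-1}-\Sigma_j^{-1}$, and is then multiplied by $\hat\Sigma_j$ again. Separately, $\|\hat\lambda_j\|\le 2\|\lambda_j^*\|$ fails when $\lambda_j^*=0$. Use instead $\|\hat\lambda_j\|\le\|\hat\Sigma_j^{-1}\|_{\mathrm{op}}\|\hat\sigma_{Y,j}\|\le 4\sqrt{K_YM}/\kappa$ on the good event.

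\textbf{The missing step.} Use $\hat\lambda_j=\frac{1}{1+r_n}\hat\Sigma_j^{-1}\hat\sigma_{Y,j}$ algebraically before estimating anything. Normalize all labeled quantities by $\frac{1}{n-1}$, let $s_Y^2$ be the labeled sample variance, and let $\hat\Sigma_j^L$ be the labeled sample covariance of $\phi^j(f(X_i))$. Since $\hat\lambda_j^T\hat\sigma_{Y,j}=(1+r_n)\,\hat\lambda_j^T\hat\Sigma_j\hat\lambda_j$, one has exactly
\[
\hat V_j = s_Y^2-\frac{1}{1+r_n}\,\hat\sigma_{Y,j}^T\hat\Sigma_j^{-1}\hat\sigma_{Y,j}+\hat\lambda_j^T\bigl(\hat\Sigma_j^L-\hat\Sigma_j\bigr)\hat\lambda_j .
\]
The last term is at most $\|\hat\lambda_j\|^2\|\hat\Sigma_j^L-\hat\Sigma_j\|_{\mathrm{op}}\lesssim \kappa^{-2}K_YM^2\sqrt{\log(md/\delta)/n}$, using your labeled union-bound Bernstein step. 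What remains is exactly the closed-form estimator the paper handles:
\begin{itemize}
\item Hoeffding for $s_Y^2$;
\item the identity $(\hat\sigma-\sigma)^T\hat\Sigma^{-1}\hat\sigma+\sigma^T\hat\Sigma^{-1}(\hat\sigma-\sigma)+\sigma^T(\hat\Sigma^{-1}-\Sigma^{-1})\sigma$, together with Lemmas~\ref{lem:uniform-cross-cov-conc} and~\ref{lem:uniform-inverse-cov}.
\end{itemize}
Every piece is $O(K_YM^2)$ up to factors depending on $\kappa$ and $r$. The identity also shows that your residual-based estimator, which is the one the selection rule actually uses, agrees with the paper's closed form up to the claimed order.

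\textbf{On $r_n$ versus $r$.} Redefining $V_j$ at $r_n$ changes the target in the statement. The paper avoids $|r_n-r|$ by putting $r$ inside $\hat V_j$. Absorbing $|r_n-r|$ ``for $n$ large'' would not give a rate unless you assume $|r_n-r|=O(n^{-1/2})$.
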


\begin{proof}
Recall that:
\[
V_j = \sigma_Y^2 - \frac{1}{1+r} \sigma_{Y,j}^T \Sigma_j^{-1} \sigma_{Y,j}
\]
and
\[
\hat{V}_j = \hat{\sigma}_Y^2 - \frac{1}{1+r} \hat{\sigma}_{Y,j}^T \hat{\Sigma}_j^{-1} \hat{\sigma}_{Y,j}
\]

We decompose the error as:
\begin{equation}\label{eq:basis-selection-res-var-conc}
\hat{V}_j - V_j = \left(\hat{\sigma}_Y^2 - \sigma_Y^2\right) - \frac{1}{1+r}\left(\hat{\sigma}_{Y,j}^T \hat{\Sigma}_j^{-1} \hat{\sigma}_{Y,j} - \sigma_{Y,j}^T \Sigma_j^{-1} \sigma_{Y,j}\right)\;.
\end{equation}
In the rest of proof, we will demonstrate high probability bounds on these two terms.

\paragraph{1. Bound on $|\hat{\sigma}_Y^2 - \sigma_Y^2|$.}
We have $\sigma_Y^2 = \mathrm{Var}(Y)$ and $\hat{\sigma}_Y^2$ is the sample variance:
\[
\hat{\sigma}_Y^2 = \frac{1}{n}\sum_{i=1}^n (Y_i - \bar{Y}_n)^2, \quad \sigma_Y^2 = \mathbb{E}[(Y - \mathbb{E}[Y])^2]\;.
\]

Define the centered variables $\tilde{Y}_i = Y_i - \mathbb{E}[Y]$. Then $\mathbb{E}[\tilde{Y}_i] = 0$, $\mathbb{E}[\tilde{Y}_i^2] = \sigma_Y^2$, and
\begin{align*}
\hat{\sigma}_Y^2 
= \frac{1}{n}\sum_{i=1}^n (\tilde{Y}_i - \bar{\tilde{Y}}_n)^2
= \frac{1}{n}\sum_{i=1}^n \tilde{Y}_i^2 - \bar{\tilde{Y}}_n^2
\end{align*}
where $\bar{\tilde{Y}}_n = \frac{1}{n}\sum_{i=1}^n \tilde{Y}_i$. 
We can write
\begin{align*}
|\hat{\sigma}_Y^2 - \sigma_Y^2| &= \left|\frac{1}{n}\sum_{i=1}^n \tilde{Y}_i^2 - \bar{\tilde{Y}}_n^2 - \sigma_Y^2\right| \\
&\leq \left|\frac{1}{n}\sum_{i=1}^n \tilde{Y}_i^2 - \sigma_Y^2\right| + \bar{\tilde{Y}}_n^2.
\end{align*}
We will bound separately these two terms. We have
\[
|\tilde{Y}_i| = |Y_i - \mathbb{E}[Y]| \leq |Y_i| + |\mathbb{E}[Y]| \leq 2\sqrt{K_Y} \quad \text{a.s.}
\]
Hence $\tilde{Y}_i^2 \leq 4K_Y$ a.s., and by Jensen's inequality $\sigma_Y^2 = \mathbb{E}[\tilde{Y}_i^2] \leq 4K_Y$.

Define $W_i = \tilde{Y}_i^2 - \sigma_Y^2$. Then:
\begin{itemize}
\item $\mathbb{E}[W_i] = \mathbb{E}[\tilde{Y}_i^2] - \sigma_Y^2 = 0$
\item Since $0 \leq \tilde{Y}_i^2 \leq 4K_Y$ and $0 \leq \sigma_Y^2 \leq 4K_Y$, we have $|W_i| = |\tilde{Y}_i^2 - \sigma_Y^2| \leq 4K_Y$ a.s.
\end{itemize}

By Hoeffding's inequality:
\[
\mathbb{P}\left\{\left|\frac{1}{n}\sum_{i=1}^n W_i\right| \geq t\right\}  \leq 2\exp\left(-\frac{nt^2}{32K_Y^2}\right)\;.
\]

Setting $t = 6\sqrt{\frac{K_Y^2 \log(4/\delta)}{n}}$, we get:
\[
\mathbb{P}\left\{\left|\frac{1}{n}\sum_{i=1}^n \tilde{Y}_i^2 - \sigma_Y^2\right| \geq 6\sqrt{\frac{K_Y^2 \log(4/\delta)}{n}}\right\} \leq 2\exp\left(-\frac{36\log(4/\delta)}{32}\right) < \delta/4\;.
\]

Similarly, $\bar{\tilde{Y}}_n = \frac{1}{n}\sum_{i=1}^n \tilde{Y}_i$ is the sum of $n$ iid random variables all bounded almost surely in $[-2\sqrt{K_Y}, 2\sqrt{K_Y}]$, and Chernoff bound gives for all $t \geq 0$ that
\[
\Pbb( |\bar{\tilde{Y}}_n| \geq t) \leq 2 \exp\left(-\frac{nt^2}{32 K_Y^2} \right)\;,
\]
hence with probability at least $1-\delta/4$ we have that $\bar{\tilde{Y}}_n \leq  6\sqrt{\frac{K_Y^2 \log(4/\delta)}{n}}$.

By the union bound, we deduce that with probability at least $1 - \delta/2$:
\begin{align*}
|\hat{\sigma}_Y^2 - \sigma_Y^2|
&\leq \left|\frac{1}{n}\sum_{i=1}^n \tilde{Y}_i^2 - \sigma_Y^2\right| + \bar{\tilde{Y}}_n^2 \\
&\leq 6\sqrt{\frac{K_Y^2 \log(4/\delta)}{n}} + \frac{36 K_Y^2 \log(4/\delta)}{n} \\
&\leq C_1\sqrt{\frac{K_Y^2 \log(1/\delta)}{n}}
\end{align*}
for some universal constant $C_1 > 0$ and $n$ sufficiently large.

\paragraph{2. Bound on $|\hat{\sigma}_{Y,j}^T \hat{\Sigma}_j^{-1} \hat{\sigma}_{Y,j} - \sigma_{Y,j}^T \Sigma_j^{-1} \sigma_{Y,j}|$.}
We use the identity:
\begin{align*}
&\hat{\sigma}_{Y,j}^T \hat{\Sigma}_j^{-1} \hat{\sigma}_{Y,j} - \sigma_{Y,j}^T \Sigma_j^{-1} \sigma_{Y,j} \\
&= (\hat{\sigma}_{Y,j} - \sigma_{Y,j})^T \hat{\Sigma}_j^{-1} \hat{\sigma}_{Y,j} + \sigma_{Y,j}^T \hat{\Sigma}_j^{-1} (\hat{\sigma}_{Y,j} - \sigma_{Y,j}) \\
&\quad + \sigma_{Y,j}^T (\hat{\Sigma}_j^{-1} - \Sigma_j^{-1}) \sigma_{Y,j}
\end{align*}

By the triangle inequality:
\begin{align*}
&|\hat{\sigma}_{Y,j}^T \hat{\Sigma}_j^{-1} \hat{\sigma}_{Y,j} - \sigma_{Y,j}^T \Sigma_j^{-1} \sigma_{Y,j}| \\
&\leq \|\hat{\sigma}_{Y,j} - \sigma_{Y,j}\| \cdot \|\hat{\Sigma}_j^{-1}\|_{\text{op}} \cdot \|\hat{\sigma}_{Y,j}\| \\
&\quad + \|\sigma_{Y,j}\| \cdot \|\hat{\Sigma}_j^{-1}\|_{\text{op}} \cdot \|\hat{\sigma}_{Y,j} - \sigma_{Y,j}\| \\
&\quad + \|\sigma_{Y,j}\|^2 \cdot \|\hat{\Sigma}_j^{-1} - \Sigma_j^{-1}\|_{\text{op}}
\end{align*}

In the proof of Lemma~\ref{lem:uniform-inverse-cov}, we showed that with probability at least $1 - \delta/4$, it holds for all $j \in [m]$ that $\|\hat{\Sigma}_j^{-1}\|_{\text{op}} \leq 2/\kappa$  and 
$\|\hat{\Sigma}_j^{-1} - \Sigma_j^{-1}\|_{\text{op}} \leq \frac{6}{\kappa^2}\sqrt{\frac{M^2 \log(4dm/\delta)}{N}}$

Also, by Lemma~\ref{lem:uniform-cross-cov-conc}, with probability at least $1 - \delta/4$ we have
\[
\max_{j \in [m]} \|\hat{\sigma}_{Y,j} - \sigma_{Y,j}\| \leq 3\sqrt{\frac{2 K_Y M \log(4md/\delta)}{n}}
\]
and by Jensen's inequality we have
\[
\|\sigma_{Y,j}\| = \|\mathbb{E}[Y \phi^j(f(X))]\| \leq \mathbb{E}[|Y| \cdot \|\phi^j(f(X))\|] \leq \sqrt{K_Y M}\;.
\]
We deduce that for $n$ sufficiently large, we have under the previous event that
\[
\|\hat{\sigma}_{Y,j}\| \leq \|\sigma_{Y,j}\| + \|\hat{\sigma}_{Y,j} - \sigma_{Y,j}\| \leq 2\sqrt{K_Y M}\;.
\]

Combining these two bounds, we obtain with probability at least $1 - \delta$:
\begin{align*}
&|\hat{\sigma}_{Y,j}^T \hat{\Sigma}_j^{-1} \hat{\sigma}_{Y,j} - \sigma_{Y,j}^T \Sigma_j^{-1} \sigma_{Y,j}| \\
&\leq 3\sqrt{\frac{2 K_Y M \log(4md/\delta)}{n}} \cdot \frac{2}{\kappa} \cdot 2\sqrt{K_Y M} \\
&\quad + \sqrt{K_Y M} \cdot \frac{2}{\kappa} \cdot 3\sqrt{\frac{2 K_Y M \log(4md/\delta)}{n}} \\
&\quad + K_Y M \cdot \frac{6}{\kappa^2}\sqrt{\frac{M^2 \log(4dm/\delta)}{N}} \\
&\leq C_2 K_Y M^2 \sqrt{\frac{ \log(md/\delta)}{n}}
\end{align*}
for some constant $C_2$ depending on $\kappa$ and $r$.

Substituting in \eqref{eq:basis-selection-res-var-conc}, by the union bound over all events, with probability at least $1 - \delta$:
\begin{align*}
\max_{j \in [m]} |\hat{V}_j - V_j| &\leq C_1\sqrt{\frac{K_Y^2 \log(1/\delta)}{n}} + C_2K_Y M^2 \sqrt{\frac{ \log(md/\delta)}{n}} \\
&\leq C K_Y M^2 \sqrt{\frac{ \log(md/\delta)}{n}}
\end{align*}
for some universal constant $C$ depending on $\kappa$ and $r$.
\end{proof}

\begin{lemma}[Consistency of Selection]
\label{lem:selection-consistency}
Under Assumption~\ref{assump:basis-selection}, $\hat{j} \xrightarrow{p} j^*$.
\end{lemma}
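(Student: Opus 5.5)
The plan is to combine the uniform concentration of the variance estimators (Lemma~\ref{lem:uniform-residual-var}) with the uniqueness of the minimizer $j^*$, which yields a strictly positive gap. Define
\[
\Delta = \min_{j \neq j^*} (V_j - V_{j^*}) > 0,
\]
which is positive because $j^*$ is the unique minimizer over the finite set $[m]$, and $\Delta$ does not depend on $n$. The goal is to show $\Pbb(\hat{j} \neq j^*) \to 0$. Since $\hat{j}$ takes values in a finite set, this is exactly the statement $\hat{j} \xrightarrow{p} j^*$.

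\textbf{Step 1: reduce the error event to a deviation event.} Note that $\hat{j} \neq j^*$ requires some $j \neq j^*$ with $\hat{V}_j \leq \hat{V}_{j^*}$. On the event $\mathcal{E}_n = \{\max_{j \in [m]} |\hat{V}_j - V_j| < \Delta/2\}$, every $j \neq j^*$ satisfies
\[
\hat{V}_j > V_j - \Delta/2 \geq V_{j^*} + \Delta/2 > \hat{V}_{j^*}.
\]
Hence $\hat{j} = j^*$ on $\mathcal{E}_n$, which gives $\Pbb(\hat{j} \neq j^*) \leq \Pbb(\mathcal{E}_n^c)$.

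\textbf{Step 2: control the deviation event.} Fix $\delta \in (0,1)$. By Lemma~\ref{lem:uniform-residual-var}, for $n$ sufficiently large, with probability at least $1-\delta$ we have
\[
\max_j |\hat{V}_j - V_j| \leq C K_Y M^2 \sqrt{\log(md/\delta)/n}.
\]
Here $m$, $d$, $K_Y$, $M$ and the constant $C$ (which depends on $\kappa$ and $r$) are all fixed. So the right-hand side tends to $0$ and falls below $\Delta/2$ for all $n$ large. Therefore $\limsup_n \Pbb(\hat{j} \neq j^*) \leq \delta$, and since $\delta$ is arbitrary the limit is $0$.

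\textbf{Main obstacle.} The substantive point is that Lemma~\ref{lem:uniform-residual-var} controls the quantity used in the selection rule. The algorithm defines $\hat{V}_j$ through \eqref{eq:variance-estimator-j}, namely as $\hat{\sigma}_{R,j}^2 + r_n \hat{\lambda}_j^T\hat{\Sigma}_{\phi^j}\hat{\lambda}_j$. The lemma instead treats the plug-in form $\hat{\sigma}_Y^2 - \frac{1}{1+r}\hat{\sigma}_{Y,j}^T\hat{\Sigma}_j^{-1}\hat{\sigma}_{Y,j}$. I would bridge this gap in one of two ways. The first option is to expand $\hat{\sigma}_{R,j}^2$ algebraically as
\[
\hat{\sigma}_Y^2 - 2\hat{\lambda}_j^T\hat{\sigma}_{Y,j} + \hat{\lambda}_j^T \hat{\Sigma}^L_j \hat{\lambda}_j,
\]
where $\hat{\Sigma}^L_j$ denotes the labeled-sample covariance. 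I would then use Lemma~\ref{lem:uniform-weight-error} and Lemma~\ref{lem:bernstein}, applied to both the labeled and unlabeled covariances, to show uniformly in $j$ that the difference from the plug-in form is $o_p(1)$. This uses $r_n \to r$, $\hat{\lambda}_j \to \lambda_j^*$ and $\hat{\Sigma}^L_j,\hat{\Sigma}_j \to \Sigma_j$. The second option is simpler: because $m$ is finite and fixed, Theorem~\ref{thm:gppi-main}(ii) applied to each $j$ gives $\hat{V}_j \xrightarrow{p} V_j$. A union over finitely many indices then yields $\max_j |\hat{V}_j - V_j| \xrightarrow{p} 0$, which is all Step 1 needs. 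I would present this second route as the primary argument, since consistency only requires convergence in probability and not a rate, and mention the quantitative lemma as giving the rate.
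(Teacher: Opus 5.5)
Your Steps~1--2 are the paper's argument. It defines the same gap $\Delta$, shows $\{\hat{j}\neq j^*\}\subseteq\{\max_k|\hat{V}_k-V_k|\geq \Delta/2\}$, and then invokes Lemma~\ref{lem:uniform-residual-var} for $n$ large enough that $CK_YM^2\sqrt{\log(md/\delta)/n}<\Delta/2$.

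The real difference is how you control $\max_j|\hat{V}_j-V_j|$, and the mismatch you flag is genuine. The proof of Lemma~\ref{lem:uniform-residual-var} analyzes $\hat{\sigma}_Y^2-\frac{1}{1+r}\hat{\sigma}_{Y,j}^T\hat{\Sigma}_j^{-1}\hat{\sigma}_{Y,j}$, with the population $r$. That is not the statistic \eqref{eq:variance-estimator-j} whose minimizer is $\hat{j}$, and the paper applies the lemma without comment.

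Your primary route applies Theorem~\ref{thm:gppi-main}(ii) to each $j$, then takes a union bound over the fixed finite index set. It has three advantages:
\begin{itemize}
\item it controls the statistic the algorithm actually minimizes;
\item it is rigorous given the implicit assumption $r_n\to r>0$;
\item it needs only second moments, not the boundedness in Assumption~\ref{assump:basis-selection}.
\end{itemize}
It gives up two things. First, it gives no rate, which the paper reuses in Theorem~\ref{thm:basis-selection}(iii) and Corollary~\ref{cor:greedy_gppi_coverage}. Second, it cannot extend to growing $m$, which Assumption~\ref{assump:basis-selection}(v) hints at. The paper's own proof does not really allow growing $m$ either, since it treats $\Delta$ and $C$ as independent of $n$, and $C$ depends on $\kappa=\min_j\rho_{\min}(\Sigma_j)$.

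Your first option is the right repair of the quantitative route. With labeled-sample moments normalized by $1/(n-1)$ as in \eqref{eq:sigma-y-phi-hat}, the expansion gives exactly
\[
\hat{V}_j=\hat{\sigma}_Y^2-\frac{1}{1+r_n}\hat{\sigma}_{Y,j}^T\hat{\Sigma}_j^{-1}\hat{\sigma}_{Y,j}+\frac{1}{(1+r_n)^2}\hat{\sigma}_{Y,j}^T\hat{\Sigma}_j^{-1}\bigl(\hat{\Sigma}_j^L-\hat{\Sigma}_j\bigr)\hat{\Sigma}_j^{-1}\hat{\sigma}_{Y,j}.
\]
So the discrepancy from the lemma's statistic consists of three pieces:
\begin{itemize}
\item a term that is $O_p\bigl(\sqrt{\log(md)/n}\bigr)$ uniformly in $j$, by Lemma~\ref{lem:bernstein} on both samples together with $\|\hat{\Sigma}_j^{-1}\|_{\text{op}}\leq 2/\kappa$ and $\|\hat{\sigma}_{Y,j}\|\leq 2\sqrt{K_YM}$ on the good event;
\item an $O(|r_n-r|)$ term;
\item an $O(1/n)$ term from the normalization of $\hat{\sigma}_Y^2$.
\end{itemize}
The $|r_n-r|$ term has no rate in general. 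That matters for part~(iii) but not for consistency.
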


\begin{proof}
Recall that $j^* = \arg\min_{j \in [m]} V_j$ and $\hat{j} = \arg\min_{j \in [m]} \hat{V}_j$. We need to show that for any $\epsilon > 0$:
\[
\mathbb{P}\{\hat{j} \neq j^*\} \to 0 \quad \text{as } n \to \infty
\]

Let $\Delta = \min_{j \neq j^*} (V_j - V_{ j^*})$ be the minimum gap between the optimal transformation and all other bases. We assume that $\Delta > 0$, i.e. $j^*$ is the unique minimizer.

The event $\{\hat{j} \neq j^*\}$ implies that there exists some $j \neq j^*$ such that $\hat{V}_j \leq \hat{V}_{ j^*}$. This can be written as:
\begin{align*}
\hat{V}_j \leq \hat{V}_{ j^*} &\iff \hat{V}_j - V_j + V_j \leq \hat{V}_{ j^*} - V_{ j^*} + V_{ j^*} \\
&\iff (\hat{V}_j - V_j) - (\hat{V}_{ j^*} - V_{ j^*}) \leq -(V_j - V_{ j^*})
\end{align*}

Since $V_j - V_{ j^*} \geq \Delta$ for all $j \neq j^*$, we have:
\[
\{\hat{j} \neq j^*\} \subseteq \bigcup_{j \neq j^*} \left\{(\hat{V}_j - V_j) - (\hat{V}_{ j^*} - V_{ j^*}) \leq -\Delta\right\}
\]

By the triangle inequality:
\[
|(\hat{V}_j - V_j) - (\hat{V}_{ j^*} - V_{ j^*})| \leq |\hat{V}_j - V_j| + |\hat{V}_{ j^*} - V_{ j^*}| \leq 2\max_{k \in [m]} |\hat{V}_k - V_k|
\]

Therefore:
\begin{align*}
\{\hat{j} \neq j^*\} &\subseteq \left\{\max_{k \in [m]} |\hat{V}_k - V_k| \geq \frac{\Delta}{2}\right\}
\end{align*}

\textbf{Step 4: Apply uniform convergence.}
By Lemma~\ref{lem:uniform-residual-var}, for any $\delta \in (0,1)$:
\[
\mathbb{P}\left\{\max_{j \in [m]} |\hat{V}_j - V_j| \geq C K_Y M^2 \sqrt{\frac{ \log(md/\delta)}{n}}\right\} \leq \delta
\]

For $n$ sufficiently large such that $C K_Y M^2 \sqrt{\frac{ \log(md/\delta)}{n}} < \frac{\Delta}{2}$, we have:
\[
\mathbb{P}\{\hat{j} \neq j^*\} \leq \mathbb{P}\left\{\max_{k \in [m]} |\hat{V}_k - V_k| \geq \frac{\Delta}{2}\right\} \leq \delta
\]

Therefore, we proved that for any $\delta \in (0,1)$, we have for $n$ sufficiently large that $\mathbb{P}\{\hat{j} \neq j^*\} \leq \delta$, which means that
\[
\mathbb{P}\{\hat{j} \neq j^*\} \to 0 \quad \text{as } n \to \infty\;,
\]
 hence $\hat{j} \xrightarrow{p} j^*$.
\end{proof}

\subsubsection{Proof of Theorem~\ref{thm:basis-selection}}

We now prove the three claims of Theorem~\ref{thm:basis-selection} using the lemmas established above.

\begin{proof}[Proof of Theorem~\ref{thm:basis-selection}]

\textbf{Part (i): Consistent selection.}
This follows immediately from Lemma~\ref{lem:selection-consistency}, which establishes that $\hat{j} \xrightarrow{p} j^*$ under Assumption~\ref{assump:basis-selection}.

\textbf{Part (ii): Asymptotic normality.}
We need to show that:
\[
\sqrt{n}(\hat{\theta}_{\text{GPPI}}^{\hat{j}} - \theta^*) \xrightarrow{d} \mathcal{N}(0, V_{j^*})
\]

By Theorem~\ref{thm:gppi-main}, for each fixed transformation $j$, the GPPI estimator with estimated weights satisfies:
\[
\sqrt{n}(\hat{\theta}_{\text{GPPI}}^j - \theta^*) \xrightarrow{d} \mathcal{N}(0, V_j)
\]

We decompose the adaptive estimator:
\begin{align*}
\sqrt{n}(\hat{\theta}_{\text{GPPI}}^{\hat{j}} - \theta^*) &= \sqrt{n}(\hat{\theta}_{\text{GPPI}}^{j^*} - \theta^*) + \sqrt{n}(\hat{\theta}_{\text{GPPI}}^{\hat{j}} - \hat{\theta}_{\text{GPPI}}^{j^*})
\end{align*}

On the event $\{\hat{j} = j^*\}$, the second term is zero, and we have for any $\epsilon > 0$:
\begin{align*}
\mathbb{P}\left\{\left|\sqrt{n}(\hat{\theta}_{\text{GPPI}}^{\hat{j}} - \hat{\theta}_{\text{GPPI}}^{j^*})\right| > \epsilon\right\} &\leq  \mathbb{P}\{\hat{j} \neq j^*\} \to 0
\end{align*}
by Lemma~\ref{lem:selection-consistency}. Tis means that $\sqrt{n}(\hat{\theta}_{\text{GPPI}}^{\hat{j}} - \hat{\theta}_{\text{GPPI}}^{j^*}) = o_p(1)$.

By Slutsky's theorem, we obtain
\[
\sqrt{n}(\hat{\theta}_{\text{GPPI}}^{\hat{j}} - \theta^*) \xrightarrow{d} \mathcal{N}(0, V_{j^*})
\]

\textbf{Part (iii): Variance bound.}
We need to show that:
\[
V_{\hat{j}} \leq \min_{j \in [m]} V_j + O_p\left(\sqrt{\frac{\log(m)}{n}}\right)
\]

By the definition of $\hat{j}$ and $j^*$, we have that $\hat{j}$ minimizes the empirical variance, i.e. $\hat{V}_{\hat{j}} \leq \hat{V}_{j^*}$, hence
\begin{align*}
\hat{V}_{\hat{j}} 
&= V_{j^*} + (\hat{V}_{j^*} - V_{j^*}) + (\hat{V}_{\hat{j}} - \hat{V}_{j^*}) \\
&\leq V_{j^*} + (\hat{V}_{j^*} - V_{j^*})\\
&\leq \min_{j \in [m]} V_j + \max_{j \in [m]} |\hat{V}_j - V_j|
\end{align*}

By Lemma~\ref{lem:uniform-residual-var}, for any $\delta \in (0,1)$:
\[
\mathbb{P}\left\{\max_{j \in [m]} |\hat{V}_j - V_j| \geq C K_Y M^2 \sqrt{\frac{\log(md/\delta)}{n}}\right\} \leq \delta
\]

Therefore, with probability $1-\delta$
\[
\hat{V}_{\hat{j}} \leq \min_{j \in [m]} V_j + C K_Y M^2 \sqrt{\frac{\log(md/\delta)}{n}}
\]

This completes the proof of all three parts of Theorem~\ref{thm:basis-selection}.
\end{proof}

\begin{corollary}\label{cor:greedy_gppi_coverage}
Under the assumptions of Theorem \ref{thm:basis-selection}, the estimator of GPPI with greedy selection satisfies
$$
\Pr(\theta^* \in I_{\hat j}) \geq 1 - \alpha - \frac{C'\, z_{\alpha/2} K_Y M^2}{V_{j^*}}\sqrt{\frac{\log(mdn)}{n}}.
$$
which proves asymptotic coverage.
\end{corollary}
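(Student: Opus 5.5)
The plan is to split the non-coverage event of $I_{\hat j}=[\hat\theta^{\hat j}_{\text{GPPI}}\pm z_{\alpha/2}\sqrt{\hat V_{\hat j}/n}]$ according to whether the empirical variance of the selected transformation is close to the oracle variance $V_{j^*}$. Fix $\delta=1/n$ and let $E$ be the event of Lemma~\ref{lem:uniform-residual-var}, on which $\max_j|\hat V_j-V_j|\le \varepsilon_n:=CK_YM^2\sqrt{\log(mdn)/n}$. Then $\Pr(E^c)\le 1/n$. On $E$, Theorem~\ref{thm:basis-selection}(iii) gives $\hat V_{\hat j}\le V_{j^*}+\varepsilon_n$, and also $\hat V_{\hat j}\ge V_{\hat j}-\varepsilon_n\ge V_{j^*}-\varepsilon_n$. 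Hence
\[
\sqrt{\hat V_{\hat j}}\ \ge\ \sqrt{V_{j^*}-\varepsilon_n}\ \ge\ \sqrt{V_{j^*}}\,(1-\varepsilon_n/V_{j^*}),
\]
so the random half-width is bounded below by a deterministic quantity.

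The coverage bound then follows from
\begin{align*}
\Pr(\theta^*\notin I_{\hat j}) &\le \Pr(E^c)+\Pr\bigl(\sqrt n|\hat\theta^{\hat j}-\theta^*|>z_{\alpha/2}\sqrt{V_{j^*}}(1-\varepsilon_n/V_{j^*})\bigr).
\end{align*}
For the second term, I would first replace $\hat j$ by $j^*$. By Lemma~\ref{lem:selection-consistency}, this costs only $\Pr(\hat j\ne j^*)$. On $E$, and once $\varepsilon_n<\Delta/2$, we have $\hat j=j^*$, so this cost is absorbed into $\Pr(E^c)$.

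It then remains to bound $\Pr\bigl(\sqrt n|\hat\theta^{j^*}-\theta^*|>z_{\alpha/2}\sqrt{V_{j^*}}(1-\eta)\bigr)$ with $\eta=\varepsilon_n/V_{j^*}$. Using the asymptotic normality of Theorem~\ref{thm:gppi-main}(i), this probability is $2\bar\Phi\bigl(z_{\alpha/2}(1-\eta)\bigr)+o(1)$. By the mean value theorem and $\phi\le 1/\sqrt{2\pi}$, we get $2\bar\Phi(z_{\alpha/2}(1-\eta))\le \alpha+2\phi(0)z_{\alpha/2}\eta$. This produces exactly the term $C'z_{\alpha/2}K_YM^2V_{j^*}^{-1}\sqrt{\log(mdn)/n}$. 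The remaining $1/n$ is smaller than this term for large $n$ and can be absorbed into $C'$.

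The main obstacle is the $o(1)$ term coming from the CLT, which is not quantified. If a genuinely finite-sample bound is wanted, I would instead use a Berry--Esseen bound for the bounded i.i.d.\ summands $R_i(\lambda^*_{j^*})$ and the unlabeled sum from Lemma~\ref{lem:gppi-fixed}; boundedness gives an $O(1/\sqrt n)$ error. I would control the plug-in error $\sqrt n(\hat\lambda-\lambda^*)^T(\bar\phi^U_N-\bar\phi^L_n)$ via Lemma~\ref{lem:uniform-weight-error} and Hoeffding's inequality, which gives $O(\log(mdn)/\sqrt n)$ with probability $1-O(1/n)$. Both errors are dominated by the stated rate up to constants. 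In the worst case, I would settle for stating the inequality up to an $o(1)$ term, which still yields the asymptotic coverage claim $\liminf\Pr(\theta^*\in I_{\hat j})\ge 1-\alpha$.
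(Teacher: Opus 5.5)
Your argument follows the paper's proof in all essential steps. You use the event of Lemma~\ref{lem:uniform-residual-var} to get $\hat V_{\hat j}\ge V_{\hat j}-\varepsilon\ge V_{j^*}-\varepsilon$, apply $\sqrt{1-x}\ge 1-x$ to pass to a deterministic half-width, then use a normal approximation and the $1/\sqrt{2\pi}$-Lipschitz bound on $\Phi$, with a failure probability polynomially small in $n$. The paper takes $\delta=1/\sqrt n$; your $\delta=1/n$ works equally well.

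Two of your deviations are improvements. First, your union bound $\Pr(\theta^*\notin I_{\hat j})\le\Pr(E^c)+\Pr(\cdots)$ is the correct form of what the paper does by conditioning on $\mathcal E$; its step $\Pr(\theta^*\in I^*\mid\mathcal E)=\Pr(|Z_n|\le\cdots)$ silently drops the conditioning. Second, reducing to the fixed index $j^*$ on $E$ makes explicit which statistic a Berry--Esseen bound could act on. The paper instead applies Berry--Esseen directly to $Z_n$, which involves $\hat j$ and $\hat\lambda$ and is not a sum of independent terms.

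The step that does not go through is your claim that the plug-in error is dominated by the stated rate. Your bound $\sqrt n\,|(\hat\lambda-\lambda^*)^T(\bar\phi^U_N-\bar\phi^L_n)|=O(\log(mdn)/\sqrt n)$ equals $\sqrt{\log(mdn)}\cdot\sqrt{\log(mdn)/n}$. It therefore exceeds the target term by the unbounded factor $\sqrt{\log(mdn)}$, and no constant $C'$ absorbs it. Re-tuning the failure probability does not help. With only a high-probability bound on a perturbation that is a product of two $n^{-1/2}$-scale deviations, you pay roughly $\delta+\log(1/\delta)/\sqrt n$, which is at least of order $\log n/\sqrt n$.

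So your refined route proves $\Pr(\theta^*\in I_{\hat j})\ge 1-\alpha-O(\log(mdn)/\sqrt n)$. That gives asymptotic coverage but not the displayed inequality. To reach the stated rate you need an $O(n^{-1/2})$ normal-approximation error for the whole statistic, i.e.\ the linear oracle part $T_n$ plus the plug-in perturbation $D_n$. One option is a Berry--Esseen bound for nonlinear statistics of the form $T_n+D_n$ (Chen--Shao type), which uses moment and leave-one-out control of $D_n$ rather than a tail bound.

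The paper itself only asserts this $O(n^{-1/2})$ term via Berry--Esseen, so your $o(1)$ fallback loses nothing relative to what is actually established there. Just do not claim that the $\log(mdn)/\sqrt n$ term is absorbed.
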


\begin{proof}
Fix $\delta \in (0,1)$ and let $\varepsilon = C K_Y M^2 \sqrt{\log(md/\delta)/n}$. By Lemma B.12, with probability at least $1-\delta$,
$$\hat V_{\hat j} \geq V_{\hat j} - \varepsilon \geq V_{j^*} - \varepsilon.$$
Denoting by $\mathcal{E}$ this event,  $I_{\hat j} = \hat\theta^{\hat j}_{\mathrm{GPPI}} \pm z_{\alpha/2}\sqrt{\hat V_{\hat j}/n}$ and $I^*$ the analogous CI with $V_{j^*} - \varepsilon$ in place of $\hat V_{\hat j}$, we have $I^* \subseteq I_{\hat j}$. By Theorem 3.2(ii), $Z_n := \sqrt{n}(\hat\theta^{\hat j}_{\mathrm{GPPI}} - \theta^*)/\sqrt{V_{j^*}} \xrightarrow{d} \mathcal{N}(0,1)$, hence
$$\Pr(\theta^* \in I_{\hat j} \mid \mathcal{E}) \geq \Pr(\theta^* \in I^* \mid \mathcal{E}) = \Pr\!\left(|Z_n| \leq z_{\alpha/2}\sqrt{1 - \varepsilon/V_{j^*}}\right) \geq \Pr\!\left(|Z_n| \leq z_{\alpha/2}(1 - \varepsilon/V_{j^*})\right),$$
Denoting by $\Phi$ the standard normal CDF and using Berry–Esseen, assuming finite third moments, together with $1/\sqrt{2\pi}$-Lipschitz continuity of $\Phi$,
$$\Pr(\theta^* \in I_{\hat j} \mid \mathcal{E}) \geq 2\Phi\!\left(z_{\alpha/2}(1 - \varepsilon/V_{j^*})\right) - 1 - O(n^{-1/2}) \geq 1 - \alpha - \frac{2 z_{\alpha/2}}{\sqrt{2\pi}\, V_{j^*}}\,\varepsilon - O(n^{-1/2}).$$
Accounting for the failure event of probability at most $\delta$,
$$\Pr(\theta^* \in I_{\hat j}) \geq 1 - \alpha - \frac{2 z_{\alpha/2}}{\sqrt{2\pi}\, V_{j^*}}\,\varepsilon - O(n^{-1/2}) - \delta.$$
Choosing $\delta = 1/\sqrt{n}$ gives $\varepsilon = O(\sqrt{\log(mdn)/n})$ and
$$\Pr(\theta^* \in I_{\hat j}) \geq 1 - \alpha - \frac{C'\, z_{\alpha/2} K_Y M^2}{V_{j^*}}\sqrt{\frac{\log(mdn)}{n}}.$$

\end{proof}
\subsection{Proof of Theorem~\ref{thm:dimension-scaling}}

We prove that GPPI++ remains asymptotically valid when the transformation dimension $d = d_n$ grows with sample size $n$ at an appropriate rate. In all the proof, we assume without loss of generality that $c_d \leq 1 \leq M_d$ for all $d$. We demonstrate below several auxiliary lemmas before proving the theorem.

\subsubsection{Concentration of Sample Covariance}

\begin{lemma}[Covariance Concentration]
\label{lem:cov-conc}
Under the assumptions of Theorem \ref{thm:dimension-scaling}
\[
\|\hat{\Sigma}_d - \Sigma_d\|_{\text{op}} = O_p\left(\sqrt{\frac{M_d^2 \log d}{N}}\right).
\]
\end{lemma}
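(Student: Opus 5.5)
This follows from Lemma~\ref{lem:bernstein} applied to the unlabeled sample, together with a separate treatment of the mean-centering in $\hat{\Sigma}_d$ defined in \eqref{eq:sigma-phi-hat}. Set $Z_j = \phi_d(f(\tilde{X}_j))$ and $\mu_d = \mathbb{E}[Z_j]$, and let $\tilde{Z}_j = Z_j - \mu_d$. The vectors $\tilde Z_j$ are i.i.d., centered, with covariance $\Sigma_d$. By the triangle and Jensen inequalities, $\|\tilde Z_j\|^2 \le 4M_d$ almost surely.

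The first step is an exact algebraic decomposition:
\[
\hat{\Sigma}_d = \frac{N}{N-1}\Big(\frac1N\sum_{j=1}^N \tilde Z_j\tilde Z_j^T - (\bar{\phi}_N^U-\mu_d)(\bar{\phi}_N^U-\mu_d)^T\Big).
\]
This gives
\[
\|\hat{\Sigma}_d-\Sigma_d\|_{\text{op}} \le \frac{N}{N-1}\Big\|\frac1N\sum_j \tilde Z_j\tilde Z_j^T-\Sigma_d\Big\|_{\text{op}} + \frac{N}{N-1}\|\bar{\phi}_N^U-\mu_d\|^2 + \frac{1}{N-1}\|\Sigma_d\|_{\text{op}}.
\]
The last term is at most $M_d/(N-1)$, since $\|\Sigma_d\|_{\text{op}}\le M_d$ as shown in the proof of Lemma~\ref{lem:bernstein}. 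Because $M_d\ge 1$, this is $O(\sqrt{M_d^2\log d/N})$ whenever $\log d \ge 1$ and $M_d^2\log d\lesssim N$. That regime holds here: the theorem's hypothesis gives $M_d^2\sqrt{\log d}=o(\sqrt n)$, and $N=\Theta(n)$.

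For the main term, I apply Lemma~\ref{lem:bernstein} to the $\tilde Z_j$ with $M$ replaced by $4M_d$, and choose $t = C M_d\sqrt{\log d/N}$. With $\|\Sigma_d\|_{\text{op}}\le M_d$, the exponent is of order $-C^2 M_d^2\log d /(M_d(M_d + t))$. Since $t=o(1)\le M_d$, this is at most $-c\,C^2\log d$. The tail probability is then $2d\cdot d^{-cC^2}$, which is small once $C$ is large. If $d$ does not grow, replace $\log d$ by $\log(2d)\vee 1$ so the bound holds uniformly. This is the same computation as in the "in particular" part of Lemma~\ref{lem:bernstein}. The hypotheses that lemma needs are $M_d^2\log d=o(N)$ and $M_d=\Omega(1)$, and both follow from the theorem's assumption together with $c_d\le 1\le M_d$.

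For the mean term, $\mathbb{E}\|\bar{\phi}_N^U-\mu_d\|^2 = \mathrm{tr}(\Sigma_d)/N \le 4M_d/N$, so Markov's inequality gives $\|\bar{\phi}_N^U-\mu_d\|^2 = O_p(M_d/N)$. This is dominated by $\sqrt{M_d^2\log d/N}$ because $M_d/N \le M_d/\sqrt N$ for $N\ge1$.

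The main obstacle is bookkeeping rather than any deep step. One issue is that Lemma~\ref{lem:bernstein} is stated for the uncentered second moment $\frac1n\sum Z_iZ_i^T$ with $\Sigma$ treated as its expectation. I therefore apply it to the centered vectors $\tilde Z_j$, where this is literally true, and handle the empirical centering through the rank-one term above. The other issue is keeping the $O_p$ bound uniform as $d=d_n$ grows. This is handled by making the constant $C$ independent of $n$ and checking that the $d^{-cC^2+1}$ tail vanishes, or stays below any prescribed $\varepsilon$, along the sequence.
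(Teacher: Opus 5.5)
Your proof is correct and uses the same core tool as the paper: matrix Bernstein over the $N$ unlabeled feature vectors, with $\|\Sigma_d\|_{\text{op}}\le M_d$ and a deviation level of order $M_d\sqrt{\log d/N}$. Your bookkeeping differs from the paper's in ways that matter.

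\textbf{Centering.} The paper sets $Z_i=\phi_d(f(X_i^U))\phi_d(f(X_i^U))^T-\Sigma_d$ and asserts $\hat{\Sigma}_d-\Sigma_d=\frac1N\sum_i Z_i$ with $\mathbb{E}[Z_i]=0$. In fact $\mathbb{E}[Z_i]=\mu_d\mu_d^T$, and that identity holds only for a known-mean, $1/N$-normalized estimator, not for the empirically centered, $1/(N-1)$-normalized $\hat{\Sigma}_d$ of \eqref{eq:sigma-phi-hat}. Your exact decomposition closes this gap without changing the rate. It splits $\hat{\Sigma}_d-\Sigma_d$ into the centered second-moment term, the rank-one term $(\bar{\phi}_N^U-\mu_d)(\bar{\phi}_N^U-\mu_d)^T$, which is $O_p(M_d/N)$ by Markov, and the $\Sigma_d/(N-1)$ correction. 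Applying Lemma~\ref{lem:bernstein} only to the centered $\tilde Z_j$ also avoids its non-centered clause, which makes the same conflation between $\frac1n\sum_i Z_iZ_i^T$ and a covariance estimate.

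\textbf{Growth of $d$.} The paper fixes $t=\sqrt{24NM_d^2\log d}$ and gets the tail bound $2/d$, which vanishes only when $d_n\to\infty$. Your free constant $C$, with $\log d$ replaced by $\log(2d)\vee 1$, gives a genuine $O_p$ bound for bounded $d$ as well. That replacement is actually necessary, since at $d=1$ the stated rate is identically zero.

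The paper's route is shorter: one Bernstein application and no mean term. Yours costs one extra Markov step but is valid for the estimator actually used.
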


\begin{proof}
Without loss of generality, we assume that $\kappa_0 \geq 1$.
For transformation $\phi_d: \mathbb{R} \to \mathbb{R}^d$, we have $\phi_d(f(X_i^U))$ are i.i.d. random vectors. Define the centered random matrix:
\[
Z_i = \phi_d(f(X_i^U))\phi_d(f(X_i^U))^T - \Sigma_d \in \mathbb{R}^{d \times d}.
\]

Then $\hat{\Sigma}_d - \Sigma_d = \frac{1}{N}\sum_{i=1}^N Z_i$ with $\Ebb[Z_i] = 0$.

By the matrix Bernstein inequality (Theorem 1.6.2 in \citet{tropp2015introduction}), for independent centered self-adjoint matrices $\tilde{Z}_1, \ldots, \tilde{Z}_N$ with $\|\tilde{Z}_i\|_{\text{op}} \leq L$ almost surely:
\[
\mathbb{P}\left\{\left\|\sum_{i=1}^N \tilde{Z}_i\right\|_{\text{op}} \geq t\right\} \leq 2d \exp\left(-\frac{t^2/2}{\sigma^2 + Lt/3}\right),
\]
where $\sigma^2 = \left\|\sum_{i=1}^N \Ebb[Z_i^2]\right\|_{\text{op}}$. To use this concentration bound on $(Z_i)_i$, we need to prove bounds $\sigma^2$ and $L$.

\noindent
\noindent
\textbf{Bound on $\sigma^2$:} By Assumption, $\|\phi_d(f(X))\|^2 \leq M_d$. Therefore:
\begin{align*}
\Ebb[Z_i^2] &= \Ebb[(\phi_d(f(X))\phi_d(f(X))^T - \Sigma_d)^2] \\
&\preceq 2\Ebb[(\phi_d(f(X))\phi_d(f(X))^T)^2] + 2\Sigma_d^2 \\
&= 2\Ebb[\|\phi_d(f(X))\|^2 \cdot \phi_d(f(X))\phi_d(f(X))^T] + 2\Sigma_d^2 \\
&\preceq 2M_d \cdot \Ebb[\phi_d(f(X))\phi_d(f(X))^T] + 2\Sigma_d^2 \\
&= 2M_d \Sigma_d + 2\Sigma_d^2 \\
&\preceq 2M_d \|\Sigma_d\|_{\text{op}} I_d + 2\|\Sigma_d\|_{\text{op}}^2 I_d.
\end{align*}
Note that $\|\Sigma_d\|_{\text{op}} = \sup_{\|v\|=1} v^T\Sigma_d v = \sup_{\|v\|=1} \mathbb{E}[(v^T\phi_d(f(X)))^2] \leq \mathbb{E}[\|\phi_d(f(X))\|^2] \leq M_d$ by assumption. Thus:
\begin{align*}
\mathbb{E}[Z_i^2] &\preceq 2M_d^2 I_d + 2M_d^2 I_d = 4M_d^2 I_d.
\end{align*}

Therefore:
\[
\sigma^2 = \left\|\sum_{i=1}^N \Ebb[Z_i^2]\right\|_{\text{op}} = O(NM_d^2).
\]

\noindent
\textbf{Bound on $L$:} We need to bound $\|Z_i\|_{\text{op}}$. By the triangle inequality:
\[
\|Z_i\|_{\text{op}} = \|\phi_d(f(X_i))\phi_d(f(X_i))^T - \Sigma_d\|_{\text{op}} \leq \|\phi_d(f(X_i))\phi_d(f(X_i))^T\|_{\text{op}} + \|\Sigma_d\|_{\text{op}}.
\]

For any vector $v \in \mathbb{R}^d$, the rank-one matrix $vv^T$ has operator norm $\|vv^T\|_{\text{op}} = \|v\|^2$ (since $vv^T$ has a single non-zero eigenvalue equal to $\|v\|^2$). Therefore:
\[
\|\phi_d(f(X_i))\phi_d(f(X_i))^T\|_{\text{op}} = \|\phi_d(f(X_i))\|^2 \leq M_d\quad \text{a.s.}\;.
\]
Since $\|\Sigma_d\|_{\text{op}} \leq M_d$ (as shown above), we have:
\[
\|Z_i\|_{\text{op}} \leq M_d + M_d = 2M_d\;.
\]
So we can take $L = 2M_d$.

\noindent
\textbf{Applying Bernstein:} We have shown that:
\begin{itemize}
\item $\mathbb{E}[Z_i^2] \preceq 4M_d^2 I_d$, so $\sigma^2 = \left\|\sum_{i=1}^N \Ebb[Z_i^2]\right\|_{\text{op}} \leq 4NM_d^2$
\item $\|Z_i\|_{\text{op}} \leq 2M_d$ almost surely
\end{itemize}

By the matrix Bernstein inequality, for any $t > 0$:
\begin{align*}
\mathbb{P}\left\{\left\|\sum_{i=1}^N Z_i\right\|_{\text{op}} \geq t\right\} 
&\leq 2d \exp\left(-\frac{t^2/2}{\sigma^2 + Lt/3}\right) \leq 2d \exp\left(-\frac{t^2/2}{4NM_d^2 + 2M_d t/3}\right)\\
&\leq 2d \exp\left(- \frac{t^2}{8 M_d^2 N + 2 M_d t}\right)
\end{align*}
Set $t = \sqrt{24 N M_d^2\log d}$. Then:
\begin{align*}
\frac{t^2}{8 M_d^2 N + 2 M_d t} 
&= \frac{24 N M_d^2\log d}{8 M_d^2 N + 2 M_d\sqrt{24 N M_d^2\log d}} \\
&= \frac{6 \log(d)}{2 +  \sqrt{6 \log(d) / N}} \\
&\geq 2 \log d
\;,
\end{align*}
where the last inequality holds for $d$ sufficiently large, as $\log d = o(n)$. Thus
\[
\mathbb{P}\left\{\left\|\sum_{i=1}^N Z_i\right\|_{\text{op}} \geq \sqrt{24 N M_d^2\log d}\right\} 
\leq 2d \exp\left(-2\log d\right) = \frac{2}{d} \to 0.
\]
This gives:
\[
\left\|\sum_{i=1}^N Z_i\right\|_{\text{op}} = O_p\left(\sqrt{N M_d^2 \log d}\right).
\]
Therefore:
\[
\|\hat{\Sigma}_d - \Sigma_d\|_{\text{op}} = \left\|\frac{1}{N}\sum_{i=1}^N Z_i\right\|_{\text{op}} = O_p\left(\sqrt{\frac{M_d^2 \log d}{N}}\right).
\]
\end{proof}

\subsubsection{Concentration of Cross-Covariance}

\begin{lemma}[Cross-Covariance Concentration]
\label{lem:cross-cov-conc}
Under the theorem assumptions:
\[
\|\hat{\sigma}_{Y,d} - \sigma_{Y,d}\| = O_p\left(\sqrt{\frac{M_d}{n}}\right).
\]
\end{lemma}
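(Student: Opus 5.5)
Under the assumptions of Theorem~\ref{thm:dimension-scaling}, $Y$ is only assumed to have a finite second moment, not to be bounded. So I would not reuse Lemma~\ref{lem:conc-covariance-vector}, which needs $Y^2\le K_Y$. Instead I would prove the rate by a direct second-moment (Chebyshev/Markov) argument; the stated rate $\sqrt{M_d/n}$ has no $\log d$ factor, which is exactly what such an argument gives. Write $\tilde Y_i = Y_i-\Ebb[Y]$, $\tilde Z_i = \phi_d(f(X_i)) - \mu_{\phi}$ with $\mu_\phi=\Ebb[\phi_d(f(X))]$. From \eqref{eq:sigma-y-phi-hat}, the sample cross-covariance decomposes as
\[
\hat{\sigma}_{Y,d} = \frac{n}{n-1}\Big(\frac1n\sum_{i=1}^n \tilde Y_i \tilde Z_i - \bar{\tilde Y}_n\,\bar{\tilde Z}_n\Big),
\]
where $\bar{\tilde Y}_n,\bar{\tilde Z}_n$ are the centered sample means. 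I would bound each piece separately and then absorb the factor $n/(n-1)\to 1$ and the resulting $O(1/n)$ bias term $\frac{1}{n-1}\sigma_{Y,d}$.

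\textbf{Main term.} Set $S_i = \tilde Y_i\tilde Z_i - \sigma_{Y,d}$. These are i.i.d. and centered, so
\[
\Ebb\Big\|\frac1n\sum_i S_i\Big\|^2 = \frac1n \Ebb\|S_1\|^2 \le \frac1n\Ebb[\tilde Y^2\|\tilde Z\|^2].
\]
Since $\|\phi_d(f(X))\|^2\le M_d$ almost surely, Jensen gives $\|\mu_\phi\|\le\sqrt{M_d}$, hence $\|\tilde Z\|^2\le 4M_d$ almost surely. Therefore $\Ebb[\tilde Y^2\|\tilde Z\|^2]\le 4M_d\sigma_Y^2$. Markov's inequality applied to the squared norm then gives $\|\frac1n\sum_i S_i\| = O_p(\sqrt{M_d/n})$, with a constant depending only on $\sigma_Y^2<\infty$.

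\textbf{Product and bias terms.} By Chebyshev, $|\bar{\tilde Y}_n| = O_p(n^{-1/2})$. By the same vector second-moment computation, $\Ebb\|\bar{\tilde Z}_n\|^2 = \Ebb\|\tilde Z\|^2/n\le 4M_d/n$, so $\|\bar{\tilde Z}_n\|=O_p(\sqrt{M_d/n})$. Hence the product term is $O_p(\sqrt{M_d}/n)$, which is $o_p(\sqrt{M_d/n})$. Also $\|\sigma_{Y,d}\|\le \sigma_Y\sqrt{\Ebb\|\tilde Z\|^2}\le 2\sigma_Y\sqrt{M_d}$ by Cauchy--Schwarz, so the bias term $\frac1{n-1}\sigma_{Y,d}$ is $O(\sqrt{M_d}/n)$. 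Combining the three pieces with the triangle inequality yields the claim.

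\textbf{Main obstacle.} The only delicate point is uniformity in $d=d_n$: the $O_p$ constants must not depend on $d$. The argument above handles this because every bound passes through the scalar quantities $\Ebb\|\cdot\|^2$, which are controlled by $M_d$ and $\sigma_Y^2$ alone, with no dimension factor. That is why I would use Markov's inequality rather than matrix Bernstein here; Bernstein would need bounded $Y$ and would introduce a $\log d$ factor.
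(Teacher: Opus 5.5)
Your proof is correct and uses the same idea as the paper's: bound the second moment of an i.i.d. centered average by a $d$-free constant times $M_d/n$, then conclude with Chebyshev/Markov rather than matrix Bernstein. You also explicitly reduce the sample-centered, $\frac{1}{n-1}$-normalized estimator to the population-centered average plus the $O_p(\sqrt{M_d}/n)$ product and bias terms. The paper skips this step: it treats $\hat{\sigma}_{Y,d}$ as $\frac1n\sum_i Y_i\phi_d(f(X_i))$ with mean $\sigma_{Y,d}$, which holds only when $\Ebb[Y]\,\Ebb[\phi_d(f(X))]=0$, so your version is the more careful one.
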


\begin{proof}
Define $W_i = Y_i \phi_d(f(X_i^L)) - \sigma_{Y,d}$. 
Since $(W_i)_i$ are iid and have null expectations, it holds that
\begin{equation}\label{eq:sigma_diff_W}
\Ebb[\|\hat{\sigma}_{Y,d} - \sigma_{Y,d}\|^2] = \frac{1}{n}\Ebb[\|W_1\|^2]\;.
\end{equation}
Furthermore, using the variance decomposition formula then  gives
\begin{align*}
\Ebb[\|W_1\|^2]
&= \Ebb[\|Y\phi_d(f(X)) - \sigma_{Y,d}\|^2] \\
&= \Ebb[\|Y\phi_d(f(X))\|^2] - \|\sigma_{Y,d}\|^2 \\
&\leq \Ebb[\|Y\phi_d(f(X))\|^2] \\
&= \Ebb[Y^2\|\phi_d(f(X))\|^2]\\
&\leq \Ebb[Y^2] M_d\;.
\end{align*}
The last inequality holds because $\|\phi_d(f(X))\|^2 \leq M_d$ almost surely. Substituting in \eqref{eq:sigma_diff_W} and using that $Y$ has bounded second moment yields
\[
\Ebb[\|\hat{\sigma}_{Y,d} - \sigma_{Y,d}\|^2] \leq \frac{\Ebb[Y^2] M_d}{n} = O(M_d/n)\;,
\]
and we conclude from Chebyshev inequality that $\|\hat{\sigma}_{Y,d} - \sigma_{Y,d}\| = O_p(\sqrt{M_d/n})$.
\end{proof}

\subsubsection{Weight Estimation Error}

\begin{lemma}[Weight Error]
\label{lem:weight-error}
Under the assumptions of Theorem \ref{thm:dimension-scaling}
\[
\|\hat{\lambda}_d - \lambda^*_d\| 
\leq O_p\left(\frac{M_d^{3/2} \sqrt{\log(d)}}{c_d^2 \sqrt{n}}\right)\;,
\]
\end{lemma}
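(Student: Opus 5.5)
We will follow the same decomposition as in the proof of Lemma~\ref{lem:uniform-weight-error}, now tracking the dependence on $d$ through $M_d$ and $c_d$. Writing $\lambda^*_d = \frac{1}{1+r}\Sigma_d^{-1}\sigma_{Y,d}$ and $\hat\lambda_d = \frac{1}{1+r_n}\hat\Sigma_d^{-1}\hat\sigma_{Y,d}$, we split
\[
\hat\lambda_d - \lambda^*_d = \tfrac{1}{1+r_n}\hat\Sigma_d^{-1}(\hat\sigma_{Y,d}-\sigma_{Y,d}) + \tfrac{1}{1+r_n}(\hat\Sigma_d^{-1}-\Sigma_d^{-1})\sigma_{Y,d} + \Big(\tfrac{1}{1+r_n}-\tfrac{1}{1+r}\Big)\Sigma_d^{-1}\sigma_{Y,d},
\]
and bound each term in norm by operator norm times vector norm. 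The last term is deterministic and, since $r_n\to r$, is negligible relative to the others. If its rate is not controlled, we can absorb it by defining $\lambda^*_d$ with $r_n$, which is the natural finite-sample target.

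\textbf{Step 1 (invertibility).} By Lemma~\ref{lem:cov-conc}, $\|\hat\Sigma_d-\Sigma_d\|_{\text{op}} = O_p(\sqrt{M_d^2\log d/N})$. Since $N=\Theta(n)$, this is $O_p(M_d\sqrt{\log d/n})$. The hypothesis $M_d^2\sqrt{\log d}/c_d^2 = o(\sqrt n)$, together with $c_d\le 1\le M_d$, implies $M_d\sqrt{\log d}/\sqrt n = o(c_d)$. Hence, with probability tending to one, $\|\hat\Sigma_d-\Sigma_d\|_{\text{op}}\le c_d/2$. Weyl's inequality then gives $\lambda_{\min}(\hat\Sigma_d)\ge c_d/2$, so $\|\hat\Sigma_d^{-1}\|_{\text{op}}\le 2/c_d$, while $\|\Sigma_d^{-1}\|_{\text{op}}\le 1/c_d$. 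Applying the resolvent identity $\hat\Sigma_d^{-1}-\Sigma_d^{-1}=\hat\Sigma_d^{-1}(\Sigma_d-\hat\Sigma_d)\Sigma_d^{-1}$ yields
\[
\|\hat\Sigma_d^{-1}-\Sigma_d^{-1}\|_{\text{op}} = O_p\!\left(\frac{M_d\sqrt{\log d}}{c_d^2\sqrt n}\right).
\]

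\textbf{Step 2 (vector terms).} By Cauchy--Schwarz, $\|\sigma_{Y,d}\|\le \sqrt{\Ebb[Y^2]\,M_d}=O(\sqrt{M_d})$. By Lemma~\ref{lem:cross-cov-conc}, $\|\hat\sigma_{Y,d}-\sigma_{Y,d}\| = O_p(\sqrt{M_d/n})$. The first term is therefore $O_p(\sqrt{M_d}/(c_d\sqrt n))$. The second term is $O_p(M_d^{3/2}\sqrt{\log d}/(c_d^2\sqrt n))$, which dominates the first because $c_d\le1\le M_d$ and $\log d\ge 1$ for $d\ge3$. Summing the bounds gives the claimed rate.

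\textbf{Main obstacle.} The delicate point is Step 1. The $O_p$ statement of Lemma~\ref{lem:cov-conc} has to be converted into the event $\{\|\hat\Sigma_d-\Sigma_d\|_{\text{op}}\le c_d/2\}$ holding with probability tending to one, even though $c_d$ may shrink with $n$. This is exactly where the growth condition is used: the concentration tail bound $2/d$ from that lemma, or a fixed-constant version of it, must beat $c_d$ uniformly. We then combine the $O_p$ statements on this event, and the complement has vanishing probability.

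A secondary subtlety is that $\hat\sigma_{Y,d}$ uses the empirical centering by $\bar\phi_n^L$ and $\bar Y_n$, whereas Lemma~\ref{lem:cross-cov-conc} is stated for the uncentered form. The difference is $\bar{\tilde Y}_n(\bar\phi_n^L-\mu_\phi)$, which is $O_p(\sqrt{M_d}/n)$ and therefore negligible.
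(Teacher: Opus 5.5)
Your proposal is correct and follows essentially the same route as the paper. Both arguments use Weyl's inequality with the growth condition to get $\|\hat\Sigma_d^{-1}\|_{\text{op}} = O_p(1/c_d)$, the same two-term split via the resolvent identity, the bound $\|\sigma_{Y,d}\| = O(\sqrt{M_d})$, and $c_d\le 1\le M_d$ to let the second term dominate. One caveat: your extra term $(\tfrac{1}{1+r_n}-\tfrac{1}{1+r})\Sigma_d^{-1}\sigma_{Y,d}$ is not of the stated order without a rate on $r_n\to r$, so your "negligible" claim does not hold as stated. The paper's proof sidesteps this by writing $\hat\lambda_d$ with $1/(1+r)$, so your fallback of absorbing that factor into the target is what the paper implicitly does.
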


\begin{proof}
Using Weyl's inequality for eigenvalues then Lemma \ref{lem:cov-conc} yields
\[
|\rho_{\min}(\hat{\Sigma}_d) - \rho_{\min}(\Sigma_d)| \leq \|\hat{\Sigma}_d - \Sigma_d\|_{\text{op}} = O_p\left(\sqrt{\frac{M_d^2 \log d}{N}}\right)\;.
\]
Moreover, using the assumptions $\rho_{\min}(\Sigma_d) \geq c_d$ and $\sqrt{M_d^2 \log d/N} = o(c_d)$ we deduce that
\begin{align*}
\rho_{\min}(\hat{\Sigma}_d) 
&\geq c_d - |\rho_{\min}(\hat{\Sigma}_d) - \rho_{\min}(\Sigma_d)|\\
&\geq c_d - o_p\left(c_d\right)    \\
&= \Omega_p(c_d)
\end{align*}
where the last inequality holds for $n$ sufficiently large. It follows that
\[
\|\hat{\Sigma}_d^{-1}\|_{\text{op}} = \frac{1}{\rho_{\min}(\hat{\Sigma}_d)} = O_p(1/c_d)\;.
\]

\noindent
Decompose:
\begin{align}
\hat{\lambda}_d - \lambda^*_d &= \frac{1}{1+r}\hat{\Sigma}_d^{-1}\hat{\sigma}_{Y,d} - \frac{1}{1+r}\Sigma_d^{-1}\sigma_{Y,d}
\notag
\\
&= \frac{1}{1+r}\hat{\Sigma}_d^{-1}(\hat{\sigma}_{Y,d} - \sigma_{Y,d}) + \frac{1}{1+r}(\hat{\Sigma}_d^{-1} - \Sigma_d^{-1})\sigma_{Y,d}.
\label{aligneq:lambda_diff}
\end{align}

\noindent
\textbf{First term:} By Lemma~\ref{lem:cross-cov-conc} and $\|\hat{\Sigma}_d^{-1}\|_{\text{op}} = O_p(1/c_d)$:
\begin{equation}\label{eq:lambda_diff-term1}
\left\|\frac{1}{1+r}\hat{\Sigma}_d^{-1}(\hat{\sigma}_{Y,d} - \sigma_{Y,d})\right\| 
\leq O_p(1/c_d) \cdot O_p\left(\sqrt{\frac{M_d}{n}}\right) = O_p\left(\frac{\sqrt{M_d}}{c_d\sqrt{n}}\right).
\end{equation}

\noindent
\textbf{Second term:} Using the identity $\hat{\Sigma}_d^{-1} - \Sigma_d^{-1} = -\hat{\Sigma}_d^{-1}(\hat{\Sigma}_d - \Sigma_d)\Sigma_d^{-1}$, we obtain
\begin{align*}
\left\|\frac{1}{1+r}(\hat{\Sigma}_d^{-1} - \Sigma_d^{-1})\sigma_{Y,d}\right\| &\leq \frac{1}{1+r} \cdot \|\hat{\Sigma}_d^{-1}\|_{\text{op}} \cdot \|\Sigma_d^{-1}\|_{\text{op}} \cdot \|\hat{\Sigma}_d - \Sigma_d\|_{\text{op}} \cdot \|\sigma_{Y,d}\| \\
&\leq O_p(1/c_d) \cdot 1/c_d \cdot O_p\left(\sqrt{\frac{M_d^2 \log d}{N}}\right) \cdot \|\sigma_{Y,d}\|.
\end{align*}
The term $\|\sigma_{Y,d}\|$ can be bounded using Jensen's inequality and assumption that $\phi_d(f(X))$ is bounded a.s.:
\[
\|\sigma_{Y,d}\|^2 
= \|\Ebb[Y \phi_d(f(X))] \|^2
\leq \Ebb[ \|Y \phi_d(f(X))\|^2]
= \Ebb[ Y^2 \|\phi_d(f(X))\|^2]
\leq \Ebb[ Y^2] M_d,
\]
thus $\|\sigma_{Y,d}\| = O(\sqrt{M_d})$. It follows that
\begin{equation}\label{eq:lambda_diff-term2}
\left\|\frac{1}{1+r}(\hat{\Sigma}_d^{-1} - \Sigma_d^{-1})\sigma_{Y,d}\right\| 
= O_p\left(\frac{\sqrt{M_d^3 \log d}}{c_d^2\sqrt{n}}\right)\;.
\end{equation}

Finally, combining \eqref{aligneq:lambda_diff}, \eqref{eq:lambda_diff-term1} and \eqref{eq:lambda_diff-term2} we deduce that
\[
\|\hat{\lambda}_d - \lambda^*_d\| 
\leq O_p\left(\frac{\sqrt{M_d}}{c_d\sqrt{n}}\right) + O_p\left(\frac{\sqrt{M_d^3 \log d}}{c_d^2\sqrt{n}}\right)
= O_p\left(\frac{\sqrt{M_d/n}}{c_d}\Big(1 + \frac{M_d\sqrt{\log d}}{c_d}\;\Big)\right)\;,
\]
and with the assumption that $c_d \leq 1 \leq M_d$ we deduce that
\[
\|\hat{\lambda}_d - \lambda^*_d\| 
\leq O_p\left(\frac{M_d^{3/2} \sqrt{\log(d)}}{c_d^2 \sqrt{n}}\right)\;,
\]
\end{proof}

\subsubsection{Feature Mean Concentration}

\begin{lemma}[Feature Mean Concentration]
\label{lem:feature-mean-conc}
Under the theorem assumptions, let $\mu_d = \Ebb[\phi_d(f(X))]$. Then:
\begin{align*}
\|\bar{\phi}_{d,N}^U - \mu_d\| &= O_p\left(\sqrt{\frac{M_d}{N}}\right), \\
\|\bar{\phi}_{d,n}^L - \mu_d\| &= O_p\left(\sqrt{\frac{M_d}{n}}\right).
\end{align*}
\end{lemma}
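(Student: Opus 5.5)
The natural route is a second-moment bound followed by Chebyshev/Markov, exactly as in the proof of Lemma~\ref{lem:cross-cov-conc}. No matrix concentration or union bound over coordinates is needed, because we only want an $O_p$ rate for the Euclidean norm of a mean of i.i.d.\ vectors. I treat the unlabeled average $\bar{\phi}_{d,N}^U$ first; the labeled average $\bar{\phi}_{d,n}^L$ is identical with $N$ replaced by $n$.

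\textbf{Step 1 (centering and variance).} Set $W_j = \phi_d(f(\tilde{X}_j)) - \mu_d$ for $j \in [N]$. These vectors are i.i.d.\ with mean zero, and $\bar{\phi}_{d,N}^U - \mu_d = \frac1N\sum_{j=1}^N W_j$. Expanding the squared norm, the cross terms $\Ebb[W_j^T W_k]$ vanish for $j\neq k$ by independence and centering. This gives
\[
\Ebb\Big[\big\|\bar{\phi}_{d,N}^U - \mu_d\big\|^2\Big] = \frac{1}{N}\Ebb\big[\|W_1\|^2\big].
\]

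\textbf{Step 2 (bounding the second moment).} By the variance decomposition (the same computation as in Lemma~\ref{lem:cross-cov-conc}),
\[
\Ebb\|W_1\|^2 = \Ebb\|\phi_d(f(X))\|^2 - \|\mu_d\|^2 \le \Ebb\|\phi_d(f(X))\|^2 \le M_d,
\]
using assumption (ii) of Theorem~\ref{thm:dimension-scaling}. Hence $\Ebb\|\bar{\phi}_{d,N}^U - \mu_d\|^2 \le M_d/N$.

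\textbf{Step 3 (to $O_p$).} By Markov's inequality applied to the squared norm, for any $K>0$,
\[
\Pbb\Big\{\big\|\bar{\phi}_{d,N}^U - \mu_d\big\| \ge K\sqrt{M_d/N}\Big\} \le 1/K^2 .
\]
This bound is uniform in $n$ (and in $d=d_n$), which is exactly the definition of $O_p(\sqrt{M_d/N})$. The labeled case uses the i.i.d.\ sample $X_1,\dots,X_n$ in the same way.

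There is no real obstacle here. The only point to watch is that the $O_p$ statement must hold with $d=d_n$ varying, so the constant in the bound must not depend on $d$. This holds because the Markov bound $1/K^2$ is dimension-free and all $d$-dependence sits in $M_d$. Since $n/N\to r>0$, one may also rewrite the unlabeled bound as $O_p(\sqrt{M_d/n})$ when this lemma is combined with Lemma~\ref{lem:weight-error}.
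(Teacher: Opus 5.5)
Your proof is correct and is the argument the paper points to: its sketch says the lemma follows from the same second-moment/Chebyshev computation as Lemma~\ref{lem:cross-cov-conc}, which you carry out explicitly via $\Ebb\|W_1\|^2 \le M_d$ and Markov on the squared norm. The paper's extra remark about applying matrix Bernstein is not needed for the stated rate: your dimension-free Chebyshev bound already gives $O_p(\sqrt{M_d/N})$ and $O_p(\sqrt{M_d/n})$ uniformly in $d=d_n$, with no $\log d$ factor.
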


\begin{proof}
Since $\bar{\phi}_{d,N}^U = \frac{1}{N}\sum_{i=1}^N \phi_d(f(X_i^U))$ and $\bar{\phi}_{d,n}^L = \frac{1}{n}\sum_{i=1}^n \phi_d(f(X_i^L))$ are sample means of i.i.d. random vectors, the proof follows the same concentration argument as Lemma~\ref{lem:cross-cov-conc}. 

Using the assumption $\|\phi_d(f(X))\| \leq \sqrt{M_d}$ and applying again matrix Bernstein inequality yields the stated bounds.
\end{proof}


\begin{lemma}[Second-Order Bias Vanishes]
\label{lem:second-order}
Under the assumptions of Theorem \ref{thm:dimension-scaling}
\[
\sqrt{n}(\hat{\theta}(\hat{\lambda}_d) - \hat{\theta}(\lambda^*_d)) = o_p(1)\;.
\]
\end{lemma}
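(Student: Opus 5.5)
We follow the same route as part (i) of the proof of Theorem~\ref{thm:gppi-main}, now tracking how the bounds depend on $d$. By definition of $\hat{\theta}(\lambda)$ in Lemma~\ref{lem:gppi-fixed}, the difference is linear in the weight:
\[
\sqrt{n}\big(\hat{\theta}(\hat{\lambda}_d) - \hat{\theta}(\lambda^*_d)\big) = \sqrt{n}\,(\hat{\lambda}_d - \lambda^*_d)^T\big(\bar{\phi}_{d,N}^U - \bar{\phi}_{d,n}^L\big).
\]
By Cauchy--Schwarz, its absolute value is at most $\sqrt{n}\,\|\hat{\lambda}_d - \lambda^*_d\|\cdot\|\bar{\phi}_{d,N}^U - \bar{\phi}_{d,n}^L\|$. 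It therefore suffices to show that the product of the two norms is $o_p(n^{-1/2})$.

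For the second factor, insert $\mu_d$ and apply the triangle inequality. Lemma~\ref{lem:feature-mean-conc} then gives
\[
\|\bar{\phi}_{d,N}^U - \bar{\phi}_{d,n}^L\| \le \|\bar{\phi}_{d,N}^U - \mu_d\| + \|\bar{\phi}_{d,n}^L - \mu_d\| = O_p\big(\sqrt{M_d/N}\big) + O_p\big(\sqrt{M_d/n}\big) = O_p\big(\sqrt{M_d/n}\big),
\]
using $n/N \to r > 0$, so that $N = \Theta(n)$. If one prefers not to rely on the sketchy proof of Lemma~\ref{lem:feature-mean-conc}, the bound follows directly from a second-moment computation. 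The vectors are i.i.d. with $\Ebb\|\phi_d(f(X)) - \mu_d\|^2 \le M_d$, so $\Ebb\|\bar{\phi}_{d,n}^L - \mu_d\|^2 \le M_d/n$, and Chebyshev's inequality gives the $O_p$ bound.

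For the first factor, Lemma~\ref{lem:weight-error} gives $\|\hat{\lambda}_d - \lambda^*_d\| = O_p\big(M_d^{3/2}\sqrt{\log d}/(c_d^2\sqrt{n})\big)$. Multiplying the two bounds,
\[
\sqrt{n}\,\big|\hat{\theta}(\hat{\lambda}_d) - \hat{\theta}(\lambda^*_d)\big| = \sqrt{n}\cdot O_p\!\left(\frac{M_d^{3/2}\sqrt{\log d}}{c_d^2\sqrt{n}}\right)\cdot O_p\!\left(\sqrt{\frac{M_d}{n}}\right) = O_p\!\left(\frac{M_d^2\sqrt{\log d}}{c_d^2\sqrt{n}}\right).
\]
This is exactly $o_p(1)$ under the growth condition $M_d^2\sqrt{\log d}/c_d^2 = o(\sqrt{n})$ of Theorem~\ref{thm:dimension-scaling}. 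The powers match: $M_d^{3/2}\cdot M_d^{1/2} = M_d^2$.

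The main obstacle is checking that Lemma~\ref{lem:weight-error} genuinely holds in this regime. Its proof needs $\|\hat{\Sigma}_d - \Sigma_d\|_{\text{op}} = o_p(c_d)$, so that $\|\hat{\Sigma}_d^{-1}\|_{\text{op}} = O_p(1/c_d)$. By Lemma~\ref{lem:cov-conc} this amounts to requiring $M_d\sqrt{\log d}/(c_d\sqrt{N}) \to 0$. I would verify that this is implied by the theorem's rate condition, using the normalization $c_d \le 1 \le M_d$ adopted at the start of the proof:
\[
\frac{M_d\sqrt{\log d}}{c_d\sqrt{n}} \le \frac{M_d^2\sqrt{\log d}}{c_d^2\sqrt{n}} \to 0.
\]
The rest is bookkeeping: every $O_p$ product is legitimate because products of $O_p$ terms are $O_p$ of the product of the rates, so the final conclusion follows from Slutsky-type reasoning.
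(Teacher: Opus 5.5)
Your proposal is correct and follows the paper's proof essentially line for line. You write the difference as $\sqrt{n}(\hat{\lambda}_d-\lambda^*_d)^T(\bar{\phi}_{d,N}^U-\bar{\phi}_{d,n}^L)$, apply Cauchy--Schwarz, split the feature-mean difference around $\mu_d$ via Lemma~\ref{lem:feature-mean-conc} with $N=\Theta(n)$, and combine with Lemma~\ref{lem:weight-error} to get $O_p\bigl(M_d^2\sqrt{\log d}/(c_d^2\sqrt{n})\bigr)=o_p(1)$. Your two additional checks make explicit what the paper leaves implicit: the Chebyshev bound that can replace the sketchy feature-mean lemma, and the verification via $c_d\le 1\le M_d$ that the rate condition gives $\|\hat{\Sigma}_d-\Sigma_d\|_{\text{op}}=o_p(c_d)$.
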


\begin{proof}
Define $\Delta_\lambda = \hat{\lambda}_d - \lambda^*_d$ and $\Delta_\phi = \bar{\phi}_{d,N}^U - \bar{\phi}_{d,n}^L$. By definition:
\begin{align*}
\sqrt{n}(\hat{\theta}(\hat{\lambda}_d) - \hat{\theta}(\lambda^*_d)) 
&= \sqrt{n}\Delta_\lambda^T\Delta_\phi\;.
\end{align*}

Let $\mu_d = \Ebb[\phi_d(f(X))]$. Decompose:
\[
\Delta_\phi = (\bar{\phi}_{d,N}^U - \mu_d) + (\mu_d - \bar{\phi}_{d,n}^L).
\]

Using Lemmas~\ref{lem:feature-mean-conc} and \ref{lem:weight-error}, and that $N = \Omega(n)$ we obtain
\begin{align*}
\sqrt{n}|\Delta_\lambda^T\Delta_\phi| 
&\leq \sqrt{n} \|\Delta_\lambda\| \cdot \|\Delta_\phi\| \\
&\leq \sqrt{n} \|\Delta_\lambda\| \cdot \left(\|\bar{\phi}_{d,N}^U - \mu_d\| + \|\bar{\phi}_{d,n}^L - \mu_d\|\right) \\
&= O_p\left(\sqrt{n} \cdot 
\frac{M_d^{3/2} \sqrt{\log(d)}}{c_d^2 \sqrt{n}}
\cdot \sqrt{\frac{M_d}{n}}\right) \\
&= O_p\left(\frac{M_d^2 \sqrt{\log d}}{c_d^2 \sqrt{n}}\right)
= o_p(1)\;.
\end{align*}
\end{proof}


\begin{lemma}[Variance Estimator Consistency]
\label{lem:variance-consistency}
Let $\hat{V} = \hat{\sigma}_Y^2 - \frac{1}{1+r}\hat{\sigma}_{Y,d}^T\hat{\Sigma}_d^{-1}\hat{\sigma}_{Y,d}$ be the plug-in variance estimator. Under the assumptions of Theorem \ref{thm:dimension-scaling}, it holds that
\[
\hat{V} \xrightarrow{p} V_\infty.
\]
\end{lemma}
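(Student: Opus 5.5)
Since $V^*_d \to V_\infty$ by assumption (iv), it suffices to show $\hat V - V^*_d \xrightarrow{p} 0$. Writing $V^*_d = \sigma_Y^2 - \frac{1}{1+r}\sigma_{Y,d}^T\Sigma_d^{-1}\sigma_{Y,d}$, we split
\[
\hat V - V^*_d = (\hat\sigma_Y^2 - \sigma_Y^2) - \tfrac{1}{1+r}\bigl(\hat{\sigma}_{Y,d}^T\hat{\Sigma}_d^{-1}\hat{\sigma}_{Y,d} - \sigma_{Y,d}^T\Sigma_d^{-1}\sigma_{Y,d}\bigr),
\]
and treat the two pieces separately. (If one uses $r_n$ instead of $r$ in the definition, the extra error is $|1/(1+r_n)-1/(1+r)|\cdot|\hat\sigma_{Y,d}^T\hat\Sigma_d^{-1}\hat\sigma_{Y,d}|$. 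It is handled at the end once we know the quadratic form is $O_p(1)$.) The first term is a sample variance with $\mathbb{E}[Y^2]<\infty$ and does not depend on $d$, so it is $o_p(1)$ by the weak law of large numbers.

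For the quadratic form we use the same three-term identity as in the proof of Lemma~\ref{lem:uniform-residual-var}:
\begin{align*}
&\hat{\sigma}_{Y,d}^T \hat{\Sigma}_d^{-1} \hat{\sigma}_{Y,d} - \sigma_{Y,d}^T \Sigma_d^{-1} \sigma_{Y,d}\\
&= (\hat{\sigma}_{Y,d} - \sigma_{Y,d})^T \hat{\Sigma}_d^{-1} \hat{\sigma}_{Y,d} + \sigma_{Y,d}^T \hat{\Sigma}_d^{-1} (\hat{\sigma}_{Y,d} - \sigma_{Y,d}) + \sigma_{Y,d}^T (\hat{\Sigma}_d^{-1} - \Sigma_d^{-1}) \sigma_{Y,d}.
\end{align*}
We bound each term using ingredients already established:
\begin{itemize}
\item $\|\hat\Sigma_d^{-1}\|_{\text{op}} = O_p(1/c_d)$, from the proof of Lemma~\ref{lem:weight-error} via Weyl and Lemma~\ref{lem:cov-conc}.
\item $\|\hat\sigma_{Y,d}-\sigma_{Y,d}\| = O_p(\sqrt{M_d/n})$, by Lemma~\ref{lem:cross-cov-conc}.
\item $\|\sigma_{Y,d}\| = O(\sqrt{M_d})$, by Jensen.
\item $\|\hat\Sigma_d^{-1}-\Sigma_d^{-1}\|_{\text{op}} \le \|\hat\Sigma_d^{-1}\|_{\text{op}}\,\|\Sigma_d^{-1}\|_{\text{op}}\,\|\hat\Sigma_d-\Sigma_d\|_{\text{op}} = O_p\bigl(M_d\sqrt{\log d}/(c_d^2\sqrt N)\bigr)$, by the resolvent identity.
\end{itemize}
The first two terms are then $O_p(M_d/(c_d\sqrt n))$ and the third is $O_p(M_d^2\sqrt{\log d}/(c_d^2\sqrt n))$, using $N=\Theta(n)$. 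Under the normalization $c_d\le 1\le M_d$, all three are dominated by $M_d^2\sqrt{\log d}/(c_d^2\sqrt n)$, which is $o(1)$ by the growth condition. Hence the quadratic-form difference is $o_p(1)$. The same bounds show $\hat{\sigma}_{Y,d}^T\hat{\Sigma}_d^{-1}\hat{\sigma}_{Y,d} = O_p(1)$, because $\sigma_{Y,d}^T\Sigma_d^{-1}\sigma_{Y,d}\le \sigma_Y^2$: it is the explained variance of a linear regression. That takes care of the $r_n$ versus $r$ discrepancy.

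The main obstacle is that the crude bound $\sigma_{Y,d}^T\Sigma_d^{-1}\sigma_{Y,d}\le M_d/c_d$ must not enter, since it may diverge. To avoid it, we keep the population quadratic form controlled by $\sigma_Y^2$, and let only difference terms carry the factors $M_d$ and $1/c_d$, as above. A second point to check is that the event $\rho_{\min}(\hat\Sigma_d)\ge c_d/2$ has probability tending to one. This requires $\sqrt{M_d^2\log d/N}=o(c_d)$, which follows from the stated rate since $M_d\ge1$ and $c_d\le1$. Combining the pieces, $\hat V - V^*_d = o_p(1)$, and therefore $\hat V\xrightarrow{p} V_\infty$.
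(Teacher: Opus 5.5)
Your proof is correct and follows essentially the same route as the paper. You reduce to $\hat V - V^*_d$ via assumption (iv) and handle the sample-variance term by the law of large numbers. You then control the quadratic-form difference with the same ingredients: Lemma~\ref{lem:cross-cov-conc}, $\|\hat\Sigma_d^{-1}\|_{\text{op}}=O_p(1/c_d)$, and the resolvent identity with Lemma~\ref{lem:cov-conc}. This yields the same rates $O_p(M_d/(c_d\sqrt n))$ and $O_p(M_d^2\sqrt{\log d}/(c_d^2\sqrt n))$.

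The differences are minor. You split the difference into three terms, whereas the paper writes it as $(\hat\sigma_{Y,d}-\sigma_{Y,d})^T\hat\Sigma_d^{-1}(\hat\sigma_{Y,d}+\sigma_{Y,d})$ plus the resolvent term. You also check two points the paper passes over: the $r_n$ versus $r$ discrepancy, and that $\sqrt{M_d^2\log d/N}=o(c_d)$ follows from the growth condition.
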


\begin{proof}
Recall that
\[
\hat{V} = \hat{\sigma}_Y^2 - \frac{1}{1+r}\hat{\sigma}_{Y,d}^T\hat{\Sigma}_d^{-1}\hat{\sigma}_{Y,d}\;,
\]
hence we have that
\begin{align}
\hat{V} - V_\infty &= \hat{V} - V^*_d + V^*_d - V_\infty \notag \\
&= \left(\hat{\sigma}_Y^2 - \sigma_Y^2\right) - \frac{1}{1+r}\left(\hat{\sigma}_{Y,d}^T\hat{\Sigma}_d^{-1}\hat{\sigma}_{Y,d} - \sigma_{Y,d}^T\Sigma_d^{-1}\sigma_{Y,d}\right) + (V^*_d - V_\infty)\;. \label{eq:variance_decomp_correct}
\end{align}
We have by assumption that $V^*_d = \sigma_Y^2 - \frac{1}{1+r}\sigma_{Y,d}^T\Sigma_d^{-1}\sigma_{Y,d} \to V_\infty$, and since $\hat{\sigma}_Y^2 = \frac{1}{n}\sum_{i=1}^n Y_i^2$ and $\mathbb{E}[Y^2] < \infty$, by the strong law of large numbers:
\[
\hat{\sigma}_Y^2 - \sigma_Y^2 \xrightarrow{p} 0\;.
\]
Therefore, it remains to prove that $\left(\hat{\sigma}_{Y,d}^T\hat{\Sigma}_d^{-1}\hat{\sigma}_{Y,d} - \sigma_{Y,d}^T\Sigma_d^{-1}\sigma_{Y,d}\right) \xrightarrow{p} 0$. Let us decompose this as two terms:
\begin{align}
&\left|\hat{\sigma}_{Y,d}^T\hat{\Sigma}_d^{-1}\hat{\sigma}_{Y,d} - \sigma_{Y,d}^T\Sigma_d^{-1}\sigma_{Y,d}\right| \notag \\
&\leq \left|\hat{\sigma}_{Y,d}^T\hat{\Sigma}_d^{-1}\hat{\sigma}_{Y,d} - \sigma_{Y,d}^T\hat{\Sigma}_d^{-1}\sigma_{Y,d}\right| + \left|\sigma_{Y,d}^T\hat{\Sigma}_d^{-1}\sigma_{Y,d} - \sigma_{Y,d}^T\Sigma_d^{-1}\sigma_{Y,d}\right|. \label{eq:quad_triangle}
\end{align}

\noindent
\paragraph{Subterm 1.}
The first term can be upper bounded as
\begin{align}
\left|\hat{\sigma}_{Y,d}^T\hat{\Sigma}_d^{-1}\hat{\sigma}_{Y,d} - \sigma_{Y,d}^T\hat{\Sigma}_d^{-1}\sigma_{Y,d}\right| 
&= \left|(\hat{\sigma}_{Y,d} - \sigma_{Y,d})^T\hat{\Sigma}_d^{-1}(\hat{\sigma}_{Y,d} + \sigma_{Y,d})\right| \notag \\
&\leq \|\hat{\sigma}_{Y,d} - \sigma_{Y,d}\| \|\hat{\Sigma}_d^{-1}\|_{\text{op}} \|\hat{\sigma}_{Y,d} + \sigma_{Y,d}\|. \label{eq:subterm1}
\end{align}
By Lemma \ref{lem:cross-cov-conc}: $\|\hat{\sigma}_{Y,d} - \sigma_{Y,d}\| = O_p(\sqrt{M_d/n})$. Also, from the proof of Lemma \ref{lem:weight-error}: $\|\hat{\Sigma}_d^{-1}\|_{\text{op}} = O_p(1/c_d)$. And the norm of the sum satisfies \[
\|\hat{\sigma}_{Y,d} + \sigma_{Y,d}\| \leq \|\hat{\sigma}_{Y,d}\| + \|\sigma_{Y,d}\| = O_p(\sqrt{M_d}) + O(\sqrt{M_d}) = O_p(\sqrt{M_d})\;.
\]
Therefore:
\begin{equation}]\label{eq:quad-triangle-term1}
\left|\hat{\sigma}_{Y,d}^T\hat{\Sigma}_d^{-1}\hat{\sigma}_{Y,d} - \sigma_{Y,d}^T\hat{\Sigma}_d^{-1}\sigma_{Y,d}\right|  = O_p\left(\sqrt{\frac{M_d}{n}} \cdot \frac{1}{c_d} \cdot \sqrt{M_d}\right) = O_p\left(\frac{M_d}{c_d\sqrt{n}}\right).    
\end{equation}

\noindent
\textbf{Subterm 2:} For the second term in \eqref{eq:quad_triangle}, we have
\begin{align}
\left|\sigma_{Y,d}^T\hat{\Sigma}_d^{-1}\sigma_{Y,d} - \sigma_{Y,d}^T\Sigma_d^{-1}\sigma_{Y,d}\right| 
&= \left|\sigma_{Y,d}^T(\hat{\Sigma}_d^{-1} - \Sigma_d^{-1})\sigma_{Y,d}\right| \notag \\
&\leq \|\sigma_{Y,d}\|^2 \|\hat{\Sigma}_d^{-1} - \Sigma_d^{-1}\|_{\text{op}}. \label{eq:subterm2}
\end{align}
Using the identity $\hat{\Sigma}_d^{-1} - \Sigma_d^{-1} = -\hat{\Sigma}_d^{-1}(\hat{\Sigma}_d - \Sigma_d)\Sigma_d^{-1}$ then Lemma \ref{lem:cov-conc} yields
\begin{align*}
\|\hat{\Sigma}_d^{-1} - \Sigma_d^{-1}\|_{\text{op}} 
&\leq \|\hat{\Sigma}_d^{-1}\|_{\text{op}} \|\hat{\Sigma}_d - \Sigma_d\|_{\text{op}} \|\Sigma_d^{-1}\|_{\text{op}} \notag \\
&= O_p(1/c_d) \cdot O_p\left(\sqrt{\frac{M_d^2 \log d}{N}}\right) \cdot O(1/c_d) \notag \\
&= O_p\left(\frac{M_d\sqrt{\log d}}{c_d^2\sqrt{N}}\right),
\end{align*}
Since $\|\sigma_{Y,d}\|^2 = O(M_d)$ (as shown in Lemma \ref{lem:weight-error}), we have
\begin{equation}\label{eq:quad-triangle-term2}
\left|\sigma_{Y,d}^T\hat{\Sigma}_d^{-1}\sigma_{Y,d} - \sigma_{Y,d}^T\Sigma_d^{-1}\sigma_{Y,d}\right| 
= O_p\left(\frac{M_d^2\sqrt{\log d}}{c_d^2\sqrt{N}}\right).    
\end{equation}

Substituting \eqref{eq:quad-triangle-term1} and \eqref{eq:quad-triangle-term2} into \eqref{eq:quad_triangle}, we obtain:
\begin{align*}
\left|\hat{\sigma}_{Y,d}^T\hat{\Sigma}_d^{-1}\hat{\sigma}_{Y,d} - \sigma_{Y,d}^T\Sigma_d^{-1}\sigma_{Y,d}\right| 
&= O_p\left(\frac{M_d}{c_d\sqrt{n}}\right) + O_p\left(\frac{M_d^2\sqrt{\log d}}{c_d^2\sqrt{N}}\right) \\
&=  O_p\left(\frac{M_d^2\sqrt{\log d}}{c_d^2\sqrt{N}}\right)
\end{align*}
where we used that $c_d \leq 1 \leq M_d$ hence $\frac{M_d}{c_d\sqrt{n}} \leq \frac{M_d}{c_d\sqrt{n}} \cdot M_d/c_d \cdot \sqrt{\log d} = \frac{M_d^2\sqrt{\log d}}{c_d^2\sqrt{N}}$. Moreover, the assumption of Theorem \ref{thm:dimension-scaling} guarantees that this term vanishes, hence
\[
\frac{1}{1+r}\left|\hat{\sigma}_{Y,d}^T\hat{\Sigma}_d^{-1}\hat{\sigma}_{Y,d} - \sigma_{Y,d}^T\Sigma_d^{-1}\sigma_{Y,d}\right| = o_p(1)\;,
\]
which concludes the proof.
\end{proof}

\subsubsection{Main Proof}

\begin{proof}[Proof of Theorem~\ref{thm:dimension-scaling}]
The proof of the theorem follows immediately from the previous lemmas.

\noindent
\paragraph{Asymptotic normality}
By assumption (v), $V^*_d \to V_\infty$. As in the proof of Theorem~\ref{thm:gppi-main}, for the oracle estimator with known optimal weights:
\[
\sqrt{n}(\hat{\theta}(\lambda^*_d) - \theta^*) \xrightarrow{d} \mathcal{N}(0, V_\infty)\;.
\]
On the other hand, we can decompose
\[
\sqrt{n}(\hat{\theta}_{\text{GPPI}} - \theta^*) = \sqrt{n}(\hat{\theta}(\lambda^*_d) - \theta^*) + \sqrt{n}(\hat{\theta}(\hat{\lambda}_d) - \hat{\theta}(\lambda^*_d)).
\]

By Lemma~\ref{lem:second-order}, the second term is $o_p(1)$. Therefore, we deduce By Slutsky's theorem that
\[
\sqrt{n}(\hat{\theta}_{\text{GPPI}} - \theta^*) \xrightarrow{d} \mathcal{N}(0, V_\infty).
\]

\noindent
\textbf{Variance consistency:} By Lemma~\ref{lem:variance-consistency}, $\hat{V} \xrightarrow{p} V_\infty$.

\end{proof}
\section{Experimental Details}
\label{app:experimental-details}

This appendix provides comprehensive details on the experimental setup, including dataset statistics, persona construction procedures, prompt templates, and implementation specifics.

\subsection{Persona Construction}

\paragraph{Jester Jokes personas.}
For each user, we construct a persona from their ratings on the training jokes. The persona includes:
\begin{itemize}[leftmargin=*]
\item \textbf{Rating statistics:} Total number of ratings, mean rating, standard deviation, min, max, and median.
\item \textbf{Rating distribution:} Counts and percentages in four buckets: very negative ($[-10, -5)$), negative ($[-5, 0)$), positive ($[0, 5)$), very positive ($[5, 10]$).
\item \textbf{Favorite jokes:} Up to 15 jokes rated above 7, sorted by rating, with full joke text.
\item \textbf{Least favorite jokes:} Up to 10 jokes rated below $-5$, sorted by rating, with full joke text.
\end{itemize}

\paragraph{MovieLens personas.}
For each user, we construct a persona from their complete rating history on training movies. The persona includes:
\begin{itemize}[leftmargin=*]
\item \textbf{Rating statistics:} Total ratings, mean, standard deviation, and rating distribution across the 10 possible values.
\item \textbf{Temporal evolution:} If the user's rating history spans more than 6 months, we divide it into 3 equal time periods and compute the mean rating in each period to capture temporal trends.
\item \textbf{Genre preferences:} For each of the 20 genres, we compute the average rating and number of movies rated. We report the top 5 genres by average rating.
\item \textbf{Favorite movies:} Up to 10 movies rated 5.0, with title and genres.
\item \textbf{Recent history:} The last 100 movies rated (chronologically), with date, title, genres, and rating.
\end{itemize}

\subsection{Prompt Templates}

We provide below a representative prompt template for MovieLens. All prompts for he other datasets follow a similar structure, including the available data from each dataset.

\begin{tcolorbox}[colback=gray!5, colframe=gray!75, boxrule=0.5pt, arc=2pt, left=3pt, right=3pt, top=3pt, bottom=3pt, breakable]
\begin{verbatim}
You are impersonating a movie viewer. Based on YOUR complete 
viewing history below, predict how YOU would rate the following 
movie on a scale of 0.5-5.0 (in 0.5 increments).

YOUR VIEWING STATISTICS:
- Total movies rated: {total_ratings}
- Overall average rating: {mean_rating:.2f}
- Evolution of average rating with time: {temporal_evolution}
- Top genres: {top_genres_with_avg_ratings}
- Rating distribution: {rating_distribution}
- Active period: {first_date} to {last_date}

SAMPLE OF YOUR FAVORITE MOVIES (rated 5.0):
{favorite_movies_with_genres}

YOUR RECENT RATING HISTORY (last 100 movies, chronological):
{recent_history_with_dates_titles_genres_ratings}

MOVIE TO RATE:
Title: {movie_title}
Genres: {movie_genres}

INSTRUCTIONS:
1. Consider YOUR genre preferences and how this movie's genres 
   align with YOUR tastes.
2. Reflect on YOUR rating patterns over time and YOUR typical 
   rating level.
3. Think about similar movies YOU have rated and how this 
   compares.

Provide ONLY a numerical rating in {0.5, 1.0, 1.5, ..., 5.0}. 
No explanations.
\end{verbatim}
\end{tcolorbox}

\subsection{Model Configuration}\label{app:config}

\paragraph{LLM models.}
We use three Mistral models:
\begin{itemize}[leftmargin=*]
\item \textbf{Mistral-7B-Instruct-v0.2:} 7 billion parameters
\item \textbf{Mistral-24B-Instruct:} 24 billion parameters
\item \textbf{Mistral-123B-Instruct:} 123 billion parameters
\end{itemize}
These models were selected for their open availability and performance on simple tasks such as the ones we test. All models use temperature 0.0 and the same prompt format with no model-specific adjustments. For inference, we used Amazon bedrock. The cost of all the inferences is less than 2000 USD. The rest of the experiments consisted in sampling and doing parameter estimation / hypothesis testing. They did not incur any further costs.

\subsection{Population-level predictions: Synthetic experiment.}
To better illustrate strenghts and weaknesses of the population level prediction framework, we do a synthetic experiments visualizing the different modes.
We generate $n=100$ samples from $\mathcal{N}(\mu_0 + \delta, 1)$ for effect sizes $\delta \in \{0.05, 0.1, 0.2, 0.3, 0.4\}$, testing $H_0: \mu = \mu_0$ at $\alpha = 0.05$ with prediction accuracy $p \in [0.5, 1]$ and asymmetry $\lambda \in [0, 0.99]$. Figure~\ref{fig:power-vs-accuracy} shows moderate $\lambda \in [0.25, 0.75]$ achieve substantial gains with accurate predictions while maintaining robustness when predictions fail, validating Theorem~\ref{thm:main-directional-pred}.

\begin{figure}[h!]
    \centering
    \includegraphics[width=\linewidth]{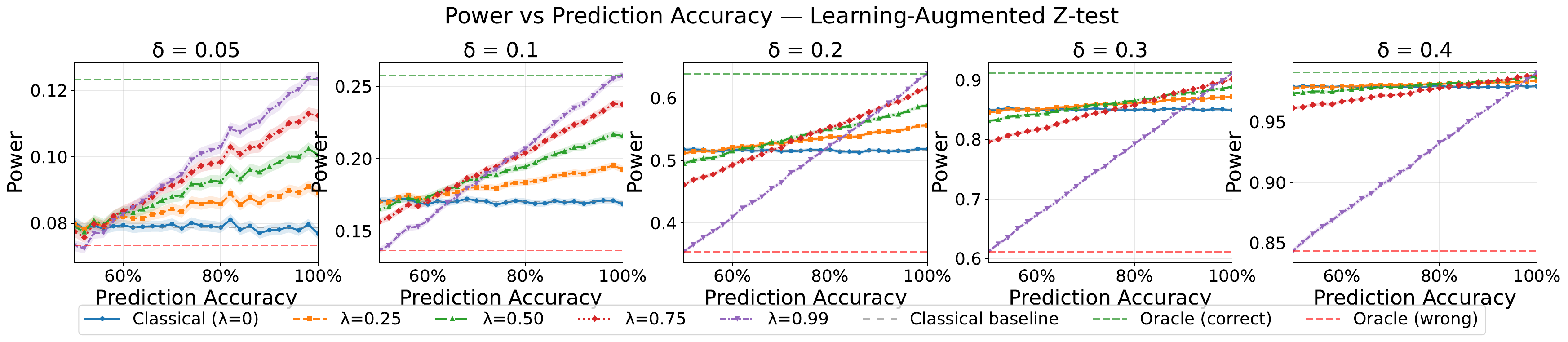}
    \caption{\textbf{Synthetic directional predictions.} Power vs prediction accuracy for various $\delta$ and $\lambda$. Moderate $\lambda$ balances gains with robustness.}
    \label{fig:power-vs-accuracy}
\end{figure}

\subsection{Item-Level GPPI Performance Analysis}
\label{app:heatmap-analysis}

We provide in this section a comprehensive item-level analysis examining how GPPI performs relative to PPI++ on the task of mean estimation for each test individual movie, joke, book and song from the datasets we used. In all the experiments, we construct confidence intervals with type I error $\alpha=5\%$.

For each item and each model (Mistral-7B, 24B, 123B), we sample $n$ labeled samples and $N$ unlabeled samples, where $N=5n$ and $n$ is chosen maximally, i.e. every user is used in the experiment, either as labeled or unlabled sample.  All the values in these experiments are computed over 1000 trials. Note that the predictions with the LLM models are only computed once, the source of randomness in the experiment is in splitting the users into labeled and unlabeled. For each dataset and for each item we compute
\begin{itemize}[leftmargin=*]
\item \textbf{PPI++ ESS gain}: The baseline improvement over the classical estimator using linear correction.
\item \textbf{GPPI gap}: The difference between the ESS gain of GPPI with greedy transform selection (using the same transformations mentioned in the experiments section) and PPI++'s ESS gain. A positive gap indicates GPPI improves over PPI++; negative indicates degradation.
\end{itemize}
We visualize these metrics as heatmaps where rows represent models and columns represent items. Items are labeled with their ID and available sample size $n$. We also visualize heatmaps for the coverage of each method. For each dataset we show two figures next to each other. In the right figure, the top panel shows PPI++ ESS gain (\%) for each of the 15 movies (columns) and three models (rows). The lower three panels: GPPI with greedy selection gap (GPPI ESS gain minus PPI++ ESS gain) without penalization, with AIC penalization, then with BIC penalization. The right figure has one additional top panel for the classical estimator. Each panel  panels corresponds to a mean estimator (classical/PPI++/GPPI greedy selection with different penalizations), and shows the coverage obtained with that estimator for different items and models, i.e. percentage of times when the true mean is in the CI constructed by that algorithm.

\begin{figure}[htbp]
    \centering
    \begin{minipage}{0.41\textwidth}
        \centering
        \includegraphics[width=\linewidth]{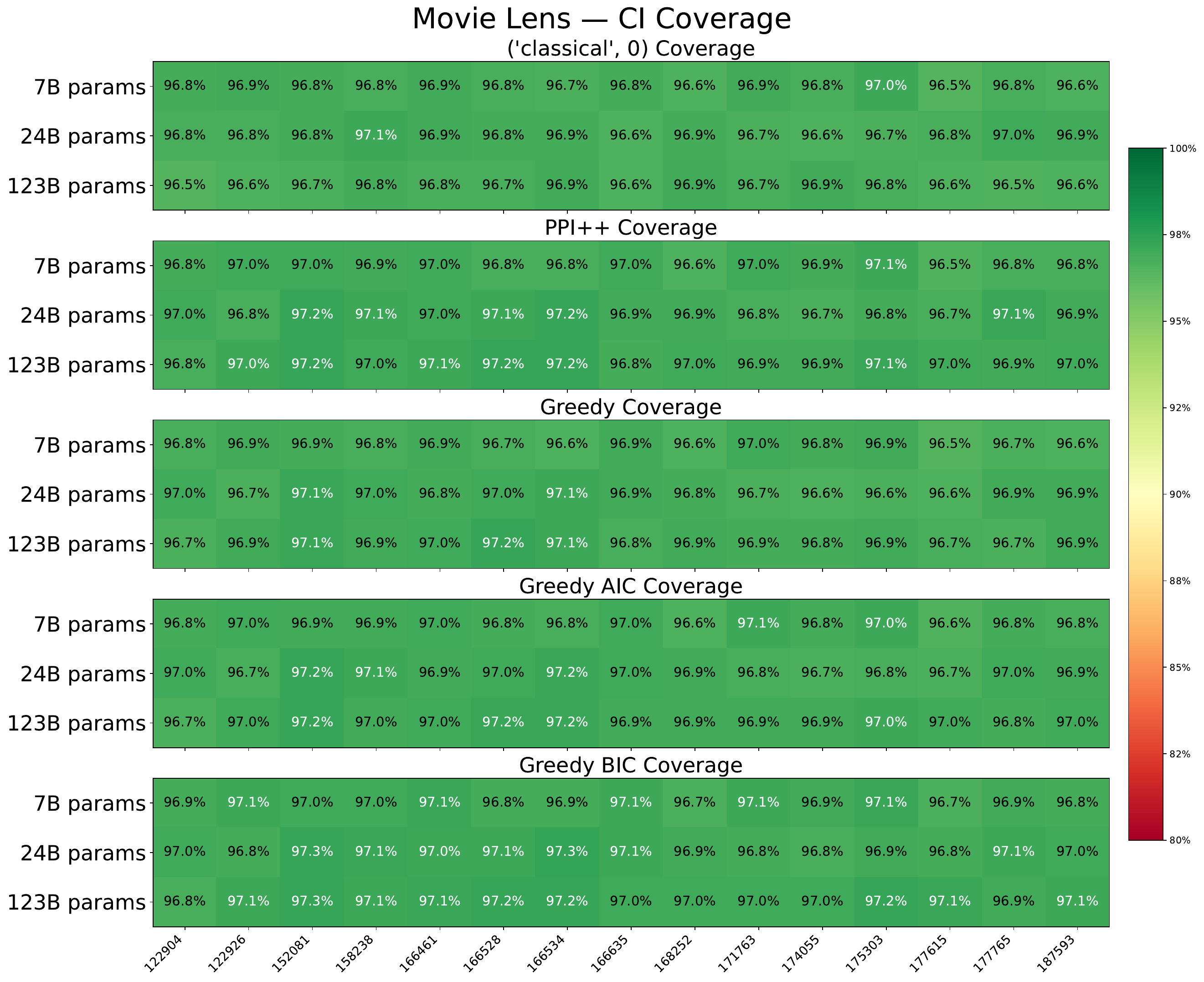}
    \end{minipage}
    \hspace{0.1cm}
    \begin{minipage}{0.55\textwidth}
        \centering
        \includegraphics[width=0.9\linewidth]{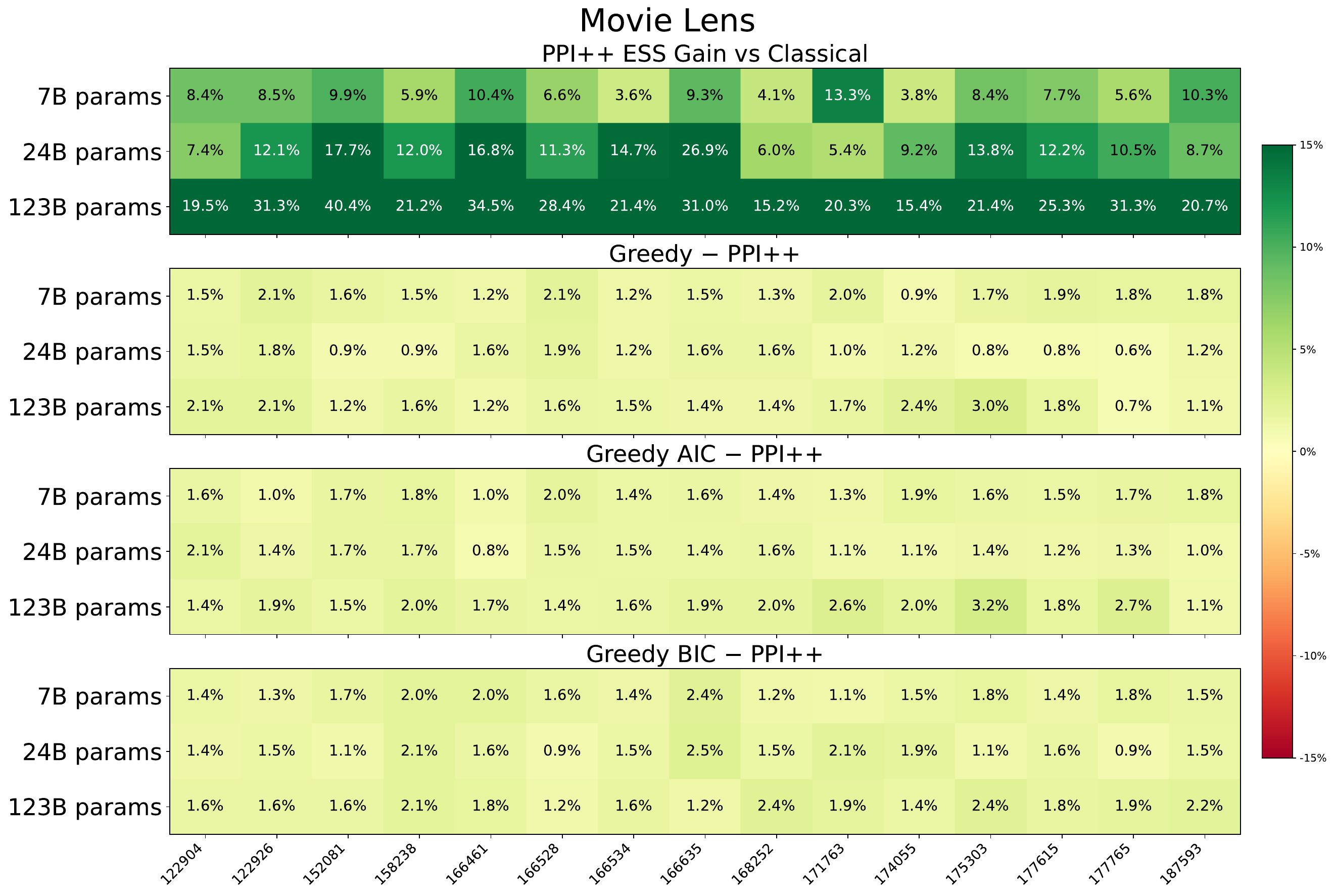}\label{fig:heatmap-movielens}
    \end{minipage}
    \caption{Item-level performance for MovieLens.}
\end{figure}

\begin{figure}[htbp]
    \centering
    \begin{minipage}{0.41\textwidth}
        \centering
        \includegraphics[width=\linewidth]{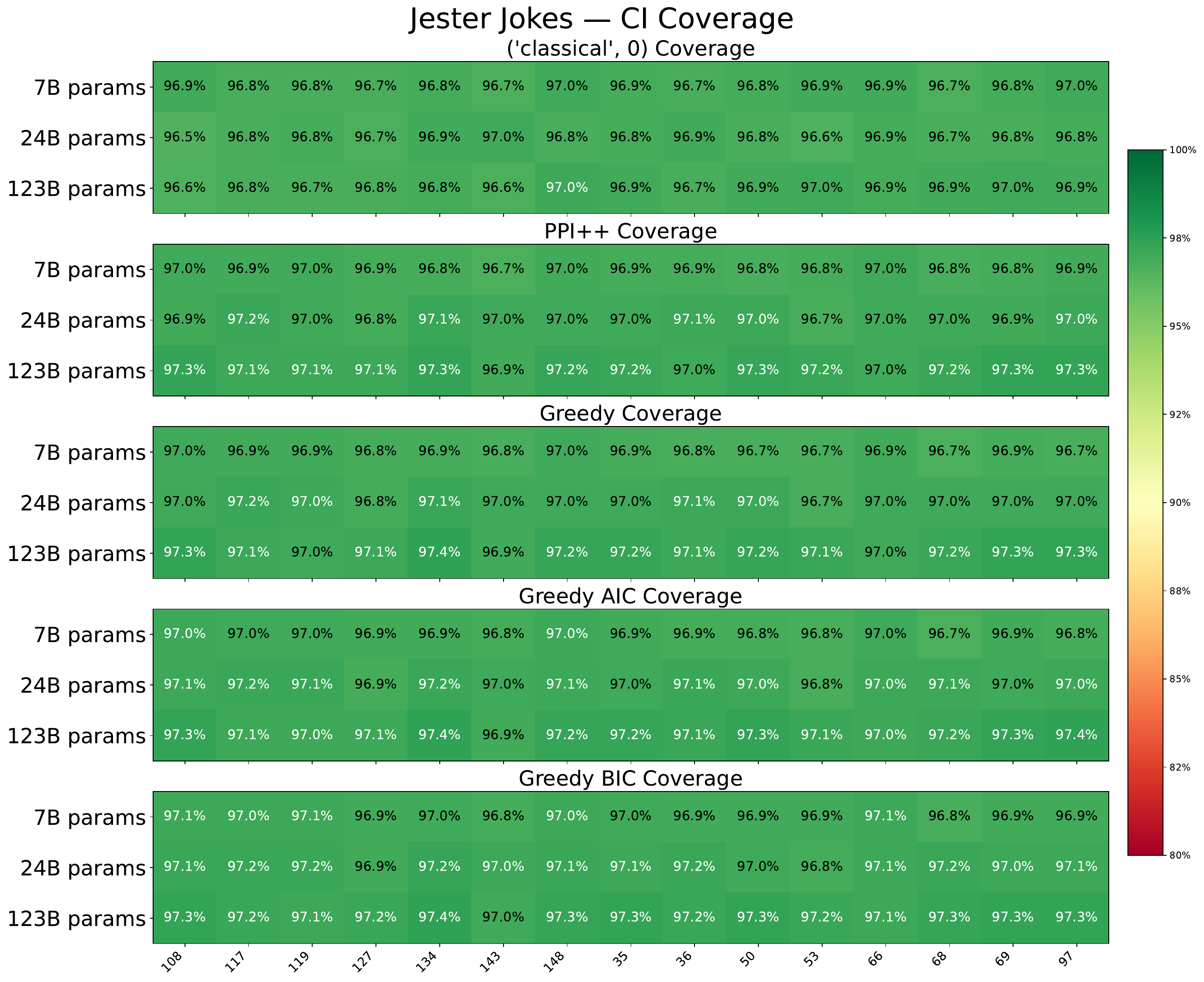}
    \end{minipage}
    \hspace{0.1cm}
    \begin{minipage}{0.55\textwidth}
        \centering
        \includegraphics[width=0.9\linewidth]{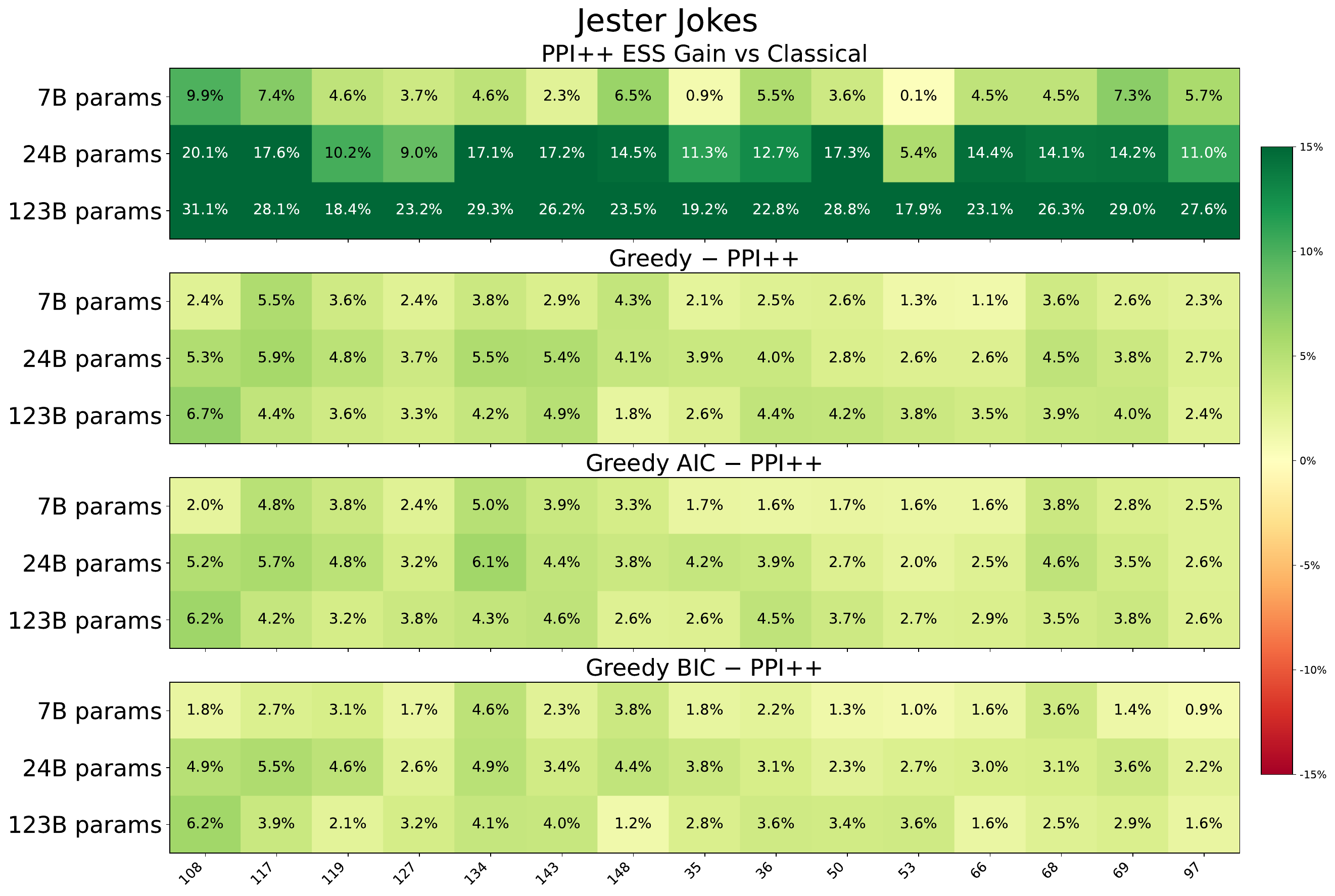}\label{fig:heatmap-jester}
    \end{minipage}
    \caption{Item-level performance for Jester jokes.}
\end{figure}

\begin{figure}[htbp]
    \centering
    \begin{minipage}{0.41\textwidth}
        \centering
        \includegraphics[width=\linewidth]{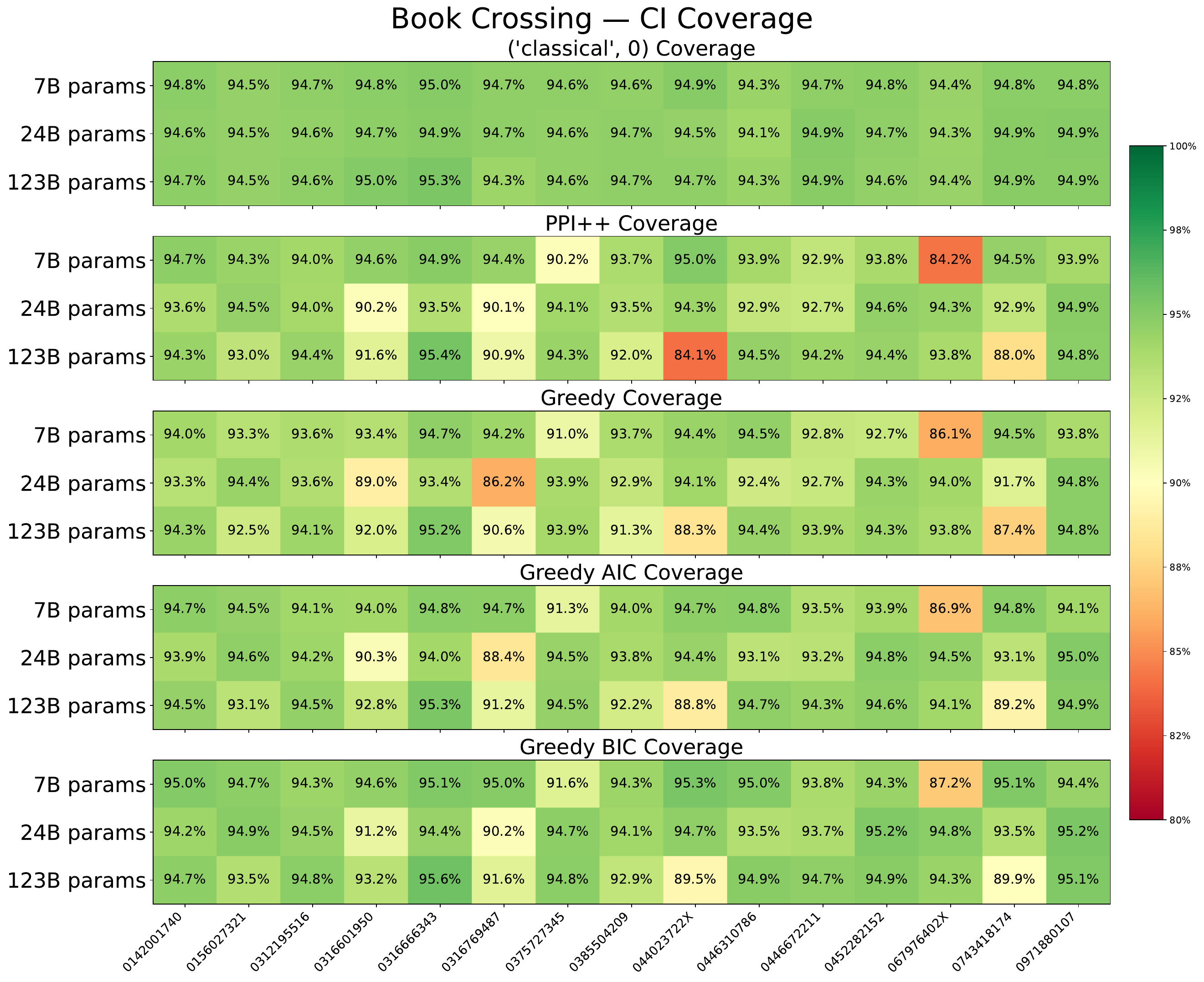}
    \end{minipage}
    \hspace{0.1cm}
    \begin{minipage}{0.55\textwidth}
        \centering
        \includegraphics[width=0.9\linewidth]{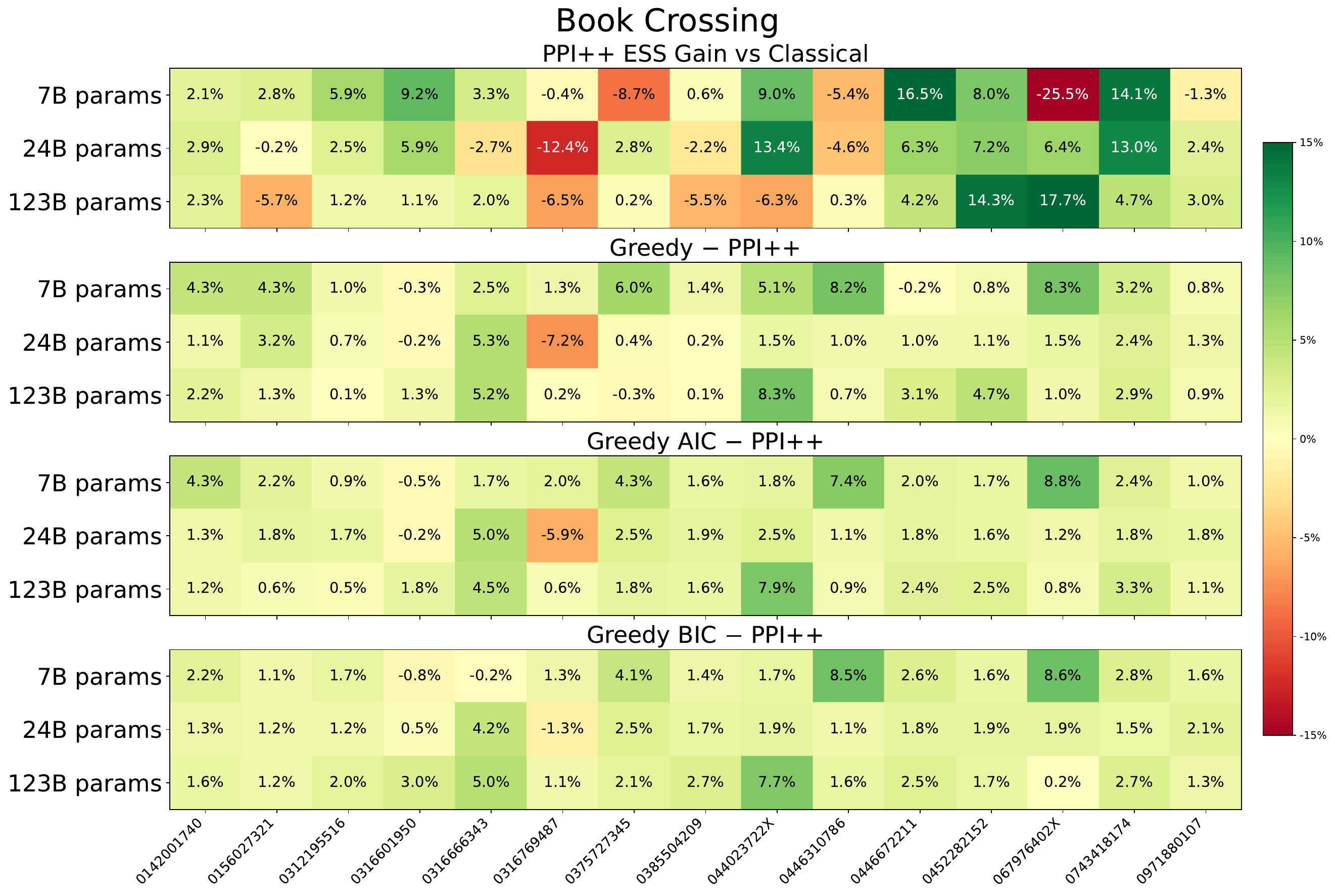}\label{fig:heatmap-book}
    \end{minipage}
    \caption{Item-level performance for Book Crossing.}
\end{figure}

\begin{figure}[htbp]
    \centering
    \begin{minipage}{0.41\textwidth}
        \centering
        \includegraphics[width=\linewidth]{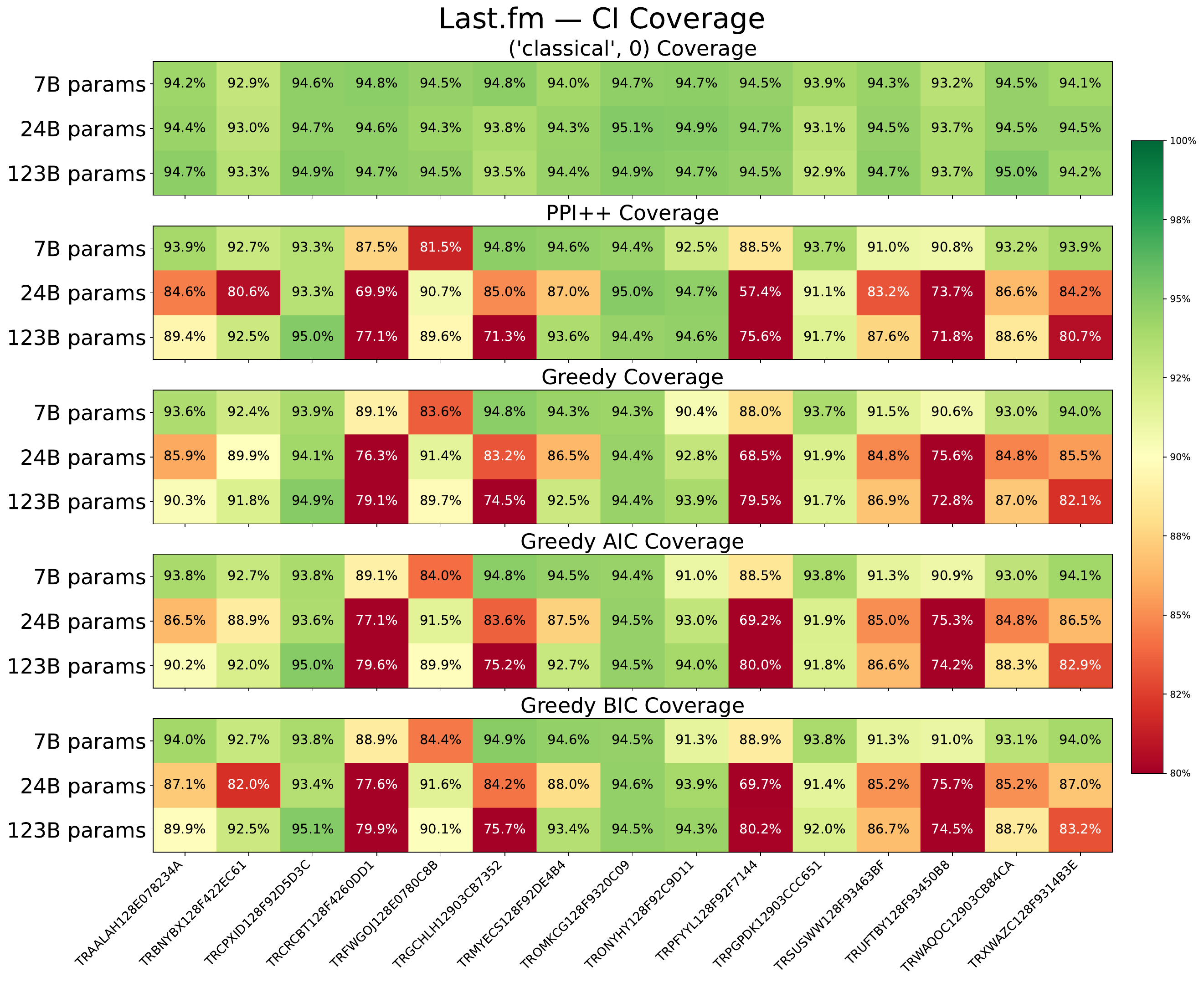}
    \end{minipage}
    \hspace{0.1cm}
    \begin{minipage}{0.55\textwidth}
        \centering
        \includegraphics[width=0.9\linewidth]{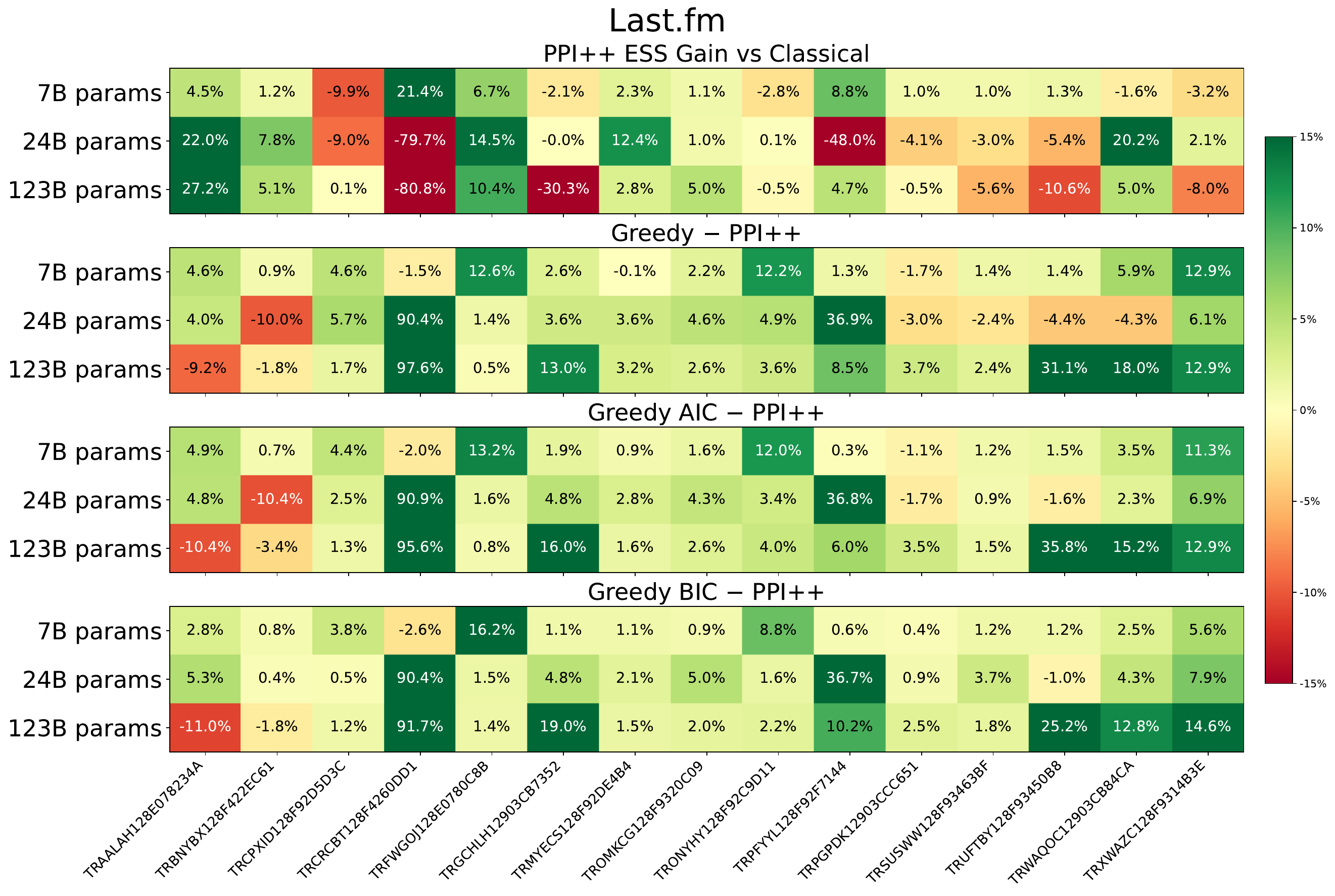}\label{fig:heatmap-music}
    \end{minipage}
    \caption{Item-level performance for Song Rating.}
\end{figure}

Across the four datasets, GPPI with greedy transformation selection outperforms PPI++ in 98\% of the tested items. Improvements are mostly modest on the Movielens dataset, which suggests that linear correction of the predictions is already sifficient. On the Jester and Book crossing datasets the improvements are again moderate for some items but more important for others. 

On the LFM dataset for music engagement, GPPI gives huge improvement over PPI++. The particularity of this dataset is that the  engagement is measured by the number of times a user listened to a song, which is an integer that can be very high. The distribution of play counts for each song is a highly skewed distribution with some rare very high values. PPI++ fails to correctly debias the predictions on such dataset, and its performance, both in terms of coverage and ESS gains, drops significantly. Interestingly, GPPI with greedy selection has globally a better coverage than PPI++, although still does not meet the nominal coverage for difficult items, and it has a significantly better ESS gain. 

While we would expect that on difficult instances PPI++ would be more stable than GPPI with greedy selection, this example shows that, on the opposite, GPPI our approach can be more stable and robust than PPI++.

\paragraph{ESS gain vs Sample size}
Below, we give visualizations of how the ESS gains varies with the sample size for Book-Crossing and Last.fm datasets. The plots for Movielens and Jester datasets are in the main body of the paper.

\begin{figure}[h]
    \centering
    \includegraphics[width=\linewidth]{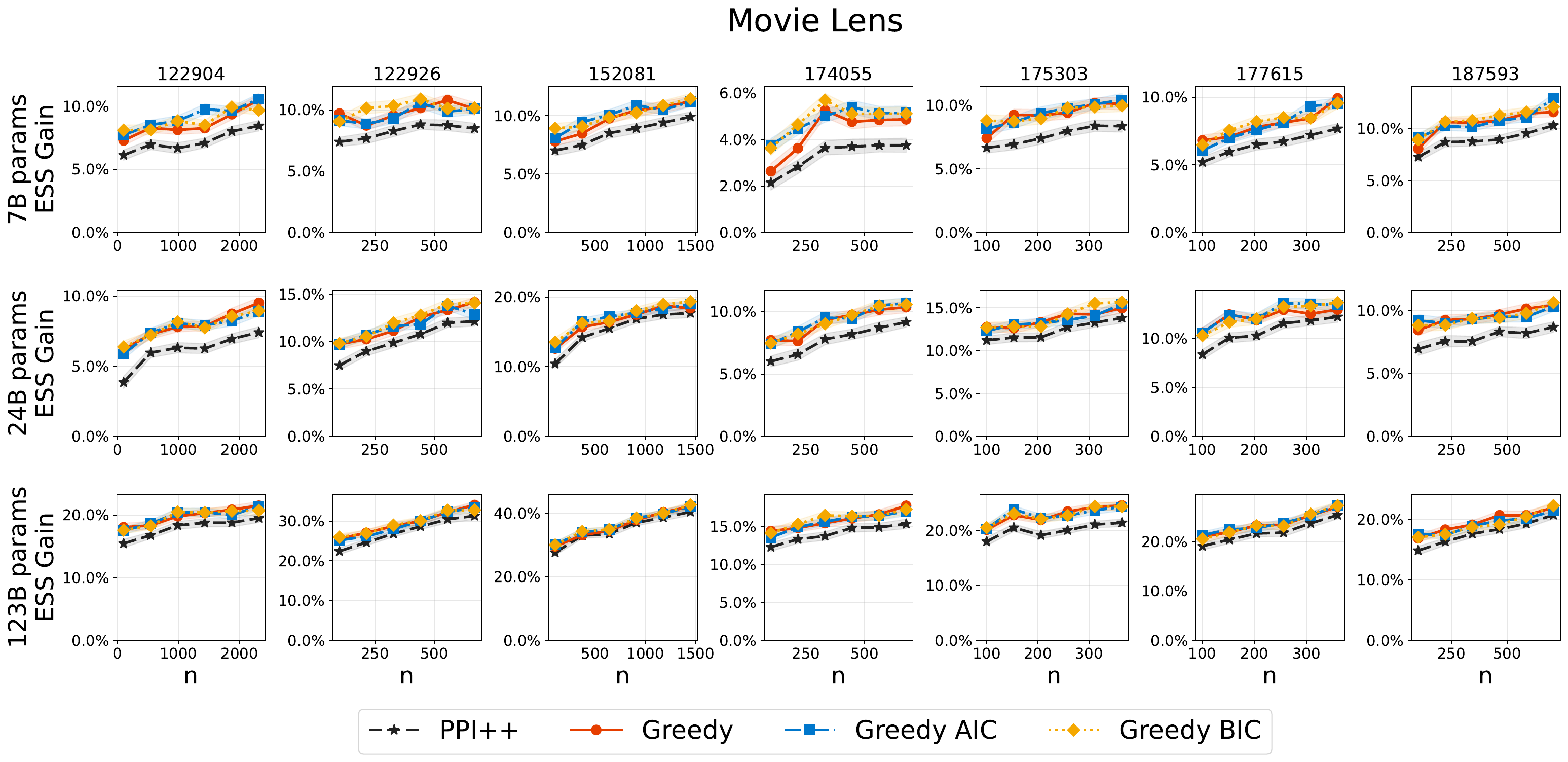}
    \caption{ESS gain comparison on MovieLens dataset across different movies and Mistral models}
    \label{fig:movie-gppi}
\end{figure}

\begin{figure}[h]
    \centering
    \includegraphics[width=\linewidth]{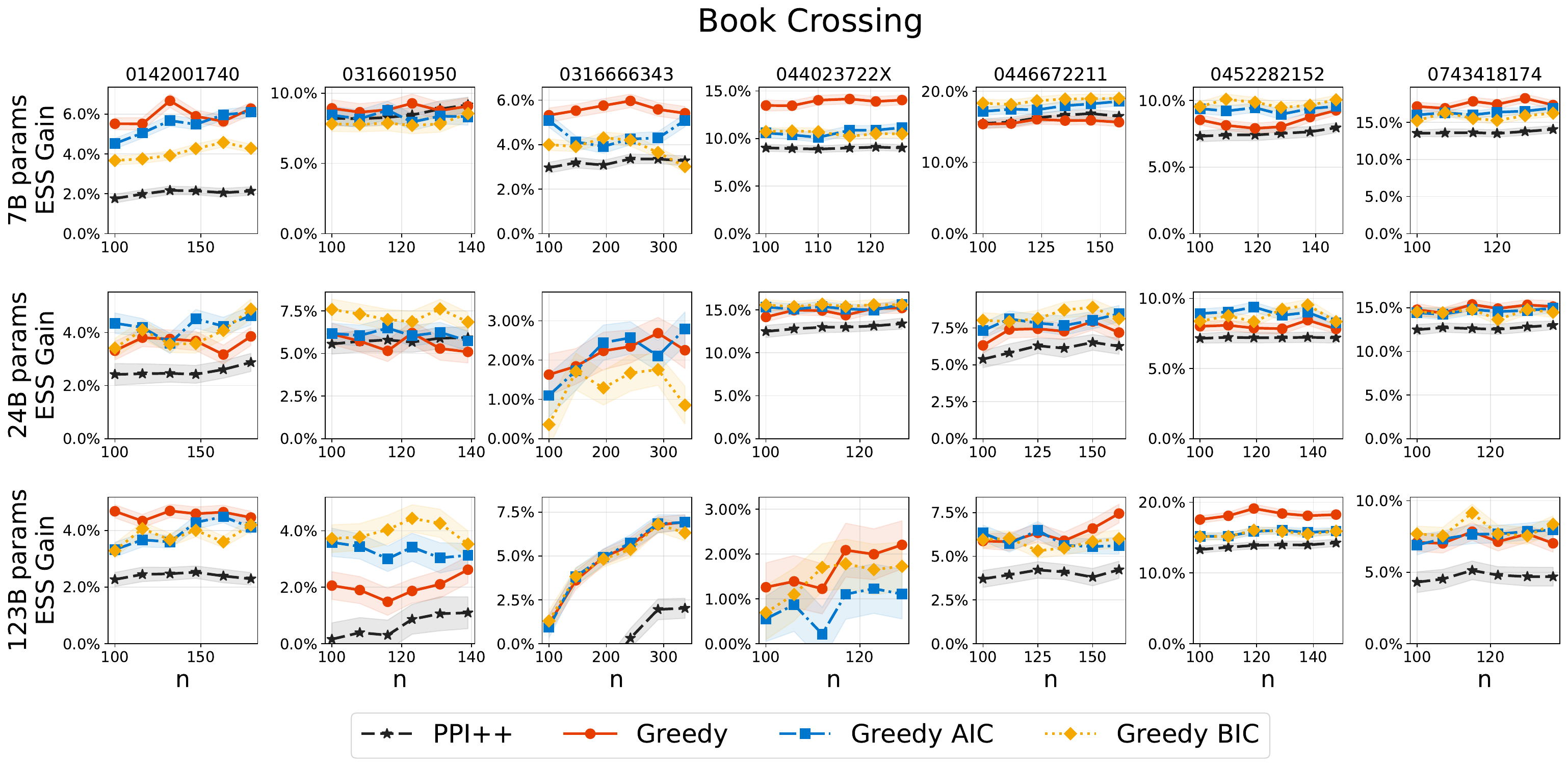}
    \caption{ESS gain comparison on Book-crossing dataset across different jokes and Mistral models}
    
\end{figure}

\begin{figure}[h]
    \centering
    \includegraphics[width=\linewidth]{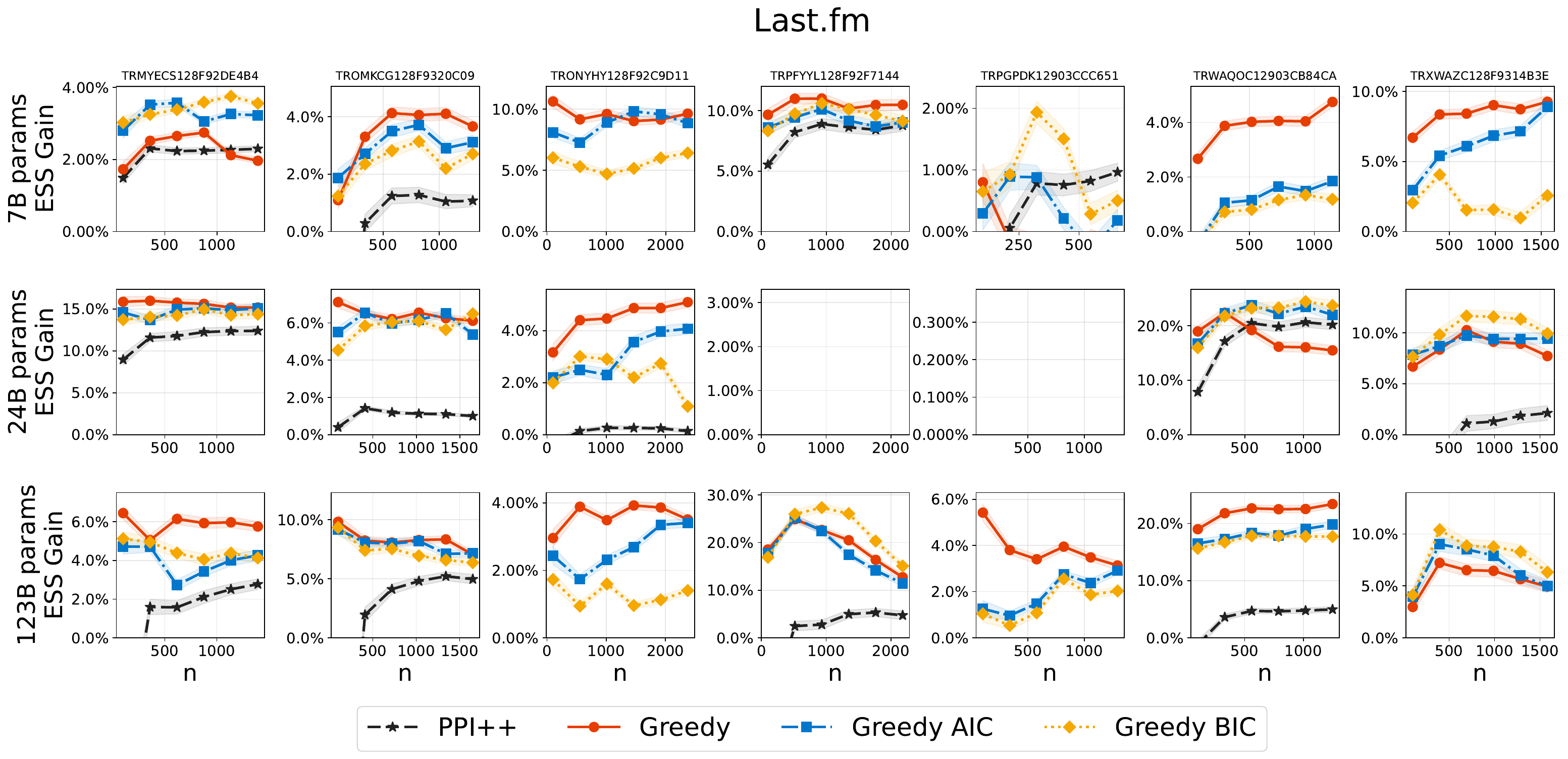}
    \caption{ESS gain comparison on Last.fm across different jokes and Mistral models}
    
\end{figure}

\subsection{A/B testing using GPPI with greedy selection}
We first present the plots of the experiment \ref{sec:exp-population} with Last.fm and Book-Crossing datasets:

\begin{figure}
    \centering
    \includegraphics[width=0.8\linewidth]{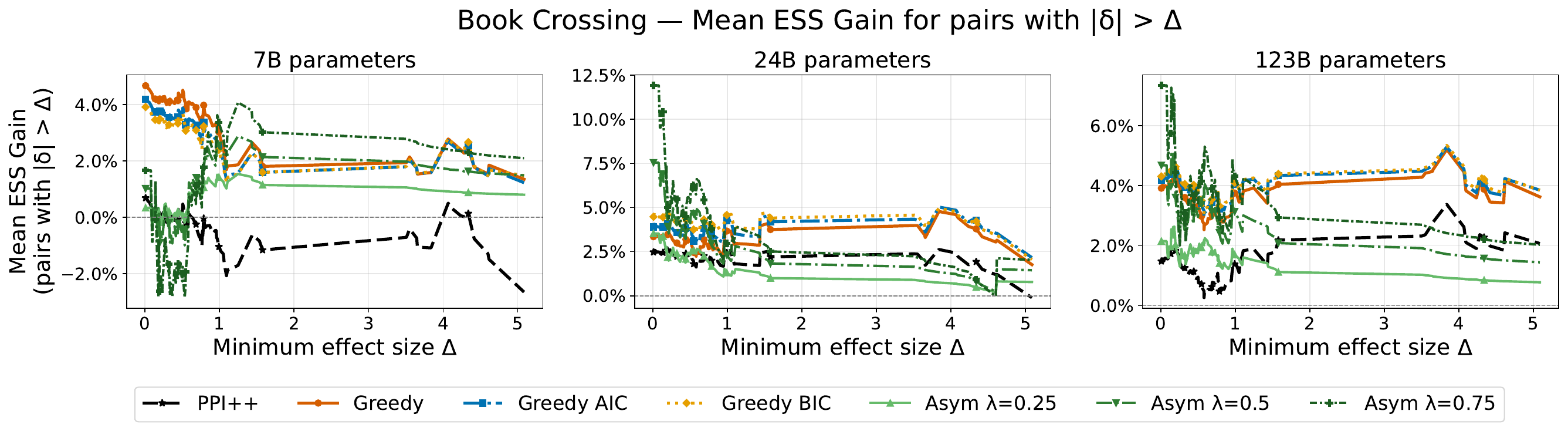}
    \caption{Comparison of the learning-augmented A/B testing approaches for Book-Crossing}
    
\end{figure}

\begin{figure}
    \centering
    \includegraphics[width=0.8\linewidth]{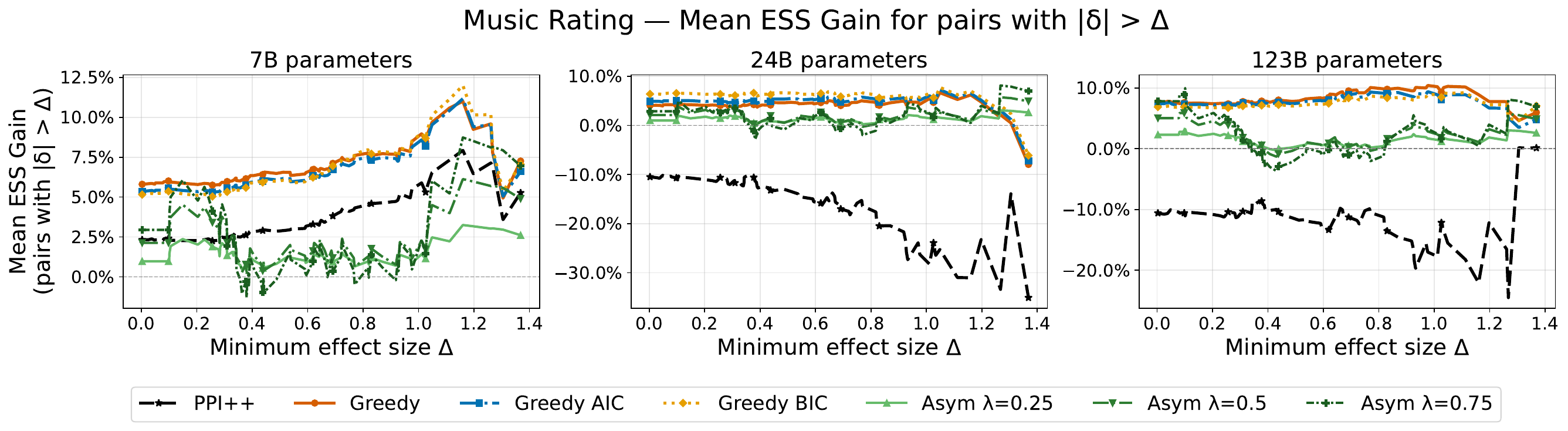}
    \caption{Comparison of the learning-augmented A/B testing approaches for Last.fm}
    
\end{figure}

Consistently with the results for Movielens and Jester datasets, we see that GPPI with greedy transformation selection has better performance than the other approaches. Observe that in some plots (i.e. specific dataset and model choice), PPI++ has negative ESS-gain, which means that the estimation of $\lambda$ is noisy on the labeled data which leads to degrading the quality of the predictions. On the other hand, GPPI with greedy transform selection always give positve ESS gain, and always better than the vanilla PPI++.

To better illustrate the improvement in ESS gains achieved by GPPI with greedy transformation selection compared to PPI++, we conduct A/B tests on all the pairs of items for every dataset, in the same setting described in the experiments section, then we plot the ESS gain of GPPI with greedy transform versus that of PPI++.  We also plot the identity line as a reference. Points above this line are those where GPPI with greedy selection is better than PPI++, and below below it are those where PPI++ has a better performance.

These figures below all illustrate again the power of our method, showing that it consistently and sometimes significantly outperforms PPI++.

\begin{figure}[h]
    \centering
    \includegraphics[width=0.9\linewidth]{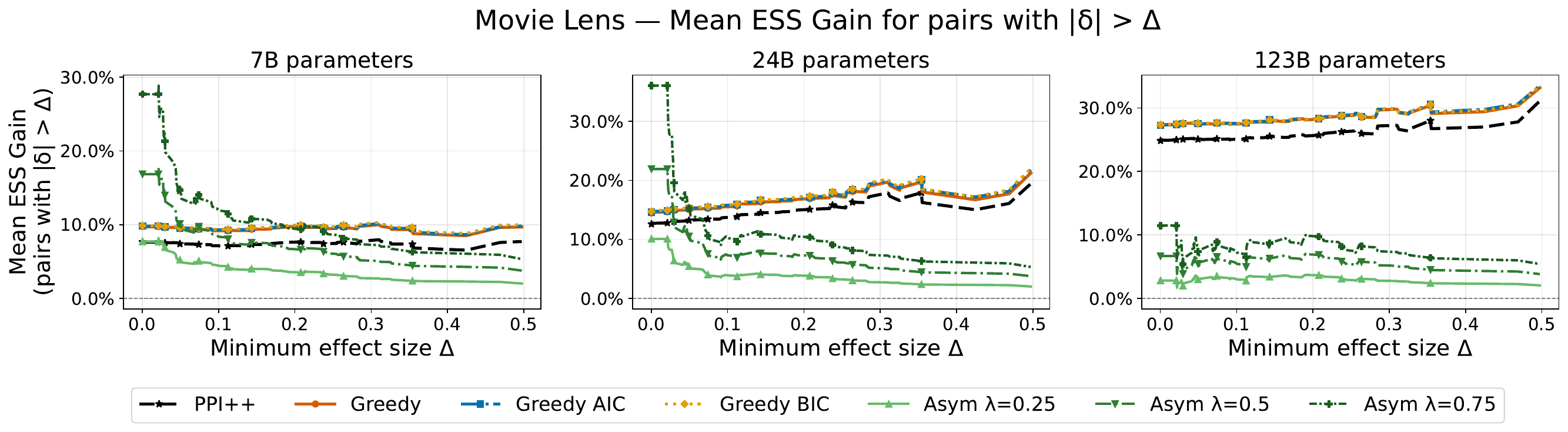}
    \caption{Comparison of the learning-augmented A/B testing approaches for MovieLens}
    \label{fig:ab_movielens}
\end{figure}

\begin{figure}[h]
    \centering
    \includegraphics[width=0.9\linewidth]{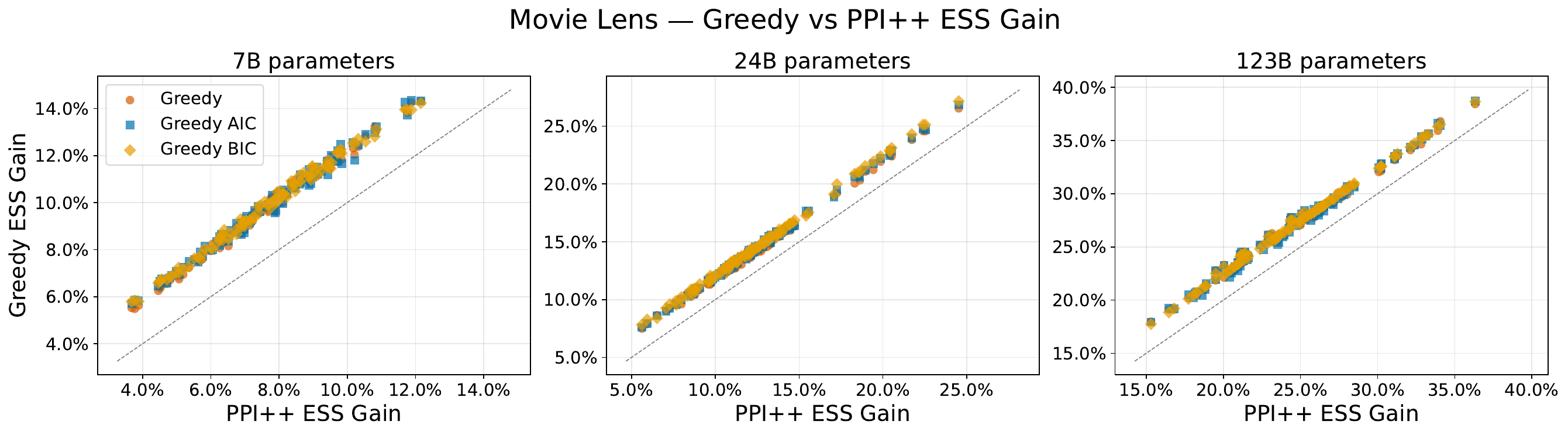}
    \caption{Movielens: comparison of GPPI with greedy selection versus PPI++}
    
\end{figure}

\begin{figure}[h]
    \centering
    \includegraphics[width=0.9\linewidth]{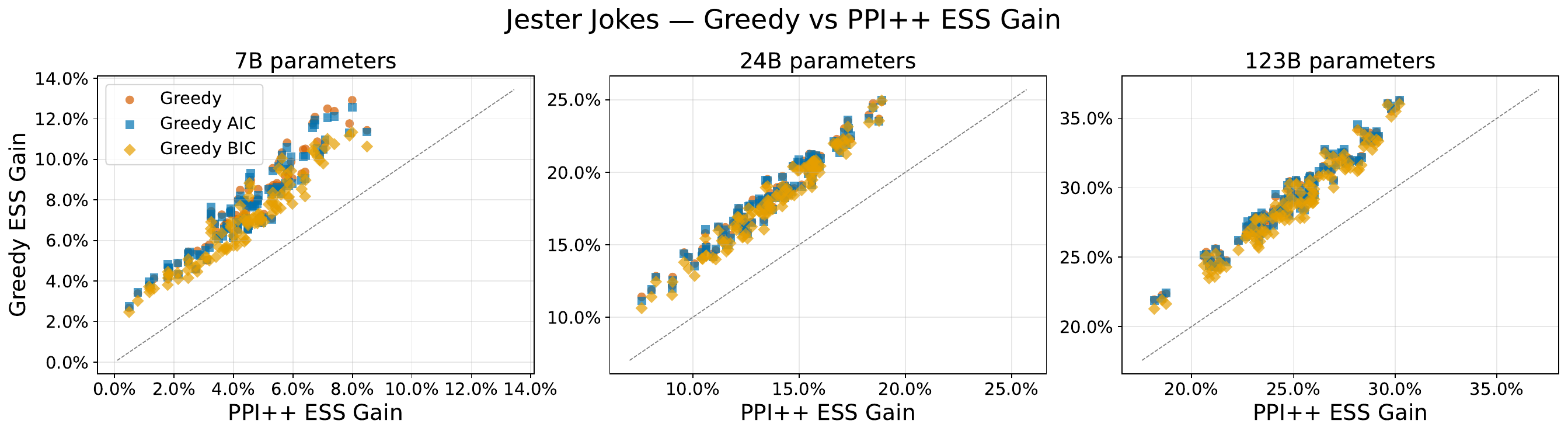}
    \caption{Jester: comparison of GPPI with greedy selection versus PPI++}
    
\end{figure}

\begin{figure}[h]
    \centering
    \includegraphics[width=0.9\linewidth]{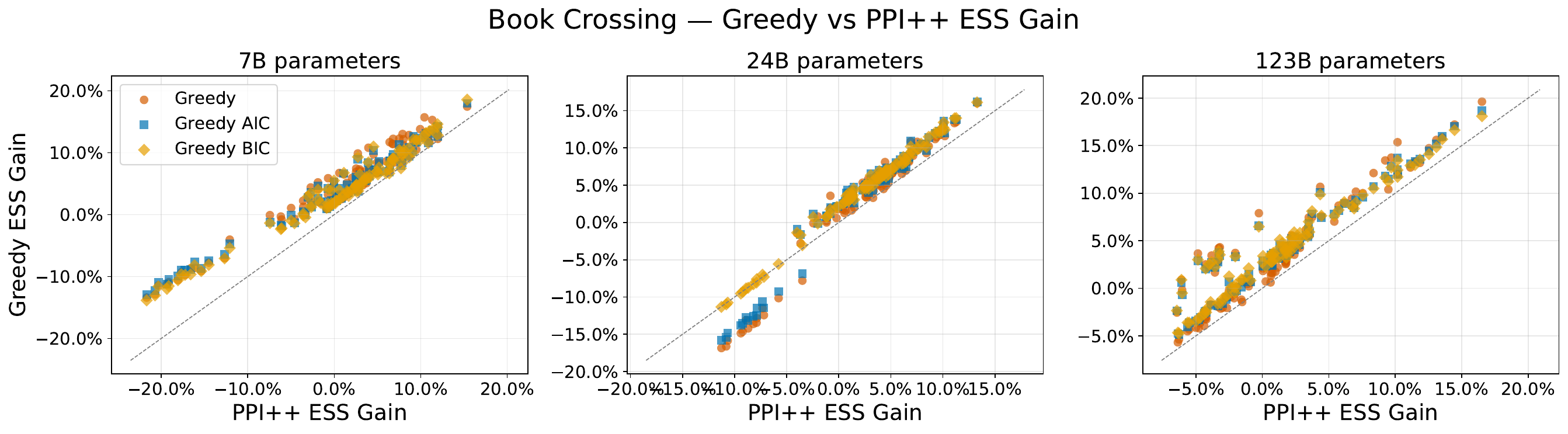}
    \caption{Book crossing: comparison of GPPI with greedy selection versus PPI++}
    
\end{figure}

\begin{figure}[h]
    \centering
    \includegraphics[width=0.9\linewidth]{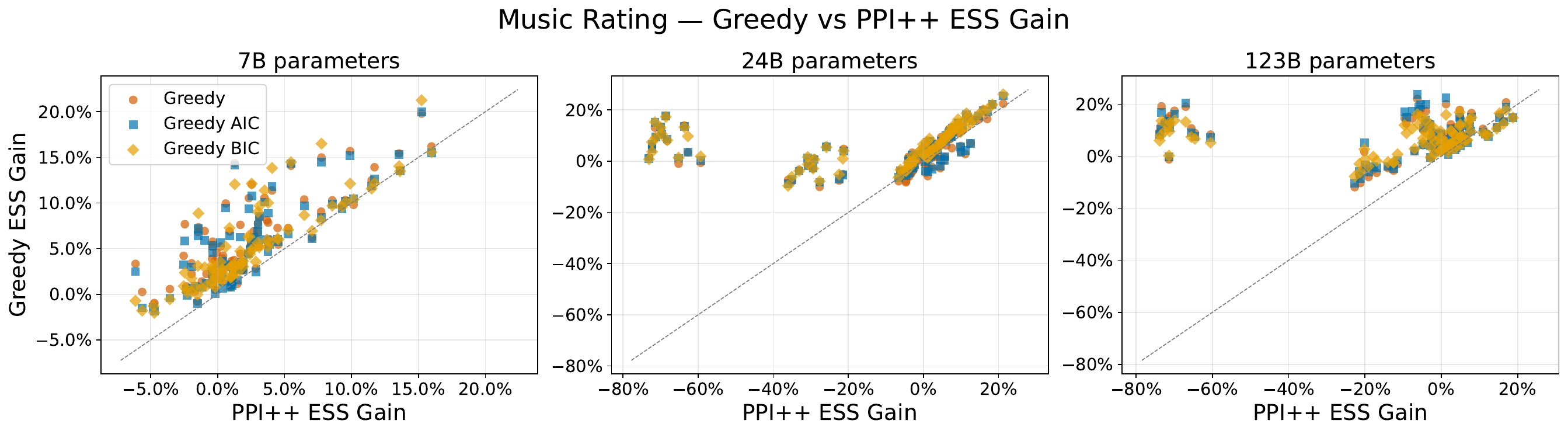}
    \caption{LFM music: comparison of GPPI with greedy selection versus PPI++}
    
\end{figure}

\subsection{Replication of \citep{angelopoulos2023ppi} experiments}

For completeness, we also replicate experiments from~\citep{angelopoulos2023ppi} on deforestation and galaxy classification datasets, where they illustrate how PPI++ reduces the confidence interval width compared to the classical estimator. Since this is a classification task, we use transformations different than before, of the form $\phi(p) = [\mathbf{1}(p > t_1), \ldots, \mathbf{1}(p > t_{d})]$, where $t_1, \ldots, t_d$ are evenly spaced in $(0,1)$. Greedy selection operates on a family of 8 transformations: identity and threshold transformations with $d \in \{5,10,15,20,25,30,35\}$. 

Figure~\ref{fig:ppi++-exp} demonstrates that our method achieves narrower confidence intervals. The coverage is slightly weaker than that of PP++, but still meets the nominal value of 90\% for this experiment for most values of $n$.

\begin{figure}[htbp]
    \centering
    \begin{minipage}{0.48\textwidth}
        \centering
        \includegraphics[width=\textwidth]{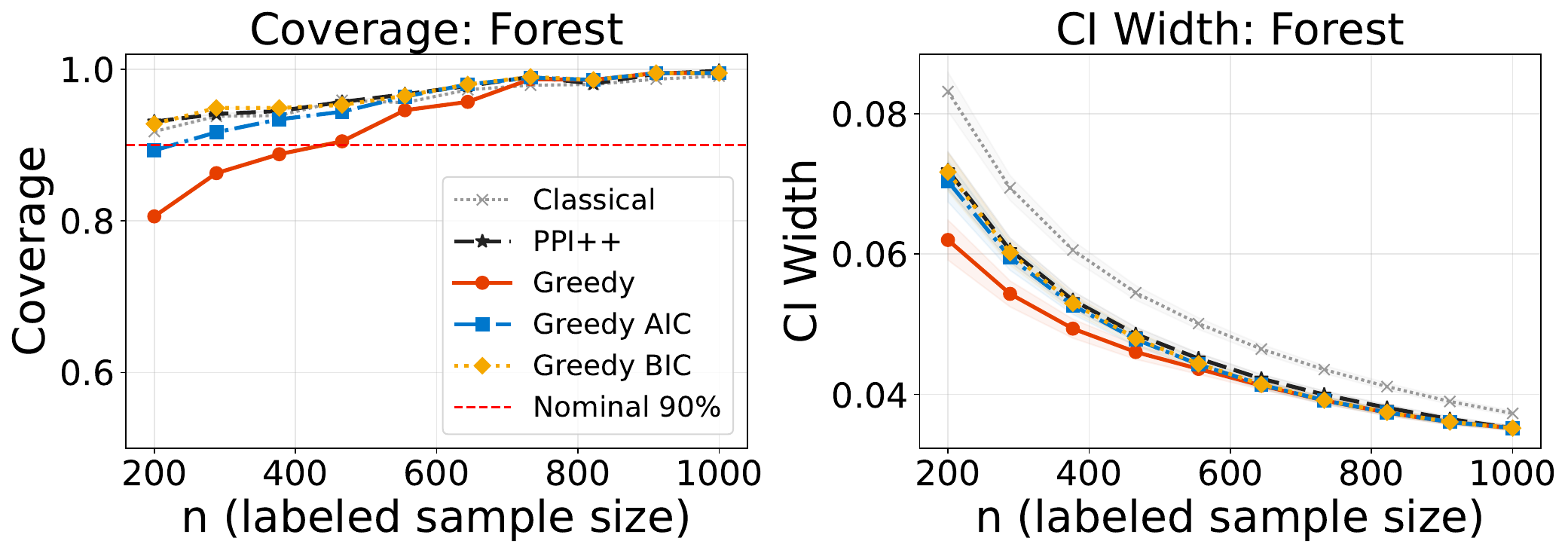}
    \end{minipage}
    \hfill
    \begin{minipage}{0.48\textwidth}
        \centering
        \includegraphics[width=\textwidth]{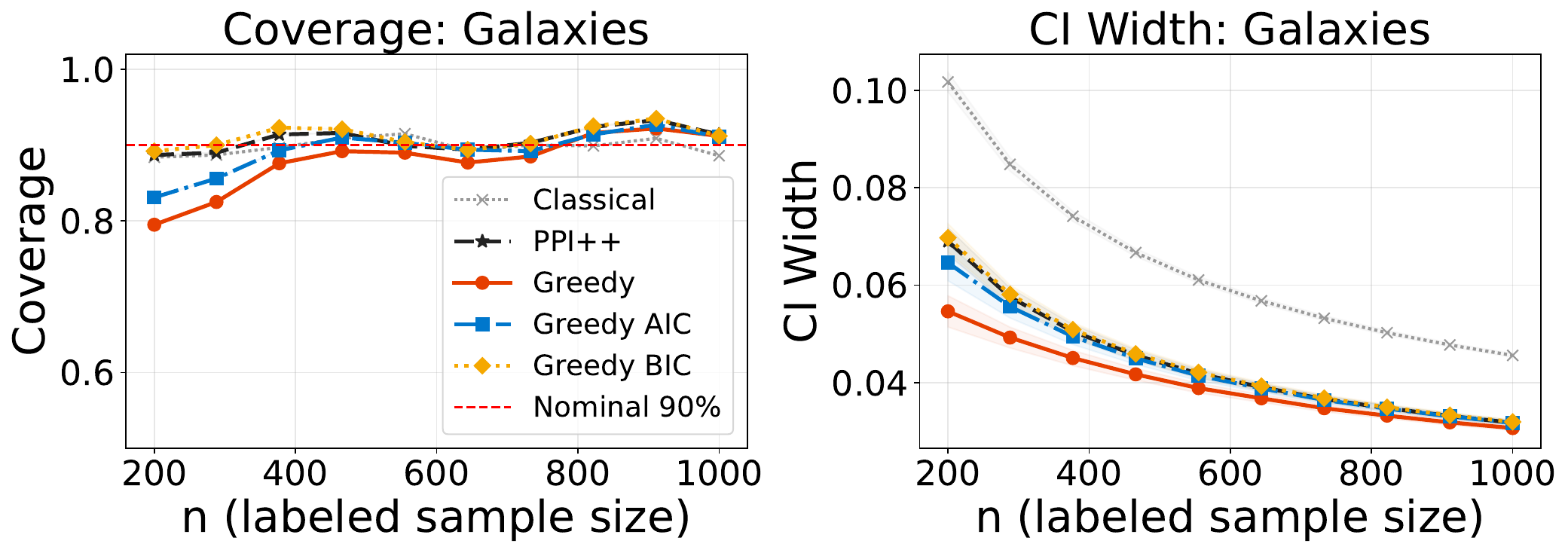}
    \end{minipage}
    \caption{Mean estimation results on deforestation (left) and galaxies (right) datasets showing coverage and confidence interval width.}\label{fig:ppi++-exp}
\end{figure}

\end{document}